\documentclass[accepted]{uai2024} 
                        

\usepackage[american]{babel}

\usepackage[utf8]{inputenc} 
\usepackage[T1]{fontenc}    
\usepackage{hyperref}       
\usepackage[round]{natbib}
\usepackage{url}            
\usepackage{booktabs}       
\usepackage{amsfonts}       
\usepackage{nicefrac}       
\usepackage{microtype}      
\usepackage{graphicx}
\usepackage{helvet}
\usepackage{courier}
\usepackage{amsmath}
\usepackage{amssymb}
\usepackage{tabularx}
\usepackage{mathtools}
\usepackage{mathabx}
\usepackage{varwidth}
\usepackage{bbm}
\usepackage{bm}
\usepackage{dsfont}
\usepackage{colortbl}
\usepackage[clock]{ifsym}
\usepackage{enumitem}
\usepackage{multicol} 
\usepackage{booktabs}  
\usepackage{algorithm}
\usepackage{algorithmic}
\usepackage{wrapfig}
\usepackage{amsthm}
\usepackage{multirow}
\usepackage{subcaption}
\usepackage{setspace}
\usepackage{pifont}
\usepackage{array}
\usepackage[capitalise]{cleveref}
\usepackage{comment}
\usepackage{thm-restate}
\usepackage[dvipsnames]{xcolor}

\newcolumntype{C}[1]{>{\centering\arraybackslash}m{#1}}
\newcolumntype{R}[1]{>{\raggedright\arraybackslash}m{#1}}

\newcommand{\indep}{\raisebox{0.08em}{\rotatebox[origin=c]{90}{$\models$}}}

\newcommand*{\QED}{\hfill\ensuremath{\blacksquare}}

\usepackage[font=small,skip=10pt]{caption}
\usepackage{setspace}
\DeclareCaptionFormat{myformat}{#1#2#3\hrulefill}

\Crefname{equation}{Eq.}{Eqs.}
\Crefname{figure}{Fig.}{Figs.}
\Crefname{tabular}{Tab.}{Tabs.}
\Crefname{theorem}{Thm.}{Thms.}
\Crefname{lemma}{Lem.}{Lems.}
\Crefname{proposition}{Prop.}{Props.}
\Crefname{definition}{Def.}{Defs.}
\Crefname{algorithm}{Alg.}{Algs.}
\Crefname{corollary}{Corol.}{Corol.}
\Crefname{section}{Sec.}{Sec.}

\newtheorem{lemma}{Lemma}
\newtheorem{theorem}{Theorem}

\newtheorem{proposition}{Proposition}

\newtheorem{definition}{Definition}

\theoremstyle{definition}
\newtheorem{example}{Example}

\newcommand{\I}{\mathbbm{1}}
\newcommand{\G}{\mathcal{G}}
\newcommand{\M}{\mathcal{M}}
\def\*#1{\mathbf{#1}}
\def\1#1{\mathcal{#1}}
\def\2#1{\mathscr{#1}}
\def\3#1{\mathbb{#1}}

\DeclareBoldMathCommand{\x}{x}
\DeclareBoldMathCommand{\X}{X}
\DeclareBoldMathCommand{\u}{u}
\DeclareBoldMathCommand{\U}{U}
\DeclareBoldMathCommand{\v}{v}
\DeclareBoldMathCommand{\V}{V}
\DeclareBoldMathCommand{\a}{a}
\DeclareBoldMathCommand{\A}{A}
\DeclareBoldMathCommand{\b}{b}
\DeclareBoldMathCommand{\B}{B}
\DeclareBoldMathCommand{\y}{y}
\DeclareBoldMathCommand{\Y}{Y}
\DeclareBoldMathCommand{\c}{c}
\DeclareBoldMathCommand{\C}{C}
\DeclareBoldMathCommand{\h}{h}
\DeclareBoldMathCommand{\H}{H}
\DeclareBoldMathCommand{\h}{h}
\DeclareBoldMathCommand{\z}{z}
\DeclareBoldMathCommand{\Z}{Z}
\DeclareBoldMathCommand{\w}{w}
\DeclareBoldMathCommand{\W}{W}
\DeclareBoldMathCommand{\s}{s}
\DeclareBoldMathCommand{\S}{S}
\DeclareBoldMathCommand{\r}{r}
\DeclareBoldMathCommand{\R}{R}
\DeclareBoldMathCommand{\t}{t}
\DeclareBoldMathCommand{\T}{T}
\DeclareBoldMathCommand{\d}{d}
\DeclareBoldMathCommand{\D}{D}

\newcommand{\Pa}{\mathit{Pa}}
\newcommand{\pa}{\boldsymbol{pa}}
\DeclareBoldMathCommand{\PA}{Pa}
\newcommand{\De}{\mathit{De}}
\newcommand{\An}{\mathit{An}}
\newcommand{\an}{\mathit{an}}
\newcommand{\de}{\mathit{de}}
\newcommand{\spo}{\mathit{sp}}
\newcommand{\Spo}{\mathit{Sp}}

\newcommand{\union}{%
  \text{\raisebox{2pt}{$\hspace{0.05cm}{\scriptstyle \bigcup}\hspace{0.05cm}$}}%
}
\newcommand{\inter}{%
  \text{\raisebox{2pt}{$\hspace{0.05cm}{\scriptstyle \bigcap}\hspace{0.05cm}$}}%
}

\usepackage{tikz}
\usetikzlibrary{arrows.meta}
\usetikzlibrary{shapes}
\usetikzlibrary{decorations.markings}
\tikzstyle{SCM}=[>={Stealth},
	every node/.style={inner sep=0.25em, transform shape},
	conf-path/.style={<->,dashed,
    	every edge/.append style={in=100,out=80}
    },
    circle-path/.style={draw \circle ->},
    removed/.style={opacity=0.2},
    cut/.style={color=Red!50},
    sel-node/.style={rectangle,inner sep=0.15em, draw, color=purple, fill=purple},
    sig-node/.style={rectangle,inner sep=0.3em, draw, color=Green}
]

\title{Towards Bounding Causal Effects under Markov Equivalence}

%
%
\author[ ]{\href{mailto:<abellot@google.com>?Subject=Your UAI 2024 paper}{Alexis Bellot}{}}
\affil[ ]{%
    Google DeepMind\\
    London, UK
}
  
\begin{document}
\maketitle

\begin{abstract}
    Predicting the effect of unseen interventions is a fundamental research question across the data sciences. It is well established that in general such questions cannot be answered definitively from observational data. This realization has fuelled a growing literature introducing various identifying assumptions, for example in the form of a causal diagram among relevant variables. In practice, this paradigm is still too rigid for many practical applications as it is generally not possible to confidently delineate the true causal diagram. In this paper, we consider the derivation of bounds on causal effects given only observational data. We propose to take as input a less informative structure known as a Partial Ancestral Graph, which represents a Markov equivalence class of causal diagrams and is learnable from data. In this more ``data-driven'' setting, we provide a systematic algorithm to derive bounds on causal effects that exploit the invariant properties of the equivalence class, and that can be computed analytically. We demonstrate our method with synthetic and real data examples.
\end{abstract}

\section{Introduction}
\label{sec:intro}

Causal relations are a prominent paradigm to describe our interactions with the world around us. We rely on them to make sense of notions of fairness, extrapolation, and safety in AI systems that play an increasingly important role in society. At the center of the notion of causality lies the idea of manipulation or intervention. A typical question in this context could be: ``What would happen to outcome $Y$ if $X$ were set to $x$?''. For example, a physician might be interested in how a biomarker $Y$ responds to a new dosage $x$ of drug $X$; or, an economist might wonder about the trajectory of economic indicators $Y$ under an interest rate hike $X=x$ in a given country $Z=z$. These questions all appeal to the same formal machinery, they aim at establishing facts about (conditional) \emph{causal effects}, \emph{e.g.}, written $P_{x}(y \mid z)$.

In general, it is impossible to infer the effect of interventions from data alone (without physically manipulating reality) as further domain knowledge or assumptions are typically needed to uniquely pin down causal effects. This motivates the study of a problem known as \emph{(partial) causal identification} \citep{pearl2009causality}. The idea is to combine data from an observational distribution $P(\V)$ with partial knowledge of the domain, articulated as a causal diagram, to bound a causal effect $P_{x}(y\mid z)$ within a tight interval. In other words, the problem is to infer a set of values that contains all effects implied by the causal models consistent with the data and assumptions. If the effect can be uniquely determined it is said to be point identified and the interval reduces to a single value. 

\begin{figure*}[t]
\centering
\hfill\null
\begin{subfigure}[t]{0.30\linewidth}\centering
  \begin{tikzpicture}[SCM,scale=1]
        \node (A) at (0,0) {$A$};
        \node (B) at (0,1) {$B$};
        \node (X) at (1.3,0) {$X$};
        \node (C) at (2.6,0) {$C$};
        \node (D) at (2.6,1) {$D$};

        \path [conf-path] (B) edge[out=0, in=100] (X);
        \path [->] (B) edge (X);
        \path [->] (A) edge (X);
        \path [->] (X) edge (D);
        \path [->] (X) edge (C);
        \path [->] (D) edge (C);
        \path [conf-path] (D) edge[out=-50, in=50] (C);
    \end{tikzpicture}
\caption{$\G$}
\label{fig:examples:a}
\end{subfigure}\hfill
\begin{subfigure}[t]{0.30\linewidth}\centering
  \begin{tikzpicture}[SCM,scale=1]
        \node (A) at (0,0) {$A$};
        \node (B) at (0,1) {$B$};
        \node (X) at (1.3,0) {$X$};
        \node (C) at (2.6,0) {$C$};
        \node (D) at (2.6,1) {$D$};

        \path [arrows = {Circle[fill=white]->}] (A) edge (X);
        \path [arrows = {Circle[fill=white]->}] (B) edge (X);
        \path [->] (X) edge node[above] {\scriptsize{$v$}} (D);
        \path [->] (X) edge node[above] {\scriptsize{$v$}} (C);
        \path [arrows = {Circle[fill=white]-Circle[fill=white]}] (C) edge (D);
    \end{tikzpicture}
\caption{$\1P$}
\label{fig:examples:b}
\end{subfigure}\hfill
\begin{subfigure}[t]{0.30\linewidth}\centering
  \begin{tikzpicture}[SCM,scale=1]
        \node (A) at (0,0) {$A$};
        \node (B) at (0,1) {$B$};
        \node (X) at (1.3,0) {$X$};
        \node (C) at (2.6,0) {$C$};
        \node (D) at (2.6,1) {$D$};

        \path [conf-path] (B) edge[out=0, in=100] (X);
        \path [conf-path] (A) edge[out=20, in=160] (X);
        \path [->] (X) edge node[above] {\scriptsize{$v$}} (D);
        \path [->] (X) edge node[above] {\scriptsize{$v$}} (C);
        \path [arrows = {Circle[fill=white]-Circle[fill=white]}] (C) edge (D);
    \end{tikzpicture}
\caption{$\1P_{\widetilde{X}}$}
\label{fig:examples:c}
\end{subfigure}
\hfill\null
  \caption{Examples of diagrams.}
  \label{fig:examples}
\end{figure*}

One of the foundational results in the literature is due to \cite{manski1990nonparametric,robins1989analysis}. The authors showed that causal effects could be bounded with observational data without making any assumptions on the structure or causal diagram of the underlying data generating mechanisms. This approach has since been generalized to bound causal effects under instrumental variable assumptions \citep{robins1989analysis}, and given more general causal diagrams \citep{zhang2019near,zhang2020designing}. In parallel, several authors have shown that with a sufficiently expressive parameterization of the underlying causal model, bounds can also be computed by making inference on model parameters, with recent proposals developing polynomial optimization programs \citep{balke1997bounds,hu2021generative,padh2022stochastic,li2022bounds} and Bayesian methods \citep{chickering1996clinician,zhang2021partial,finkelstein2020deriving}. Many of these recent works develop bounds under various assumptions about the structure and form of the underlying data generating mechanism. In practice, this formulation is often found too rigid for many practical applications as assumptions are hard to justify and test, sometimes even known to be unrealistic. A sensible concern is that forcing a single diagram or parametric model family may lead to false modeling assumptions and misleading inferences on causal effects.

In the spirit of designing more ``data-driven'' AI systems, one approach to circumvent the need for prior knowledge is to learn the causal diagram from data first, and then perform identification from there. The statistical constrains found in data (e.g. conditional independencies) can be leveraged to infer a class of Markov equivalent (ME) causal diagrams that is commonly represented as a Partial Ancestral Graph (PAG) \citep{richardson2002ancestral,spirtes2000causation,zhang2008completeness}. For example, the PAG $\1P$ in \Cref{fig:examples:b} encodes the ME class of causal diagrams that would be consistent with data generated according to the causal diagram $\G$ in \Cref{fig:examples:a}. In the PAG $\1P$ the directed edges encode ancestral relations, not necessarily direct, and the circle marks stand for structural uncertainty. Directed edges labeled with $v$ signify the absence of unmeasured confounders. Identification (determining whether causal effects may be uniquely computed) from PAGs is of increasing interest. Several recent techniques for the identification of causal effects have been developed \citep{zhang2008causal,zhang2008completeness,jaber2018causal,hyttinen2015calculus,perkovic2018complete,jaber2019identification,jaber2018graphical} including a calculus and complete algorithms \citep{jaber2019causal, jaber2022causal}. 

In this paper, we pursue a generalization of the partial identification task that consists of bounding causal effects with more restricted domain knowledge in the form of a class of ME causal diagrams (instead of a fully specified causal diagram).This notion is ``data driven'' in the sense that equivalence classes can, in principle, be inferred from observational data $P(\V)$ only, up to an assumption of faithfulness\footnote{In practice, an assumption of strong faithfulness is typically required for consistently recovering (asymptotically, without error) the True PAG from finite samples \citep{robins2003uniform,zhang2012strong}.}. Our main contributions is to show that the data entails constraints on the value of causal effects that can be exploited to derive tighter bounds than previously considered. More specifically, we summarize our contributions as follows.
\begin{itemize}[left=0cm]
    \item \textbf{Section~\ref{sec:partial-identification}}. We derive analytical expressions (closed-form, in terms of $P(\V)$) for lower and upper bounds on a causal effect of interest given observational data based on the structure of a PAG (\Cref{alg:partialid}). In particular, we show that the proposed bounds outperform the bounds due to \cite{manski1990nonparametric,robins1989analysis} in general (\Cref{prop:bounds_vs_natural}) and provide several examples. 
    \item \textbf{Section~\ref{sec:enumeration}}. We investigate enumeration strategies, \textit{i.e.} the strategy of listing ME causal diagrams and performing partial identification on each diagram separately using existing ``diagram-specific'' bounding techniques. We show that, in fact, a large portion of ME causal diagrams could be shown to be ``redundant'' for the purpose of bounding causal effects (\Cref{prop:expressiveness_leg,prop:expressiveness_causal_graphs_in_leg}). Despite this simplification, still, we conjecture that a large number of graphs (increasing with the number of nodes) must be considered in general (\Cref{prop:nonredundancy_causal_graphs_in_leg}), which suggests that enumeration strategies might be computationally intractable.
\end{itemize}

\subsection{Preliminaries}
\label{sec:preliminaries}

We use capital and small letters to denote random variables and their values respectively, \textit{e.g.} $X$ and $x$, and bold capital and small letters to denote sets of variables and their values, \textit{e.g.} $\X$ and $\x$. We use $P(\x)$ as an abbreviation for probability $P(\X = \x)$, and similarly for conditional probabilities. For sets of variables $\X,\Y,\Z$, conditional independence in $P$ is denoted $(\X\indep\Y\mid\Z)_P$ and $d$-separation\footnote{The criterion of $d$-separation follows \cite[Def.~1.2.3]{pearl2009causality}.} in a graph $\G$ is denoted $(\X\indep\Y\mid\Z)_\G$. 

The framework that underpins causal effects and diagrams rests on Structural Causal Models (SCMs) following \cite[Def. 7.1.1]{pearl2009causality}. A SCM $\M$ is a tuple $\langle \V, \U, {\cal F}, P(\U) \rangle$, where $\V$ is a set of endogenous (observed) variables, $\U$ is a set of exogenous latent variables, and $\1F=\{f_V\}_{V\in \V}$ is a set of functions such that $f_V$ determines values of $V$ taking as argument variables $\boldsymbol{Pa}_V \subseteq \V$ and $\U_V \subseteq \U$, i.e. $V \leftarrow f_{V}(\boldsymbol{Pa}_V, \U_V)$. Values of $\U$ are drawn from an exogenous distribution $P(\u)$. We assume the model to be recursive, i.e. that there are no cyclic dependencies among the variables. An intervention on a subset $\X\subset \V$, denoted by $do(\x)$, induces a sub-model $\M_{\x}$ in which $\X$ is set to constants $\x$, replacing the functions $\{f_{X}:X\in\X\}$ that would normally determine their values. The distribution of a set of variables $\Y$ in $\M_\x$ is denoted $P_\x(\Y)$. Domains of $\V$ are discrete and finite.

An SCM induces a causal diagram $\G$ over $\V$, where $V \rightarrow W$ if $V$ appears as an argument of $f_{W}$, and $V \dashleftarrow\dashrightarrow W$ if $\U_V \cap \U_W \neq \emptyset$, ($V$ and $W$ share an unobserved confounder). In a causal diagram, two nodes are said to be in the same $c$-component $\C \subseteq \V$ if and only if they are connected by a bi-directed path, \textit{i.e.}, a path composed entirely of edges ``$\dashleftarrow\dashrightarrow$''. For any set $\C \subseteq \V$, $Q[\C]:= P_{\v\backslash\c}(\c)$ denotes the post-interventional distribution of $\C$ under an intervention on $\V\backslash\C$. By definition $Q[\V] = P(\v)$ and by convention $Q[\emptyset] = 1$.


\section{Problem formulation}

In the setting of causal inference, we are interested in the following causal effect.

\begin{definition}[Causal effect]
    The causal effect from an intervention $do(\X=\x)$ on an outcome $\Y$ is defined by $P_{\x}(\y)$.
\end{definition}

The challenge is that we cannot immediately use this expression to estimate the causal effect as we only have access to the observational distribution $P$ but not the experimental distribution $P_{\x}$ that would define its value. In general, there might exist multiple SCMs $\M$ that entail the same data distribution $P(\V)$ that result in different values of the causal effect $P_{\x}(\y)$ (regardless of how many samples are collected). This motivates the problem of partial identification defined next.

\begin{definition}[Partial Identification]
    \label{def:partial_identification}
    The causal effect $P_{\x}(\y)$ is said to be partially identifiable from $P(\V)$ if it determines a bound $[a, b]$ for $P_{\x}(\y)$ that is strictly contained in $[0,1]$ and valid over all SCMs $\M$ that induce $P$.
\end{definition}

We now introduce the so-called \textit{natural bounds} (NB) due to \citet{manski1990nonparametric,robins1989analysis} that define a function of the observational data that consistently bounds $P_{\x}(\y)$, irrespective of the causal structure of the system.

\begin{definition}
    The natural bounds (NBs) for a causal effect $P_{\x}(\y)$ are given by,
    \begin{align}
    \label{eq:natural_bounds}
     P(\x,\y) \leq P_{\x}(\y)\leq P(\x,\y) + 1 - P(\x).
\end{align}
\end{definition}

In words, this result states that causal effect are naturally partially-identifiable. In particular, the NBs have been shown to be tight in several examples (in the sense that there exists two different models $\M^1,\M^2$ that entail $P(\V)$ and evaluate to the lower and upper NBs, respectively). One example is the query $P_b(x)$ given $\G$ in \Cref{fig:examples:a} for which the NBs are tight. 

For other queries that involve variables that are more ``separated'' in the underlying causal system, better bounds may be derived by exploiting the implications of ``separation'' on the entailed observational and interventional data distributions. For example, we would expect that if $(\Z \indep \Y)_P$ then $P_{\x,\z}(\y)=P_{\x}(\y)$ also and therefore the NBs could be improved. Statistical constraints of this type are an implication of the structure of the underlying causal system onto the observed data with distribution $P(\V)$. More generally, a $d$-separation between nodes in a causal diagram induces a corresponding conditional independence between variables in $\V$. The reverse implication, \textit{i.e.} that each conditional independence in data implies a corresponding $d$-separation in the underlying causal diagram, is known as faithfulness. In particular, for three sets of variables $\X,\Y,\Z$ with a distribution $P(\X,\Y,\Z)$ induced by a causal model with causal diagram $\G$, faithfulness asserts that,
\begin{align*}
    (\X\indep\Y\mid\Z)_P \Rightarrow (\X\indep\Y\mid\Z)_\G.
\end{align*}
This condition serves as a statistically testable constraint to narrow the class of compatible causal models \citep{pearl:88a,meek:1995,zhang2006causal}. In this paper, we ask whether tighter bounds could be inferred under the assumption of faithfulness.

\section{Equivalence Classes and Their Implications}
\label{sec:pag_notation}

One common graphical abstraction to represent sets of causal diagrams with the same $d$-separation and non-ancestral relations are so called Maximal Ancestral Graphs (MAGs). ``Ancestral'' due to the fact that MAGs does not contain directed cycles (directed paths that start and end at the same node) or almost directed cycles (directed paths $X \rightarrow \cdots \rightarrow Y$ such that $X \dashleftarrow\dashrightarrow Y$), and ``maximal'' due to the fact that every pair of nonadjacent nodes $\{X, Y\}$, there exists a set $\Z\subset \V$ that $d$-separates them. 

Two causal diagrams or MAGs are said to be Markov equivalent (ME) if they entail the same set of $d$-separations\footnote{The notion corresponding to $d$-separation in MAGs is called $m$-separation.}. A ME class of graphs can be summarized in a PAG that includes one additional edge tip $``\circ"$ that denotes undecidability, \textit{i.e.}, there are graphs in the equivalence class with both types of edge tips \citep{zhang2006causal,zhang2008causal}\footnote{Selection bias, typically represented with undirected edges \citep{zhang2008completeness} or extra variables is not considered in this paper.}. Directed edges $X \rightarrow Y$ in a MAG or PAG are said to be visible, denoted $X \xrightarrow{v} Y$, if  unobserved confounding can be ruled out. For example, the PAG in \Cref{fig:examples:b} encodes the ME class of the causal diagram in \Cref{fig:examples:a}. The output of the FCI algorithm is a PAG that can be recovered consistently under faithfulness \citep{spirtes2000causation,zhang2006causal,zhang2008causal}.

\textbf{Notation.} It will be useful to use standard graph-theoretic family abbreviations to represent graphical relationships in causal diagrams or equivalence classes. A path between $X$ and $Y$ is potentially directed (causal) from $X$ to $Y$ if there is no arrowhead on the path pointing towards $X$. $Y$ is called a possible descendant of $X$, \textit{i.e.}, $Y \in \texttt{PossDe}(X)$, and $X$ a possible ancestor of $Y$, \textit{i.e.}, $X \in \texttt{PossAn}(Y)$, if there is a potentially directed path from $X$ to $Y$. By stipulation, $X \in \texttt{PossAn}(X)$. A set $\X$ is ancestral if no node outside $\X$ is a possible ancestor of any node in $\X$. $X$ is called a possible parent of $Y$, \textit{i.e.}, $X \in \texttt{PossPa}(Y)$, and $Y$ a possible child of $X$, \textit{i.e.}, $Y \in \texttt{PossCh}(X)$, if they are adjacent and the edge is not into $X$. Further, $X$ is called a possible spouse of $Y$, \textit{i.e.}, $X \in \texttt{PossSp}(Y)$, if they are adjacent and the edge is not visible. For a set of nodes $\X$, we have $\texttt{PossPa}(\X) = \bigcup_{X\in\X} \texttt{PossPa}(X)$. 
If the edge marks on a path between $X$ and $Y$ are all circles, we call the path a circle path. We refer to the closure of nodes connected with circle paths as a bucket. For example, in \Cref{fig:examples:b} $\{C,D\}$ is a bucket.


The notion of $pc$-component, defined below, generalizes that of $c$-components to equivalence classes and will be important for the proposed approach.

\begin{definition}[$pc$-component \citep{jaber2018causal}]
    In a PAG, or any induced sub-graph thereof, two nodes are in the same possible $c$-component ($pc$-component) if there is a path between them such that all non-endpoint nodes along the path are colliders, and none of the edges is visible.
\end{definition}

In words, the $pc$-component of a set $\A$ includes all the nodes which could, in some causal diagram, be in the $c$-component of some node in $\A$. Following this definition, \emph{e.g.}, $A,B$ and $X$ in \Cref{fig:examples:b} are in the same $pc$-component since $X$ is a collider on the path between them and none of the edges are visible. By contrast, $A$ and $C$ are not in the same $pc$-component since there is a visible edge on all paths that connect them. 
Using these notions, a causal effect of the form $Q[\C]$ can be decomposed into a product of smaller quantities, as shown in \Cref{prop:decomposition} using the Region construct. 

\begin{definition}[Region \cite{jaber2019causal}]
    \label{def:region}
     Given a PAG $\1P$ over $\V$, and $\A \subseteq \C \subseteq \V$. Let the region of $\A$ with respect to $\C$ be the union of the buckets that contain nodes in the $pc$-component of $\A$ in the sub-graph $\1P_\C$.
\end{definition}

\begin{proposition}[Thm. 1 \citep{jaber2019causal}]
    \label{prop:decomposition}
    Given a PAG $\1P$ over $\V$ and $\A\subset\C\subseteq\V$, let the region of $\A$ with respect to $\C$ be denoted $\1R_\A$. $Q[\C]$ can be decomposed as,
    \begin{align}
        Q[\C] = Q[\1R_\A] \cdot Q[\1R_{\C\backslash\A}] \hspace{0.1cm}/\hspace{0.1cm} Q[\1R_\A\inter\1R_{\C\backslash\A}].
    \end{align}
\end{proposition}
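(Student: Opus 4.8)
The plan is to prove the identity inside an arbitrary member of the equivalence class and then lift it to the class as a whole. Fix any SCM $\M$ whose causal diagram $\G$ belongs to the ME class encoded by $\1P$. The sets $\1R_\A$, $\1R_{\C\backslash\A}$, and $\1R_\A\cap\1R_{\C\backslash\A}$ are determined entirely by $\1P$ (through $pc$-components and buckets), so they denote the same collection of variables in every such $\M$; it therefore suffices to verify the stated algebraic relation among the $Q$-factors of a single, arbitrary $\G$.

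I would combine two ingredients. The first is the standard $c$-component ($Q$-)factorization (due to Tian and Pearl): for any $\H\subseteq\V$ with $c$-components $\S_1,\dots,\S_m$ in the induced subgraph $\G_\H$, one has $Q[\H]=\prod_{j}Q[\S_j]$ with $Q[\S_j]=P_{\v\backslash\S_j}(\S_j)$. Let $\S_1,\dots,\S_k$ be the $c$-components of $\G_\C$, so that $Q[\C]=\prod_{i=1}^{k}Q[\S_i]$. The second ingredient is combinatorial. Suppose I can write $\1R_\A=\bigcup_{i\in I_A}\S_i$ and $\1R_{\C\backslash\A}=\bigcup_{i\in I_B}\S_i$ as unions of these same $c$-components with $I_A\cup I_B=\{1,\dots,k\}$. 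Since deleting nodes outside a union of complete $c$-components leaves the bidirected connectivity of each component intact, the $c$-components of $\G_{\1R_\A}$ are precisely $\{\S_i:i\in I_A\}$, and likewise for the other two sets; applying the factorization to each region then gives
\begin{align*}
\frac{Q[\1R_\A]\,Q[\1R_{\C\backslash\A}]}{Q[\1R_\A\cap\1R_{\C\backslash\A}]}
=\prod_{i=1}^{k}Q[\S_i]^{\,\I[i\in I_A]+\I[i\in I_B]-\I[i\in I_A\cap I_B]}
=\prod_{i\in I_A\cup I_B}Q[\S_i]=Q[\C],
\end{align*}
where I used the indicator identity $\I[i\in I_A]+\I[i\in I_B]-\I[i\in I_A\cap I_B]=\I[i\in I_A\cup I_B]$ and $Q[\1R_\A\cap\1R_{\C\backslash\A}]=\prod_{i\in I_A\cap I_B}Q[\S_i]$.

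Everything thus reduces to a purely structural claim: each of $\1R_\A$ and $\1R_{\C\backslash\A}$ is a union of complete $c$-components of $\G_\C$, and together they cover $\C$. Coverage is easy---every node of $\C$ lies trivially in its own $pc$-component and hence in its own bucket, so $\A\subseteq\1R_\A$ and $\C\backslash\A\subseteq\1R_{\C\backslash\A}$, while both regions are subsets of $\C$ by construction, giving $\1R_\A\cup\1R_{\C\backslash\A}=\C$. It also follows that the intersection is itself a union of $c$-components, $\1R_\A\cap\1R_{\C\backslash\A}=\bigcup_{i\in I_A\cap I_B}\S_i$, once the first part is known.

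The first part is the main obstacle, and is where the PAG-to-diagram translation must be handled carefully. Concretely, I must show that $\1R_\A$ never splits a $c$-component of $\G_\C$: if $V\in\1R_\A$ and $V$ is joined to $W$ by a bidirected path in $\G_\C$, then $W\in\1R_\A$ as well (and symmetrically for $\1R_{\C\backslash\A}$). The key is that a bidirected edge $V\dashleftarrow\dashrightarrow W$ appearing in some diagram $\G$ of the class must project in $\1P$ to a non-visible edge between $V$ and $W$---precisely the kind of edge along which the $pc$-component definition forms its collider chains---while the bucket closure absorbs the residual circle-mark ambiguity, since circle-connected nodes may be realized with such bidirected edges by some member of the class. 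Carrying this out amounts to showing that every bidirected path of $\G_\C$ projects to a non-visible collider path (or circle path) of $\1P_\C$ invariant across all members of the equivalence class, so that any $c$-component meeting the $pc$-component or bucket of $\A$ is entirely contained in $\1R_\A$. This invariance argument, rather than the factorization or the combinatorics, is the technical heart of the proof.
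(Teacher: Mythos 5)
You should first know that the paper itself contains no proof of this proposition: it is imported verbatim as Thm.~1 of \citet{jaber2019causal}, and the appendix ``proof'' is just a pointer to that reference. So your attempt has to be judged against the proof in that prior work. Your reduction is the natural one and is consistent with how the cited result is actually established: fix an arbitrary diagram $\G$ in the class, invoke Tian's $c$-component factorization $Q[\C]=\prod_i Q[\S_i]$, and note that the identity follows by pure bookkeeping once one knows that $\1R_\A$ and $\1R_{\C\backslash\A}$ are each unions of \emph{complete} $c$-components of $\G_\C$, in every diagram $\G$ of the class. The combinatorial and coverage steps you give are fine.

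The gap is that this structural claim---which you yourself flag as ``the technical heart''---is never proven, and the route you sketch for it would fail. A bidirected edge $V\dashleftarrow\dashrightarrow W$ of $\G$ does \emph{not} in general project to an edge with arrowheads at both ends in the MAG/PAG: whenever $V$ is also an ancestor of $W$ in $\G$, the projected edge is the directed (invisible) edge $V\rightarrow W$, so intermediate nodes of a bidirected path in $\G_\C$ can lose their collider status under projection, and the projected path need not be a collider path in $\1P_\C$ at all. The correct statement (nodes in the same $c$-component of $\G_\C$ are in the same $pc$-component of $\1P_\C$) is a separate result of \citet{jaber2018causal}, proven by induction using maximality and Markov-equivalence properties of MAGs, not by projecting the bidirected path. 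Moreover, even granting that pairwise lemma, closure of the region under $c$-component connectivity still does not follow: the ``same $pc$-component'' relation is not transitive, so the collider path witnessing that $V$ is in the $pc$-component of $\A$ and the path witnessing that $W$ is in the $pc$-component of $V$ may fail to concatenate at $V$ (the two marks meeting at $V$ need not both be arrowheads). Ruling this out is precisely where the bucket closure and PAG-specific invariance facts (invariance of unshielded colliders, soundness of the visibility criterion) must be deployed; you gesture at this but give no argument. As it stands you have reduced the proposition to an unproved, and genuinely non-obvious, structural lemma.
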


Identification of quantities $Q[\cdot]$ given an equivalence class $\1P$ uses a notion of (partial) topological order over the nodes in $\1P$. A partial topological order is defined on buckets rather than individual nodes therefore extending the notion used in single causal diagrams and is valid for all causal diagrams in the Markov equivalence class \cite[Lemma 1]{jaber2018causal}. For example, $A \prec B \prec X \prec \{C,D\}$, is a partial topological order over the buckets of $\1P$ in \Cref{fig:examples:b}. 

Conditional causal effects, of the form $P_\x(\y \mid \z)$, can be similarly be decomposed using the notion of $Q[\cdot]$ by the definition of conditional probability,
\begin{align}
    P_\x(\y \mid \z) = \sum_{\c\backslash\y}\left(Q[\C\union\Z] \hspace{0.1cm}/\hspace{0.1cm} \sum_{\c}Q[\C\union\Z]\right),
\end{align}
where $\C=\texttt{PossAn}(\Y\union\Z)_{\1P_{\V\backslash \X}}\backslash \X$. 

The decompositions of causal effects into $pc$-components and partial topological orders play a critical role in systematic identification algorithms and will be important in our work.

\section{Bounding Causal Effects}
\label{sec:partial-identification}
This section aims to consider the separations between variables encoded in a PAG and the decomposition of causal effects it implies to provide a systematic algorithm to bound causal effects. After getting familiar with these decompositions our next task is to introduce a new notion of partial identification for this setting. 


\begin{definition}[Partial Identification from a PAG]
    \label{def:partial_identification_pag}
    The causal effect $P_\x(\y)$ is said to be partially identifiable from a PAG $\1P$ and $P(\v)$ if they determine a bound $[a, b]$ for $P_\x(\y)$ that is strictly contained in $[0,1]$ and is valid for any SCM compatible with $\1P$.
\end{definition}

An SCM is said to be compatible or consistent with a PAG $\1P$ if it induces a causal diagram that can be represented with $\1P$. The following result shows that this notion of partial identification is driven by the constraints in the data distribution only, up to an assumption of faithfulness. 

\begin{proposition}
    \label{prop:duality}
    Let $\1P$ be the PAG underlying $P(\V)$. Under faithfulness, a causal effect is partially identifiable from $P(\V)$ with bound $[a,b]$ if and only if it is partially identifiable from $\1P$ and $P(\V)$ with bound $[a,b]$.
\end{proposition}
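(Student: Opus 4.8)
The plan is to prove the biconditional by exploiting the fact that, under faithfulness, the PAG $\1P$ is a deterministic function of $P(\V)$ and, conversely, encodes exactly the $d$-separation constraints that any compatible SCM must satisfy. The key conceptual point is that the set of SCMs inducing $P(\V)$ coincides (under faithfulness) with the set of SCMs compatible with $\1P$, so the infimum and supremum of $P_\x(\y)$ over these two families agree. I would therefore frame partial identifiability from $P(\V)$ (Definition~\ref{def:partial_identification}) and from $\1P$ together with $P(\V)$ (Definition~\ref{def:partial_identification_pag}) as constrained optimization problems over families of SCMs, and show these families are identical.

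First I would establish the forward direction, which is the easier inclusion. Suppose $P_\x(\y)$ is partially identifiable from $P(\V)$ with bound $[a,b]$, meaning every SCM $\M$ inducing $P(\V)$ satisfies $P_\x(\y)\in[a,b]$. Any SCM compatible with $\1P$ induces a causal diagram in the Markov equivalence class represented by $\1P$; since $\1P$ is the PAG underlying $P(\V)$, such a diagram entails the same $d$-separations as, and is consistent with, $P(\V)$. Hence every SCM compatible with $\1P$ and inducing $P(\V)$ is in particular an SCM inducing $P(\V)$, so the bound $[a,b]$ carries over. The validity and the strict containment in $[0,1]$ are preserved verbatim, giving partial identifiability from $\1P$ and $P(\V)$.

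The reverse direction is where faithfulness does the real work. Suppose $P_\x(\y)$ is partially identifiable from $\1P$ and $P(\V)$ with bound $[a,b]$. I want to show the same bound is valid over \emph{all} SCMs inducing $P(\V)$, without reference to $\1P$. Take any SCM $\M$ with induced causal diagram $\G$ that entails $P(\V)$. Under faithfulness, every conditional independence in $P(\V)$ corresponds to a $d$-separation in $\G$, so $\G$ belongs to the Markov equivalence class recovered from $P(\V)$, which is precisely the class represented by $\1P$. Consequently $\M$ is compatible with $\1P$, and the hypothesized bound applies, giving $P_\x(\y)\in[a,b]$. The crux is therefore the identification $\{\G : \G \text{ induces } P(\V) \text{ under faithfulness}\} = \{\G : \G \text{ represented by } \1P\}$, which is exactly the soundness and completeness of the FCI characterization cited in Section~\ref{sec:pag_notation}.

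The main obstacle I anticipate is handling the subtlety in the definitions regarding which distributions the competing SCMs are required to match. In Definition~\ref{def:partial_identification_pag} the SCM need only be ``compatible with $\1P$,'' whereas identifiability from $P(\V)$ quantifies over SCMs inducing \emph{exactly} $P(\V)$. I would need to argue carefully that conditioning on $P(\V)$ in both settings yields the same feasible set: an SCM compatible with $\1P$ that additionally induces $P(\V)$ is the relevant object, and faithfulness guarantees that inducing $P(\V)$ forces membership in the equivalence class. I expect the cleanest route is to make explicit that both optimization problems range over $\{\M : \M \text{ induces } P(\V)\}$ once faithfulness collapses the structural constraint of $\1P$ into the distributional constraint of $P(\V)$, after which the equality of bounds is immediate from the equality of the feasible sets.
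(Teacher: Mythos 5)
Your proposal is correct and follows essentially the same route as the paper's proof: both cast the two notions of partial identifiability as constrained optimizations over SCMs that induce $P(\V)$, observe the trivial inclusion $\{\M \in \3M(\1P) : P_\M(\v)=P(\v)\} \subseteq \{\M \in \3M : P_\M(\v)=P(\v)\}$, and use faithfulness (combined with the Markov condition) to show any SCM inducing $P(\V)$ has a causal diagram whose $d$-separations exactly match the conditional independencies of $P(\V)$, hence lies in the class represented by $\1P$. Your explicit handling of the subtlety that an SCM ``compatible with $\1P$'' must additionally induce $P(\V)$ matches the paper's implicit treatment.
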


In words, \Cref{prop:duality} relates the solution space of two classes of models, namely the set of models compatible with a distribution $P(\V)$ and the set of models compatible with $P(\V)$ and the true PAG $\1P$. It shows that the
partial identification status of a query is preserved across settings under an assumption of faithfulness.

Our next results will be concerned with proposing a concrete procedure to derive bounds for the partial identification problem in \Cref{def:partial_identification_pag}. The strategy involves bounding unidentifiable probabilities $Q[\S],\S\subset\C$ in terms of larger identifiable probabilities $Q[\C]$. These will then be introduced into existing identification algorithms from a PAG based on the decomposition in \Cref{prop:decomposition} to produce a systematic bounding algorithm.

\begin{proposition}[Lower bound]
    \label{prop:lowerbound}
    Given a PAG $\1P$, consider sets $\S \subset \C \subseteq \V$ and define $\W = \texttt{PossAn}(\S)_{\1P_\C}$, $\R = \W \backslash \S$, and $\T=\texttt{PossSp}(\S)_{\1P_\C}\backslash \S$. Let $\A,\B$ partition $\R$ such that $\B = \texttt{PossDe}(\T)_{\1P_\C}\inter \R, \A=\R\backslash\B$. $Q[\S]$ is lower bounded as follows:
    \begin{align}
        \label{eq:lowerbound}
        Q[\S] \geq  \max_\z \frac{Q[\W]}{\sum_{\s,\b} Q[\W]},
    \end{align}
    where $\Z = \texttt{PossPa}(\W)_{\1P} \backslash \texttt{PossPa}(\S)_{\1P}$.
\end{proposition}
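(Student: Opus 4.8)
The plan is to establish the inequality inside an arbitrary SCM $\M$ compatible with $\1P$, with both sides evaluated in that same model and for an arbitrary value $\z$; this per-model relation is exactly what the bounding algorithm needs once $Q[\W]$ is replaced by its identified (or separately bounded) value. Fix the ``context'' intervention that sets $\V\setminus\W$ to its values in $\v$. For each draw $\u$ write $\s^*(\u),\a^*(\u),\b^*(\u)$ for the values that $\S,\A,\B$ attain under this context, and $\tilde{\s}(\u)$ for the value of $\S$ under the larger intervention $P_{\v\setminus\s}$ that additionally fixes $\A=\a,\B=\b$. I would first simplify the denominator: because $\W=\texttt{PossAn}(\S)_{\1P_\C}$, a possible ancestor of an $\A$-node that lies in $\W$ is again a possible ancestor of $\S$, hence in $\R$, and were it a possible descendant of a spouse in $\T$ it would force that $\A$-node into $\B$; thus $\A$ is ancestral in $\1P_\W$, intervening on its descendants $\S\cup\B$ cannot alter the marginal of $\A$, and $\sum_{\s,\b}Q[\W]=Q[\A]$. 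The goal reduces to $Q[\S]\ge Q[\W]/Q[\A]$ for every $\z$.

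Expanding the $Q$-factors over $\u$ gives $Q[\W]/Q[\A]=\big(\sum_\u P(\u)\,\I[\s^*(\u)=\s,\a^*(\u)=\a,\b^*(\u)=\b]\big)\big/\big(\sum_\u P(\u)\,\I[\a^*(\u)=\a]\big)$. The decisive consistency observation is that on the event $\{\a^*(\u)=\a,\ \b^*(\u)=\b\}$ the variables $\A,\B$ already equal the values later imposed, so fixing them changes nothing downstream and $\s^*(\u)=\tilde{\s}(\u)$ there; discarding the factor $\I[\b^*=\b]\le 1$ yields $\I[\s^*=\s,\a^*=\a,\b^*=\b]\le \I[\tilde{\s}=\s]\,\I[\a^*=\a]$. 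Summing over $\u$ shows $Q[\W]/Q[\A]$ is at most the conditional ratio $\big(\sum_\u P(\u)\I[\tilde{\s}=\s]\I[\a^*=\a]\big)\big/\big(\sum_\u P(\u)\I[\a^*=\a]\big)$, i.e.\ the $\S$-response averaged over $\u$ \emph{given} the realization $\A=\a$.

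It remains to discharge this conditioning, which is the second purpose of the $\A/\B$ split. Since $P_{\v\setminus\s}$ fixes every parent of $\S$ outside $\S$, the response $\tilde{\s}(\u)$ is a function of $\u$ only through the exogenous terms entering $\S$ directly, whereas $\a^*(\u)$ is a function only of the exogenous terms feeding $\A$ and its ancestors, all of which lie in the ancestral set $\A$. A common exogenous term between these groups would be a latent confounder $S\dashleftarrow\dashrightarrow A'$ with $A'\in\A$, i.e.\ a spouse of $\S$ inside $\A$; but $\A$ excludes $\T$, each of whose members is a descendant of itself and hence lies in $\B$. Thus $\I[\tilde{\s}=\s]$ is independent of $\a^*$, the conditioning is vacuous, and $Q[\W]/Q[\A]\le \sum_\u P(\u)\I[\tilde{\s}=\s]=Q[\S]$. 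Finally, $\Z=\texttt{PossPa}(\W)_{\1P}\setminus\texttt{PossPa}(\S)_{\1P}$ influences $\S$ only through its children in $\R$, which $P_{\v\setminus\s}$ already fixes, so $Q[\S]$ is invariant to $\z$; the inequality therefore holds for every $\z$ and is tightened by the maximum, giving \Cref{eq:lowerbound}.

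The step I expect to be the main obstacle is promoting the two graphical guarantees --- that $\A$ is ancestral in $\1P_\W$, and that no latent confounder joins $\S$ and $\A$ --- from assertions about \emph{possible} ancestors, spouses, descendants and visibility in the PAG to assertions that hold \emph{simultaneously in every} diagram of the Markov equivalence class, and then checking that the mechanistic consistency and independence arguments remain intact despite the fact that $\W$ may strictly contain the true ancestors of $\S$ in any particular member of the class.
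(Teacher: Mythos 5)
Your consistency-and-counting argument is a legitimate SCM-level counterpart of the paper's do-calculus derivation (discarding $\I[\b^*{=}\b]\le 1$ plays the role of the paper's inequality $P(\u,\b\mid\a,\cdot)\le P(\u\mid\a,\cdot)$, your treatment of $\T\inter\A=\emptyset$ matches the paper's use of possible spouses, and your final step on $\z$ is the same). But the obstacle you flag in your closing paragraph is not a loose end to be tidied up; it is the point where your proof breaks, and it is exactly where the paper's proof does something structurally different. The paper never proves the inequality for every $\M\in\3M(\1P)$: it first invokes its utility lemma (\Cref{lemma:subset_scm}) to pass to the cut graph $\1P_{\widetilde{\V\backslash\S}}$, in which every directed edge into $\V\backslash\S$ is deleted (if visible) or replaced by a bi-directed edge (if invisible), so that in every compatible diagram the nodes of $\S$ have no descendants outside $\S$; the three $d$-separation facts and the derivation are then established only for diagrams compatible with that reduced graph, and the conclusion is transported back through the lemma. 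Your proposal instead aims at the per-model inequality over all of $\3M(\1P)$, and for that purpose your central structural claim --- that $\A$ is ancestral, i.e.\ that no node of $\S\cup\B$ can be an ancestor of an $\A$-node in any compatible diagram --- is false. Your justification already contains the slip: a possible ancestor (within $\W$) of an $\A$-node is a possible ancestor of $\S$ and hence lies in $\W$, but nothing places it in $\R$; it may lie in $\S$ itself.

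Here is a concrete failure. Let $\1P$ be the PAG with $E_1\,\circ\!\!\rightarrow E$ and $E_2\,\circ\!\!\rightarrow E$ ($E_1,E_2$ non-adjacent) together with the visible chain $E \xrightarrow{v} S \xrightarrow{v} M \xrightarrow{v} A' \xrightarrow{v} S'$ (this is the PAG of the corresponding DAG), and apply the statement with $\S=\{S,S'\}$ and $\C=\V$. Then $\W=\V$; neither $S$ nor $S'$ has an invisible adjacency, so $\T=\emptyset$, hence $\B=\emptyset$ and $\A=\{E_1,E_2,E,M,A'\}$. But visible edges are invariant across the class, so in every compatible diagram $S$ is an ancestor of $M,A'\in\A$. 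Consequently $\a^*(\u)$ depends on $\U_S$, your identity $\sum_{\s,\b}Q[\W]=Q[\A]$ fails, and the independence of $\I[\tilde\s{=}\s]$ and $\I[\a^*{=}\a]$ fails. Indeed the per-model inequality you set out to prove is violated here: taking the DAG itself as the true SCM, $Q[\S]=P(s\mid e)\,P(s'\mid a')$ while $Q[\W]/\sum_{\s,\b}Q[\W]=P(s,s'\mid e_1,e_2,e,m,a')=Q[\S]\cdot P(m\mid s)/P(m\mid e)$, and the mediator factor $P(m\mid s)/P(m\mid e)$ exceeds one for suitable parameters (e.g.\ $P(m{=}1\mid s{=}1)=0.9$, $P(m{=}1\mid s{=}0)=0.1$, $P(s{=}1\mid e)=1/2$). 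In the reduced family $\3M(\1P_{\widetilde{\V\backslash\S}})$ this configuration cannot occur ($S\rightarrow M$ and $M\rightarrow A'$ are cut, $M$ is disconnected, and the ratio collapses to $Q[\S]$), and there your consistency and independence arguments do go through. What your proposal is missing, therefore, is not graphical bookkeeping but the reduction lemma and the transfer step built on it, which carry the real weight of the paper's proof.
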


\begin{proposition}[Upper bound]
    \label{prop:upperbound}
    Given a PAG $\1P$, consider sets $\S \subset \C \subseteq \V$ and let a partial topological ordering of $\S$ be $\S_1 \prec \cdots \prec \S_k$. Define $\W = \texttt{PossAn}(\S)_{\1P_\C}$, $\R = \W \backslash \S$, and $\T=\texttt{PossSp}(\S)_{\1P_\C}\backslash \S$. Let $\A,\B$ partition $\R$ such that $\B = \texttt{PossDe}(\T)_{\1P_\C}\inter \R, \A=\R\backslash\B$. $Q[\S]$ is upper bounded as follows:
    \begin{align}
        \label{eq:upperbound}
        Q[\S] \leq& \min_{\z} \left\{\frac{Q[\W]}{\sum_{\s,\b} Q[\W]} - \sum_{\s_k}\frac{Q[\W]}{\sum_{\s,\b} Q[\W]} \right\} \nonumber\\
        &+ Q[\S\backslash \S_k],
    \end{align}
    where $\Z = \texttt{PossPa}(\W)_{\1P} \backslash \texttt{PossPa}(\S)_{\1P}$.
\end{proposition}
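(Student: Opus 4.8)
The plan is to obtain the upper bound directly from the lower bound of \Cref{prop:lowerbound} by exploiting the fact that $Q[\S]$ is a (sub-)normalized distribution, in exact analogy with how the natural upper bound in \Cref{eq:natural_bounds} is derived from its lower bound $P(\x,\y)$ together with $\sum_{\y'}P_\x(\y')=1$. The only genuinely new ingredient is to identify the correct normalizing total and the correct set of values to ``complement'' over, which is precisely where the partial topological order $\S_1\prec\cdots\prec\S_k$ is used.

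First I would establish the sink-marginalization identity $\sum_{\s_k}Q[\S]=Q[\S\backslash\S_k]$. Since $\S_k$ is the last bucket in a partial topological order that is valid simultaneously for every causal diagram in the equivalence class \cite[Lemma 1]{jaber2018causal}, no node of $\S\backslash\S_k$ is a possible descendant of $\S_k$ in $\1P_\C$; hence $\S\backslash\S_k$ is ancestral within $\S$ and summing $Q[\S]$ over the terminal bucket yields $Q[\S\backslash\S_k]$ uniformly across all compatible SCMs. This quantity plays the role that the constant $1$ plays in the natural bound.

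Next, for a fixed configuration $\s=(\s\backslash\s_k,\s_k)$ I would write the exact complementation
\begin{align*}
    Q[\S](\s\backslash\s_k,\s_k) = Q[\S\backslash\S_k](\s\backslash\s_k) - \sum_{\s_k'\neq\s_k}Q[\S](\s\backslash\s_k,\s_k').
\end{align*}
Each summand is a value of $Q[\S]$ at a configuration that differs only in the terminal bucket, so \Cref{prop:lowerbound} applies to it: for every choice of $\z$,
\begin{align*}
    Q[\S](\s\backslash\s_k,\s_k') \geq \frac{Q[\W]}{\sum_{\s,\b}Q[\W]}
\end{align*}
evaluated at that configuration. Substituting these lower bounds can only shrink the subtracted sum, so the expression becomes an upper bound; rewriting $\sum_{\s_k'\neq\s_k}(\cdot)=\sum_{\s_k}(\cdot)-(\cdot)\big|_{\s_k}$ recombines the pieces into $\tfrac{Q[\W]}{\sum_{\s,\b}Q[\W]}-\sum_{\s_k}\tfrac{Q[\W]}{\sum_{\s,\b}Q[\W]}+Q[\S\backslash\S_k]$. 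Because this inequality holds for every $\z$ and its left-hand side is free of $\z$, taking $\min_\z$ produces the tightest valid bound and reproduces exactly \Cref{eq:upperbound}.

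The main obstacle I anticipate is not the algebra but the bookkeeping needed to license the recombination: namely (i) verifying that $\S\backslash\S_k$ is genuinely ancestral within $\S$ in $\1P_\C$ so the sink-marginalization holds uniformly over the whole Markov equivalence class, which is what allows $Q[\S\backslash\S_k]$ to act as a fixed total, and (ii) checking that the auxiliary sets $\W,\R,\A,\B,\T$ and the index set $\Z$ are unchanged when only the value $\s_k$ of the terminal bucket is varied, so that a single instance of \Cref{prop:lowerbound} is valid at every configuration appearing in the complement sum. Once these invariances are confirmed, the direction of the inequality and the replacement of $\max_\z$ by $\min_\z$ follow automatically from the complementation step.
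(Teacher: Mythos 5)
Your proposal is correct, but it proves the result by a genuinely different route than the paper. The paper's proof is a direct derivation: it repeats the $\U$-decomposition from the lower-bound proof, adds and subtracts $P(\u,\b\mid\a,do(\v\backslash\w))$, uses the monotonicity $P(\s\mid\cdot)\leq P(\s\backslash\s_k\mid\cdot)$, and then needs two further do-calculus steps (rule 3 to insert $do(\s_k)$, rule 2 to turn conditioning on $\a$ into $do(\a)$), each licensed by a separate $d$-separation check, in order to recognize the leftover term as $Q[\S\backslash\S_k]-\sum_{\s_k}\tfrac{Q[\W]}{\sum_{\s,\b}Q[\W]}$. You instead reduce the upper bound to \Cref{prop:lowerbound} by complementation over the terminal bucket: the identity $\sum_{\s_k}Q[\S]=Q[\S\backslash\S_k]$ holds in every compatible SCM because the bucket partial order is valid across the whole equivalence class \cite[Lemma 1]{jaber2018causal}, so $\S\backslash\S_k$ is possible-ancestrally closed in the subgraph over $\S$ and the paper's own \Cref{lem:ancestor_id} (Tian-style ancestral marginalization) applies; then the per-$\z$ pointwise form of the lower bound (which is what \Cref{prop:lowerbound}'s proof actually establishes, the $\max_\z$ being taken only at the end) is substituted into each complementary configuration $(\s\backslash\s_k,\s_k')$, the fixed denominator $\sum_{\s,\b}Q[\W]$ lets the terms recombine, and $\min_\z$ is taken last since the left-hand side is $\z$-free and $Q[\S\backslash\S_k]$ does not depend on $\z$. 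Both routes give exactly \Cref{eq:upperbound}. What each buys: the paper's derivation is self-contained at the level of do-calculus and exhibits explicitly which graphical separations make the bound work; your argument is more modular and elementary — it needs no do-calculus beyond what is already inside \Cref{prop:lowerbound}, it inherits correctness from that proposition plus a single marginalization lemma, and it makes transparent the structural analogy you point out, namely that the upper bound is literally the complement of the lower bound with $Q[\S\backslash\S_k]$ playing the role that the constant $1$ plays in the natural bounds. The two bookkeeping conditions you flag are exactly the right ones, and both hold: the auxiliary sets $\W,\R,\A,\B,\T,\Z$ are functions of the graph and of the sets $\S,\C$ only, so they are untouched when only the value $\s_k$ varies, and the class-wide validity of the sink marginalization is precisely what \Cref{lem:ancestor_id} provides.
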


These results use graph theoretic notation to distinguish between qualitatively different relationships among variables in a PAG. The following example illustrates these results more concretely.

\begin{example}[Contrast with natural bounds] \label{ex:contrast_nb} Consider the evaluation of a query $P_b(x)$ given the distribution $P(x,b)$ that does not advertise any statistical independencies between $X$ and $B$. The corresponding PAG is given by $\{B \circ \hspace{-0.13cm}-\hspace{-0.13cm} \circ X\}$. With the notation of the propositions above, $P_b(x) = Q[\S], \S=\{X\}, \W = \{B,X\}, \B=\{B\}, \Z = \emptyset, \S_k=\S$, and therefore,  
\begin{align}
    \label{eq:tight_bound1}
    P(b,x) \leq P_b(x) \leq P(b,x) - P(b) + 1,
\end{align}
using the facts that $Q[\W] = P(b,x), \sum_\s Q[\W] = P(b)$. These expressions recover the NBs. With additional independencies, tighter bounds could be given by the proposed techniques. Continuing with this example, assume that in addition to $X,B$ we observe samples from variables $A,D,C$ whose conditional independencies are summarized by the PAG in \Cref{fig:examples:a}. Consider now the query $P_{b,d,c}(x,a)$. By inspecting the PAG we find that $\S=\{X,A\}, \W=\{A,B,X\}, \B=\{A,B\}, \S_k=\{X\}, \Z=\emptyset$, and,
\begin{align}
    P(b,x,a) \leq P_{b,d,c}(x,a) \leq P(b,x,a) - P(a,b) + P(a).
\end{align}
In contrast, the NBs return $P(b,x,a,c,d) \leq P_{b,d,c}(x,a) \leq P(b,x,a,c,d) - P(b,c,d) + 1$. We can verify that the proposed lower bound is tighter as
\begin{align}
    \label{eq:improvement_nb_lower}
    (\text{NB} =) \hspace{0.2cm} P(b,x,a,c,d) \leq P(b,x,a).
\end{align}
And, the upper bound is tighter as,
\begin{align}
    \label{eq:improvement_nb_upper}
    (\text{NB} =) \hspace{0.2cm} P(b&,x,a,c,d) - P(b,c,d) + 1 \nonumber\\
    &\stackrel{(1)}{\geq} \sum_{c,d}\Big\{P(b,x,a,c,d) - P(b,c,d)\Big\} + 1\nonumber \\
    &= P(b,x,a) - P(b) + 1\nonumber \\
    &= P(b,x,a) + \sum_a \Big\{P(a) - P(a,b)\Big\} \nonumber\\
    &\stackrel{(2)}{\geq} P(b,x,a) + P(a) - P(a,b).
\end{align}
The first and last inequalities (1,2) hold by noting that $P(b,x,a,c,d) - P(b,c,d) \leq 0$ and $P(a) - P(a,b) \geq 0$, respectively. \QED
\end{example}

We can see that in some cases these bounds coincide with the NBs, as in \Cref{eq:tight_bound1}, while in others they improve upon the NBs, as in \Cref{eq:improvement_nb_lower,eq:improvement_nb_upper}. The following result makes this claim more concrete.

\begin{proposition}
    \label{prop:bounds_vs_natural}
    Consider a query $P_\x(\y)$ and let $\1P$ be the PAG over $\{\X,\Y\}$ compatible with $P$. Then, under an assumption of faithfulness, the bounds given in \Cref{prop:lowerbound,prop:upperbound} are at least as tight as the natural bounds.
\end{proposition}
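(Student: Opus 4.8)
The plan is to instantiate \Cref{prop:lowerbound,prop:upperbound} for this restricted setting and compare each side against the matching natural bound, showing that the proposed lower bound can only move up and the proposed upper bound only down. Since $\1P$ is a PAG over $\{\X,\Y\}$ we have $\V=\X\union\Y$, so that $\C=\texttt{PossAn}(\Y)_{\1P_{\V\backslash\X}}\backslash\X=\Y$ and $P_\x(\y)=Q[\Y]$; the two propositions therefore apply with $\S=\Y$ (hence $\s=\y$) and the largest identifiable set $\C=\V$. The first fact I would record is that $\W=\texttt{PossAn}(\Y)_{\1P}$ is ancestral in every member of the equivalence class, so $Q[\W]$ is identifiable and equals the marginal $P(\w)=\sum_{\v\backslash\w}P(\v)$; writing $\w=(\a,\b,\y)$ with $\R=\A\union\B$, this also yields $\sum_{\s,\b}Q[\W]=P(\a)$ and $\V\backslash\W=\X\backslash\R$.

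For the lower bound the argument is direct and non-recursive. Evaluated at the query assignment, the ratio in \Cref{eq:lowerbound} is a conditional probability $Q[\W]/\sum_{\s,\b}Q[\W]=P(\a,\b,\y)/P(\a)=P(\b,\y\mid\a)$, and since $\max_\z$ only increases the right-hand side it suffices to bound this single value. I would then chain two elementary facts: $P(\b,\y\mid\a)\geq P(\a,\b,\y)=P(\w)$ because $P(\a)\leq 1$, and $P(\w)=\sum_{\v\backslash\w}P(\v)\geq P(\v)=P(\x,\y)$ because $\W\subseteq\V$, so that a marginal dominates the joint. Hence the proposed lower bound is at least the natural lower bound $P(\x,\y)$.

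For the upper bound I would mirror the manipulation in \Cref{ex:contrast_nb}. Starting from $P(\x,\y)+1-P(\x)=[P(\v)-P(\x)]+1$, I marginalize out the non-ancestors $\V\backslash\W=\X\backslash\R$: every summand $P(\v)-P(\x)$ is non-positive, so the single query term dominates the full sum and $P(\x,\y)+1-P(\x)\geq\sum_{\v\backslash\w}[P(\v)-P(\x)]+1=P(\w)-P(\r)+1$, using $\sum_{\v\backslash\w}P(\v)=P(\w)$ and $\sum_{\x\backslash\r}P(\x)=P(\r)=P(\a,\b)$. It then suffices to prove that the computed upper bound is at most $P(\w)-P(\r)+1$. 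Choosing any admissible $\z$ (legitimate since $\min_\z$ only lowers the bound) and noting that the numerator increment $Q[\W]-\sum_{\s_k}Q[\W]=P(\a,\b,\y)-P(\a,\b,\s\backslash\s_k)$ is non-positive while the denominator $\sum_{\s,\b}Q[\W]=P(\a)\leq 1$, the bracketed term of \Cref{eq:upperbound} is at most $P(\w)-P(\a,\b,\s\backslash\s_k)$, so the goal collapses to the single inequality $Q[\S\backslash\S_k]\leq P(\a,\b,\s\backslash\s_k)-P(\a,\b)+1$.

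This residual is itself a natural-bound statement for the smaller quantity $Q[\S\backslash\S_k]$, which I would discharge by induction on the number of buckets of $\S$ under the strengthened invariant that the computed upper bound on $Q[\S]$ is at most $P(\texttt{PossAn}(\S))-P(\texttt{PossAn}(\S)\backslash\S)+1$; the base case of a single bucket gives $Q[\emptyset]=1$ and reproduces the natural bound with equality, exactly as in \Cref{eq:tight_bound1}. I expect the main obstacle to be the inductive step, which (after substituting the inductive hypothesis) reduces to $P(\w_{\S\backslash\S_k})-P(\r_{\S\backslash\S_k})\leq P(\a,\b,\s\backslash\s_k)-P(\a,\b)$, with $\w_{\S\backslash\S_k}=\texttt{PossAn}(\S\backslash\S_k)$ and $\r_{\S\backslash\S_k}=\w_{\S\backslash\S_k}\backslash(\S\backslash\S_k)$. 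The two ingredients I would use are that the partial topological order over buckets forces $\r_{\S\backslash\S_k}\subseteq\R$ (the last bucket contributes no new possible ancestors), and that summing ``a marginal dominates the joint'' over the off-query values of $\S\backslash\S_k$ reproduces precisely the non-negativity argument labelled $(2)$ in \Cref{ex:contrast_nb}. The truly delicate part is this bookkeeping --- tracking how the possible-ancestor set and its complement evolve as buckets are peeled off, and verifying that the optimization over $\Z$ and the $1/P(\a)$ conditioning factors can only tighten the bound, so that they are safely neutralized by the convenient choices above.
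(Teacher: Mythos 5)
Your proposal is correct, and the lower-bound half (identify $Q[\W]=P(\w)$ via ancestrality, note the ratio only grows when divided by $\sum_{\s,\b}Q[\W]\leq 1$, then marginal-dominates-joint) is essentially the paper's argument verbatim. Where you genuinely diverge is the upper bound. The paper handles the residual term $Q[\S\backslash\S_k]$ by re-opening the causal machinery: it rewrites $\sum_{\s_k}Q[\W]$ and $Q[\S\backslash\S_k]$ as interventional quantities, expands them over the latent confounders $\U$ using the same graphical conditions and do-calculus steps as in the proofs of \Cref{prop:lowerbound,prop:upperbound}, and reduces everything to $P(\w)+\sum_{\u}\{P(\u)-P(\u,\r)\}$ before applying elementary bounds; this is a direct, one-pass argument with no recursion. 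You instead discharge the residual inequality $Q[\S\backslash\S_k]\leq P(\r,\s\backslash\s_k)-P(\r)+1$ purely probabilistically, by induction on the number of buckets: soundness of \Cref{prop:upperbound} (already established) gives $Q[\S\backslash\S_k]\leq \mathrm{UB}(\S\backslash\S_k)$, the inductive invariant bounds that by $P(\w')-P(\r')+1$, and the step closes with two elementary facts you correctly identify --- the last bucket in the partial topological order contributes no new possible ancestors (so $\R'\subseteq\R$), and summing joint-minus-marginal over the off-query values of $\R\backslash\R'$ is nonpositive. Both routes are valid; the paper's buys directness and avoids any induction, while yours buys two things: it requires no further causal/graphical reasoning once \Cref{prop:lowerbound,prop:upperbound} are granted, and your invariant automatically covers the \emph{recursively computed} bounds (where $Q[\S\backslash\S_k]$ is itself replaced by its upper bound, as \Cref{alg:partialid} actually does), a case the paper's one-shot derivation --- which keeps the true $Q[\S\backslash\S_k]$ in the expression --- does not explicitly address.
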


It is worth emphasizing that this result does not come for free. The assumption of faithfulness is critical: without it, no $d$-separation could be guaranteed, the compatible PAG would have to be fully connected (and non-informative), and consequently the proposed bounds would revert to the NBs in all cases.

We are now ready to describe a systematic algorithm for bounding arbitrary causal effects that exploits the new bounds. The procedure is given in \Cref{alg:partialid}. It extends the identification algorithm IDP \citep{jaber2019causal} with a call to the propositions above (line 12) whenever a component $Q[\cdot]$ is not uniquely identifiable.

\renewcommand{\algorithmicrequire}{\textbf{Input:}}
\renewcommand{\algorithmicensure}{\textbf{Output:}}

\begin{algorithm}[t]
    \caption{Partial IDP}
    \label{alg:partialid}
    \begin{algorithmic}[1]
    \REQUIRE A PAG $\1P$ and disjoint sets $\X,\Y\subset\V$
    \ENSURE Lower or upper bound expressions ($type$) for $P_{\x}(\y)$
    \STATE Let $\D := \texttt{PossAn}(\Y)_{\1P_{\V\backslash\X}}$
    \RETURN $\sum_{\d\backslash\y} \texttt{PID}(\D, \V, P, type)$\\\medskip
    \STATE \textbf{function} \texttt{PID}$(\C, \T, Q = Q[\T], type)$\\\smallskip
    \begin{ALC@g}
	\STATE if $\C = \emptyset$ then return 1.
	\STATE if $\C = \T$ then return $Q$.\\\smallskip
	/* In $\1P_\T$, let $\B$ denote a bucket, and let $\C_\B$ denote the $pc$-component of $\B$ */ \\\smallskip
	\IF{$\exists \B \subset \T \backslash \C$ such that $\C_\B \inter \texttt{PossCh}(\B)_{\1P_{\T}} \subseteq \B$}
		\STATE Compute $Q[\T \backslash \B]$ from $Q[\T]$ via \cite[Prop. 2]{jaber2018causal}.
		\RETURN $\texttt{PID}(\C,\T \backslash \B,Q[\T \backslash \B], type)$\smallskip
	\ELSIF{$\exists \B \subset \C$ such that $\1R_\B \neq \C$}
	    \IF{Lower bound desired}\medskip
	        \RETURN $\frac{\texttt{PID}(\1R_{\B},\T,Q, lower) \hspace{0.05cm}\cdot\hspace{0.05cm} \texttt{PID}(\1R_{\C \backslash \1R_\B} ,\T,Q, lower)}{\texttt{PID}(\1R_\B \inter\1R_{\C \backslash \1R_\B}, \T, Q, upper)}$
	    \ELSE
	        \RETURN $\frac{\texttt{PID}(\1R_{\B},\T,Q, upper)  \hspace{0.05cm}\cdot\hspace{0.05cm} \texttt{PID}(\1R_{\C \backslash \1R_\B} ,\T,Q, upper)}{\texttt{PID}(\1R_\B \inter\1R_{\C \backslash \1R_\B}, \T, Q, lower)}$
	    \ENDIF
	\ELSE
		\RETURN Lower or upper bounds (according to $type$) for $Q[\C]$ from $Q[\T]$ via \Cref{prop:lowerbound,prop:upperbound}.\smallskip
    \ENDIF
    \end{ALC@g}
    \end{algorithmic}
\end{algorithm}

\begin{proposition}
    \label{pro:soundness_alg} Partial IDP (\Cref{alg:partialid}) terminates and is sound.
\end{proposition}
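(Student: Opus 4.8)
My plan is to establish the two claims separately, handling termination first and then soundness by induction on the recursion tree of \texttt{PID}. For termination I would attach to each call $\texttt{PID}(\C,\T,\cdot)$ the lexicographic measure $(|\T|,|\C|)$ and verify that every recursive call strictly decreases it. In the first branch a nonempty bucket $\B\subset\T\backslash\C$ is removed and the call recurses on $(\C,\T\backslash\B)$ with $\C\subseteq\T\backslash\B$, so $|\T|$ strictly drops. In the decomposition branch $\T$ is unchanged while $\C$ is replaced by $\1R_\B$, $\1R_{\C\backslash\1R_\B}$ and their intersection, each a proper subset of $\C$ since the branch is entered only when $\1R_\B\neq\C$, so $|\C|$ strictly drops. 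The cases $\C=\emptyset$, $\C=\T$, and the final bounding branch return without recursion. As $(|\T|,|\C|)$ is well founded, every call halts, and the outer routine adds only a finite sum over $\d\backslash\y$; hence \Cref{alg:partialid} terminates.

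For soundness I would prove, by induction on $(|\T|,|\C|)$, the invariant that the lower-bound call of $\texttt{PID}(\C,\T,Q[\T])$ returns a quantity at most $Q[\C]$ and the upper-bound call returns a quantity at least $Q[\C]$, in every SCM compatible with $\1P$. The base cases are exact, returning $Q[\emptyset]=1$ and $Q[\T]$. In the first branch, $Q[\T\backslash\B]$ is obtained from $Q[\T]$ by the projection identity of \cite[Prop.~2]{jaber2018causal}, which holds uniformly across the equivalence class; the recursive call on $(\C,\T\backslash\B)$ then returns valid bounds for the same $Q[\C]$ by the induction hypothesis, so validity is inherited.

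The substantive step is the decomposition branch. Here I would invoke \Cref{prop:decomposition} to write $Q[\C]=Q[\1R_\B]\cdot Q[\1R_{\C\backslash\1R_\B}]\,/\,Q[\1R_\B\inter\1R_{\C\backslash\1R_\B}]$, where all three factors are non-negative and the denominator is strictly positive. Since $(a,b,c)\mapsto ab/c$ is non-decreasing in $a,b$ and non-increasing in $c$ on the positive orthant, feeding the recursively computed lower bounds of the two numerators together with the upper bound of the denominator (as the algorithm does when a lower bound is sought, and symmetrically for an upper bound) produces a valid bound on $Q[\C]$; the recursive bounds are valid by the induction hypothesis. In the final branch no bucket can be peeled and no region splits $\C$, so \Cref{prop:lowerbound,prop:upperbound} apply with their $\S$ set to $\C$ and ambient set $\T$, returning valid bounds on $Q[\C]$ in terms of $Q[\W]$ with $\W=\texttt{PossAn}(\C)_{\1P_\T}$; this $Q[\W]$ is identifiable from $Q[\T]$ by marginalization because $\W$ is ancestral in $\1P_\T$. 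This closes the induction. Finally, since $P_\x(\y)=\sum_{\d\backslash\y}Q[\D]$ with $\D=\texttt{PossAn}(\Y)_{\1P_{\V\backslash\X}}$ and summation over $\d\backslash\y$ preserves the direction of the inequalities, the outer routine returns valid bounds on the target effect.

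The step I expect to be hardest is not termination but the soundness of bound propagation. One must check that each exact identification step, namely bucket peeling and ancestral marginalization, is valid simultaneously for every diagram in the Markov equivalence class, which is where the correctness of IDP \citep{jaber2019causal} is invoked, and that the quotient in the decomposition branch composes the recursive bounds in the correct monotone directions. A secondary subtlety is verifying that the algorithm reaches the final bounding branch only in a regime where the anchor quantity $Q[\W]$ required by \Cref{prop:lowerbound,prop:upperbound} is genuinely available from the current $Q[\T]$; this amounts to showing that the guards on the first two branches exclude exactly the configurations in which $Q[\W]$ would fail to be identifiable.
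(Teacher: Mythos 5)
Your proof is correct and takes essentially the same route as the paper's: the paper disposes of the proposition in one line by citing the termination and soundness of IDP \citep{jaber2019causal} together with \Cref{prop:lowerbound,prop:upperbound}, which are exactly the ingredients your induction organizes (your monotone-composition argument for the quotient in the decomposition branch is the detail the paper leaves implicit in ``soundness of IDP''). The only difference is that the paper establishes termination via an explicit operation count (showing $\mathcal{O}(n^2)$ calls to \texttt{PID} and $\mathcal{O}(n^4)$ total time), whereas you use a well-founded lexicographic measure, which proves termination but yields no complexity bound.
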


The proof follows from the soundness of IDP \citep{jaber2019causal} and \Cref{prop:lowerbound,prop:upperbound}. A similar algorithm for partially identifying conditional causal effects could be derived by adapting CIDP (conditional IDP) due to \cite{jaber2022causal} with a call to the propositions above whenever a component is not uniquely identifiable. To get more familiar with the proposed procedure, we exemplify the various steps of Partial IDP in several additional scenarios.

\begin{example}[Steps of Partial IDP] \label{ex:steps} Consider the query $P_{x, w, z}(y)$ given $\1P$ in \Cref{fig:steps}. Notice that the effect is not immediately identifiable as the path $Z\rightarrow Y$ start with an invisible edge (that doesn't rule out unobserved confounding between $Z$ and $Y$ \cite[Thm. 3]{jaber2019causal}. The first step in lines 1 and 2 of \Cref{alg:partialid} involves identifying $Q[\D], \D=\texttt{PossAn}(Y)_{\1P_{\{S,Y\}}} = \{Y\}$ by running the \texttt{IDP} procedure in line 3. The first if condition, on line 6, is triggered as $\B=\{S\} \subset \T\backslash \C=\{W,X,Z,S\}$ satisfies that $\C_S\inter \texttt{PossCh}(S)_{\1P_\T} = \emptyset$ that is trivially included in $S$. Following \cite[Prop. 2]{jaber2018causal}, in line 7 we can therefore evaluate $Q[W,X,Z,Y]$ from $Q[\T] = P(w,x,z,y,s)$ that returns $Q[W,X,Z,Y]=P(w,x,y,z)$.  Next we consider applying \texttt{PID} with the set $\T=\{W,X,Z,Y\}$, finding that $\B=\{X,W\}$ triggers the if condition, as the intersection of $\C_{\{X,W\}} = \{W,X,Z\}$ and $\texttt{PossCh}(\{X,W\})_{\1P_{\{W,X,Z,Y\}}} = \{X,Y\}$ equals $\{W,X\}$ which is included in $\B$. This licenses the evaluation of $Q[Z,Y]$ from $Q[W,X,Z,Y]$ to obtain $Q[Z,Y]=P(y\mid z, x)P(z)$, further simplifying the set $\T=\{Z,Y\}$. The next call to \texttt{PID} reveals that none of the if conditions in lines 6 and 9 are triggered and we have to resort to the computation of bounds on $Q[Y]$ from $Q[Y,Z]$ in line 16. A call to \Cref{prop:lowerbound,prop:upperbound} then returns:
\begin{align}
    Q[Y] \geq Q[Y,Z] = P(y \mid z, x)P(z),
\end{align}
which implies $P_{x, w, z}(y) \geq P(y \mid z, x)P(z)$. For the upper bound,
\begin{align}
    Q[Y] \leq Q[Y,Z] - \sum_y Q[Y,Z] + 1,
\end{align}
which implies $P_{x, w, z}(y) \leq P(y \mid z, x)P(z) - P(z) + 1$.

We could show, moreover, that these bounds are tighter than the NBs as, for the lower bound,
\begin{align}
    (\text{NB} =) \hspace{0.2cm} P(y&,z,x,w) \nonumber\\
    &\leq P(y,z,x)\nonumber\\
    &= P(y\mid z,x)P(x \mid z)P(z)\nonumber\\
    &\stackrel{(1)}{\leq} P(y\mid z,x)P(z),
\end{align}
which is the proposed lower bound. (1) holds because $P(x \mid z)\leq 1$. For the upper bound,
\begin{align}
    (\text{NB} =) \hspace{0.2cm} P(y&,z,x,w) - P(z,x, w) + 1 \nonumber\\
    &\stackrel{(2)}{\geq} 1 + \sum_w \big\{P(y,z,x,w) - P(z,x, w)\big\} \nonumber\\
    &= 1 + P(x\mid z)\big\{P(y\mid z,x)P(z) - P(z)\big\}\nonumber\\
    &\stackrel{(3)}{\geq} P(y\mid z,x)P(z) - P(z) + 1,
\end{align}
which is the proposed upper bound. (2) holds because $P(y,z,x,w) - P(z,x, w) \leq 0$ and (3) holds because $P(x\mid z)\leq 1$ and the term in brackets $\{\cdot\} \leq 0$.\QED
\end{example}

\begin{figure}[t]
\captionsetup{skip=5pt}
\centering
    \begin{tikzpicture}[SCM,scale=1]
        \node (W) at (0,0) {$W$};
        \node (X) at (1.5,0) {$X$};
        \node (Y) at (3,0) {$Y$};
        \node (S) at (4.5,0) {$S$};
        \node (Z) at (2.25,1) {$Z$};
        
        \path [arrows = {Circle[fill=white]->}] (W) edge (X);
        \path [arrows = {Circle[fill=white]->}] (Z) edge (X);
        \path [->] (Z) edge (Y);
        \path [->] (X) edge node[above] {\scriptsize{$v$}} (Y);
        \path [->] (Y) edge node[above] {\scriptsize{$v$}} (S);
        \path [->] (Z) edge (Y);
    \end{tikzpicture}
\caption{PAG for \Cref{ex:steps}.}
\label{fig:steps}
\end{figure}

\begin{example}[Applications in biology] \label{ex:sachs} We illustrate next the inference that could be made for decision making in medicine and healthcare with a (publicly available) dataset from the literature.

We revisit the protein signalling study of \citep{sachs2005causal}. Signalling pathways regulate the activity of a cell. The ability to precisely predict the effect of perturbations in signalling pathways, \textit{e.g.}, by knocking out a gene that inactivates a protein in the network, on a phenotype of interest, such as cell growth, can have important applications for the treatment of disease. We consider computing bounds on the effect of \texttt{PKC} inactivation on the \texttt{RAF/MEK/ERK} pathway given the PAG in \Cref{fig:sachs:b} recovered from phosphorylation data (\textit{i.e.} markers of pathway activation)\footnote{We use a discretized version of the data following \citep{hartemink2000using} with levels: high (2), average (1), low (0) activation, downloaded from the \texttt{bnlearn} data repository \citep{scutari2009learning}.}. 

In this example, the query of interest is given by $P(\texttt{RAF}, \texttt{MEK}, \texttt{ERK} \mid do(\texttt{PKC}))$. In line 2 of \Cref{alg:partialid}, we may rewrite this quantity as $\sum_{\texttt{PKA}} Q[\texttt{RAF}, \texttt{MEK}, \texttt{ERK},\texttt{PKA}]$ where we have used the ancestral set of the outcome variables. A call to \texttt{PID} then triggers the if condition in line 9 in which the bucket $\B=\{\texttt{MEK}\}$ for which the region $\1R_{\B} = \{\texttt{MEK}\}\neq \{\texttt{RAF}, \texttt{MEK}, \texttt{ERK},\texttt{PKA}\}$. We may therefore decompose $Q[\texttt{RAF}, \texttt{MEK}, \texttt{ERK},\texttt{PKA}]$ into two terms $Q[\texttt{RAF}, \texttt{ERK},\texttt{PKA}]$ and $Q[ \texttt{MEK}]$ that may be considered separately. Among these expressions: $Q[ \texttt{MEK}]=P(\texttt{MEK} \mid \texttt{RAF})$ is identifiable but $Q[\texttt{RAF}, \texttt{ERK},\texttt{PKA}]$ isn't. Calling \texttt{IPD} on the second term we find, however, that the first if condition in line 6 is triggered: $\T=\V$ reduces to $\T=\{\texttt{RAF}, \texttt{ERK},\texttt{PKA}, \texttt{PKC}\}$ and 
\begin{align}
    Q[\texttt{RAF}&, \texttt{ERK},\texttt{PKA}, \texttt{PKC}]= \\
    &P(\texttt{RAF}, \texttt{ERK},\texttt{PKA}, \texttt{PKC}, \texttt{MEK}) / P(\texttt{MEK} \mid \texttt{RAF})\nonumber
\end{align}
Finally, we now proceed to bound $Q[\texttt{RAF}, \texttt{ERK},\texttt{PKA}]$ from $Q[\texttt{RAF}, \texttt{ERK},\texttt{PKA}, \texttt{PKC}]$ with \Cref{prop:lowerbound,prop:upperbound}. Following the notation of \Cref{prop:lowerbound,prop:upperbound}, in $\1P_{\{\texttt{RAF}, \texttt{ERK},\texttt{PKA}, \texttt{PKC}\}}$, $\W = \texttt{PossAn}(\S)=\{\texttt{RAF}, \texttt{ERK},\texttt{PKA}, \texttt{PKC}\}, \R=\B=\{\texttt{PKC}\}$, and $\Z=\emptyset$. It then follows that,
\begin{align}
    Q[\texttt{RAF}, \texttt{ERK},\texttt{PKA}] \geq Q[\texttt{RAF}, \texttt{ERK},\texttt{PKA}, \texttt{PKC}],
\end{align}
and that,
\begin{align}
    Q[\texttt{RAF}&, \texttt{ERK},\texttt{PKA}] \leq Q[\texttt{RAF}, \texttt{ERK},\texttt{PKA}, \texttt{PKC}] \\ 
    &-\sum_{\texttt{ERK}}Q[\texttt{RAF}, \texttt{ERK},\texttt{PKA}, \texttt{PKC}]  + Q[\texttt{RAF},\texttt{PKA}].\nonumber
\end{align}
$Q[\texttt{RAF},\texttt{PKA}]$ on the r.h.s. could be further upper-bounded from $Q[\texttt{RAF},\texttt{PKA},\texttt{PKC}]$ with a similar strategy; in particular giving $Q[\texttt{RAF},\texttt{PKA}] \leq Q[\texttt{RAF},\texttt{PKA},\texttt{PKC}] + 1 - Q[\texttt{PKC}] = P(\texttt{RAF},\texttt{PKA},\texttt{PKC}) + 1 - P(\texttt{PKC})$.

\begin{figure}[t]
\centering
\begin{subfigure}[t]{0.45\linewidth}\centering
  \begin{tikzpicture}[SCM,scale=1]
        \node (PKA) at (1,0) {\texttt{PKA}};
        \node (RAF) at (0,-1) {\texttt{RAF}};
        \node (PKC) at (-0.2,0) {\texttt{PKC}};
        \node (MEK) at (1.5,-1) {\texttt{MEK}};
        \node (ERK) at (3,-1) {\texttt{ERK}};
        \node (AKT) at (2.5,0) {\texttt{AKT}};

        \path [->] (PKC) edge (RAF);
        \path [->] (RAF) edge (MEK);
        \path [->] (MEK) edge (ERK);
        \path [->] (PKA) edge (ERK);
        \path [->] (PKA) edge (AKT);
        \path [->] (PKA) edge (RAF);
    \end{tikzpicture}
\caption{}
\label{fig:sachs:a}
\end{subfigure}\hfill
\begin{subfigure}[t]{0.45\linewidth}\centering
  \begin{tikzpicture}[SCM,scale=1]
        \node (PKA) at (1,0) {\texttt{PKA}};
        \node (RAF) at (0,-1) {\texttt{RAF}};
        \node (PKC) at (-0.2,0) {\texttt{PKC}};
        \node (MEK) at (1.5,-1) {\texttt{MEK}};
        \node (ERK) at (3,-1) {\texttt{ERK}};
        \node (AKT) at (2.5,0) {\texttt{AKT}};

        \path [arrows = {Circle[fill=white]->}] (PKC) edge (RAF);
        \path [->] (RAF) edge node[above] {\scriptsize{$v$}} (MEK);
        \path [->] (MEK) edge node[above] {\scriptsize{$v$}} (ERK);
        \path [arrows = {Circle[fill=white]->}] (PKA) edge (ERK);
        \path [arrows = {Circle[fill=white]->}] (PKA) edge (AKT);
        \path [arrows = {Circle[fill=white]->}] (PKA) edge (RAF);
    \end{tikzpicture}
\caption{}
\label{fig:sachs:b}
\end{subfigure}
\null
  \caption{(\subref{fig:sachs:a}) Protein signalling network \cite[Fig. 2]{sachs2005causal}, (\subref{fig:sachs:b}) corresponding PAG for \Cref{ex:sachs}.}
  \label{fig:sachs}
\end{figure}

Combining these expressions, we could use the observed data to estimate the conditionals and infer the probabilities of high and low pathway activation after knocking out (intervening on) \texttt{PKC}:
\begin{align}
    P(\texttt{RAF}=0, \texttt{MEK}=0, &\texttt{ERK}=0 \mid do(\texttt{PKC}=0)) \\
    &\in [0.0214, 0.0864],\nonumber\\
    P(\texttt{RAF}=2, \texttt{MEK}=2,&\texttt{ERK}=2 \mid do(\texttt{PKC}=0)) \\
    &\in [0.1120, 0.3115],\nonumber
\end{align}
respectively. In turn, by relying on the current consensus biological network \cite[Fig. 2]{sachs2005causal}, given in \Cref{fig:sachs:a}, the causal effects would be point estimated to be 0.0441 and 0.1861 respectively. Without committing to a particular causal diagram, the inferred bounds would be the more cautious approximation of causal effects. \QED
\end{example}

We provide additional worked examples to illustrate the proposed bounding technique in \Cref{sec:app_experiments}.

\textbf{Remark} (Statistical uncertainty). The bounds computed in \Cref{ex:sachs} do not account for statistical uncertainty (both in the discovery of the PAG and in the approximation of bounds). In practice, an assumption of strong faithfulness is typically required for consistently recovering the True PAG from finite samples \citep{robins2003uniform,zhang2012strong}. A more careful analysis would be required to make actionable causal claims. 


For some queries, we could show that the bounds returned by Partial IDP in \Cref{alg:partialid} are tight: one example is the first query in \Cref{ex:contrast_nb} (\Cref{eq:tight_bound1}). In general, however, analytical bounds returned for arbitrary queries and equivalence classes are not known to be tight. Determining whether this is the case is an important research direction. 

In the next section, to investigate the derivation of tighter bounds for arbitrary queries, we switch gears and consider a different approach to bounding causal effects in equivalence classes, namely enumeration strategies.
\begin{figure*}[t]
\centering
\hfill
\begin{subfigure}[t]{0.25\linewidth}\centering
    \begin{tikzpicture}[SCM,scale=1]
        \node (V1) at (0,0) {$X$};
        \node (V2) at (1,0) {$Y$};
        \node (V3) at (2,0) {$Z$};

        \path [arrows = {Circle[fill=white]->}] (V1) edge (V2);
        \path [arrows = {Circle[fill=white]->}] (V3) edge (V2);
    \end{tikzpicture}
\caption{PAG}
\label{fig:pag}
\end{subfigure}\hfill
\begin{subfigure}[t]{0.25\linewidth}\centering
    \begin{tikzpicture}[SCM,scale=1]
        \node (V1) at (0,0) {$X$};
        \node (V2) at (1,0) {$Y$};
        \node (V3) at (2,0) {$Z$};

        \path [conf-path] (V1) edge [out=45,in=135] (V2);
        \path [<-] (V2) edge (V3);
    \end{tikzpicture}
\caption{MAG}
\label{fig:mag}
\end{subfigure}\hfill
\begin{subfigure}[t]{0.25\linewidth}\centering
  \begin{tikzpicture}[SCM,scale=1]
        \node (V1) at (0,0) {$X$};
        \node (V2) at (1,0) {$Y$};
        \node (V3) at (2,0) {$Z$};

        \path [->] (V1) edge (V2);
        \path [<-] (V2) edge (V3);
    \end{tikzpicture}
\caption{LEG}
\label{fig:leg}
\end{subfigure}\hfill
\begin{subfigure}[t]{0.25\linewidth}\centering
  \begin{tikzpicture}[SCM,scale=1]
        \node (V1) at (0,0) {$X$};
        \node (V2) at (1,0) {$Y$};
        \node (V3) at (2,0) {$Z$};

        \path [->] (V1) edge (V2);
        \path [->] (V3) edge (V2);
        \path [conf-path] (V1) edge [out=45,in=135] (V2);
        \path [conf-path] (V2) edge [out=45,in=135] (V3);
    \end{tikzpicture}
\caption{MBD}
\label{fig:mbd}
\end{subfigure}\hfill
\hfill\null
  \caption{Diagrams used in \Cref{sec:enumeration}.}
  \label{fig:mag_leg}
\end{figure*}

\section{The difficulty of enumerating causal diagrams from a PAG}
\label{sec:enumeration}
The techniques explored so far overlook the potential for enumerating (relevant) ME causal diagrams and subsequently applying existing (partial) identification techniques given each diagram separately (that we refer to as ``enumeration strategies''). Enumerating all ME causal diagrams is exponentially costly, and intractable in general even with an assumption of no unobserved confounding, \textit{i.e.} in the space of directed acyclic graphs as shown by \cite{wienobst2023efficient}. However, a number of observations can be made to avoid enumerating all ME causal diagrams which reduces the search space to a (potentially tractable) \emph{subset} of ``relevant'' ME causal diagrams without loss of generality. 

This section explores the definition of sets of ME causal diagrams $\1K \subset \1P$ with the distinctiveness of being equally expressive in the sense that,
\begin{align}
    \label{eq:expressiveness}
    \Big\{P_{\x}(\y; \M)&: \M \in \3M(\1P)\Big\} =\nonumber\\
    &\Big\{P_{\x}(\y; \M): \M \in \3M(\1K)\Big\}.
\end{align}
Let $\3M(\1P)$ denote the set of SCMs compatible with $\1P$, that is the set of SCMs that induce causal diagrams contained in the PAG abstraction $\1P$. Under \Cref{eq:expressiveness}, minimum and maximum values of causal effects remain unchanged, and one may exploit $\1K$ instead of $\1P$ for inference in practice. The hope is that if the set of causal diagrams $\1K$ is small enough then we might be able to apply existing partial identification algorithms on every causal diagram $\G\in\1K$ efficiently.

We start by introducing the notion of a Loyal Equivalent Graph (LEG) (\Cref{def:LEG}), from \cite[Prop. 2]{zhang2012transformational}, that are sets of ME MAGs that retain ``expressiveness'' in the sense of \Cref{eq:expressiveness}. This result is given in \Cref{prop:expressiveness_leg}.

\begin{definition}
    \label{def:LEG}
    Given a MAG $\G$, there exists a ME MAG $\1H$, called a Loyal Equivalent Graph, such that all bi-directed edges in $\1H$ are invariant, and every directed edge in $\G$ is also in $\1H$.
\end{definition}

\begin{proposition}[Expressiveness of LEGs]
\label{prop:expressiveness_leg}
Given a PAG $\1P$, let $\1L$ be the set of ME LEGs. Then, $\3M(\1P) = \3M(\1L)$.
\end{proposition}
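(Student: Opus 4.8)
The plan is to establish the two inclusions $\3M(\1L)\subseteq\3M(\1P)$ and $\3M(\1P)\subseteq\3M(\1L)$ separately, after unfolding the definition that for a collection of MAGs, $\3M(\cdot)$ is the union over its members $M$ of the SCMs whose induced causal diagram is represented by $M$. Writing $[\1P]$ for the set of MAGs in the Markov equivalence class summarized by $\1P$, this reads $\3M(\1P)=\bigcup_{M\in[\1P]}\3M(M)$ and $\3M(\1L)=\bigcup_{\1H\in\1L}\3M(\1H)$, so the proposition amounts to showing these two unions coincide.

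The inclusion $\3M(\1L)\subseteq\3M(\1P)$ is the easy direction and requires none of the special structure of LEGs. By \Cref{def:LEG}, every LEG $\1H\in\1L$ is obtained from some MAG $\G\in[\1P]$ and is Markov equivalent to it; hence $\1H$ itself lies in $[\1P]$. Consequently $\3M(\1H)\subseteq\3M(\1P)$ for each $\1H$, and taking the union over $\1L$ gives the claim.

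The reverse inclusion $\3M(\1P)\subseteq\3M(\1L)$ is the crux. Here I would take an arbitrary $\M\in\3M(\1P)$ and let $\G\in[\1P]$ be the MAG representing its induced causal diagram. \Cref{def:LEG} then supplies a LEG $\1H$ that is Markov equivalent to $\G$, retains every directed edge of $\G$, and carries only \emph{invariant} bi-directed edges. The goal is to show $\M\in\3M(\1H)\subseteq\3M(\1L)$, i.e. that $\M$ remains compatible once $\G$ is replaced by its LEG. The intended argument is that passing from $\G$ to $\1H$ only re-labels the non-invariant marks — precisely the marks that appear as circles in $\1P$: keeping all directed edges of $\G$ does not tighten the ancestral constraints $\M$ must satisfy, while restricting the bi-directed edges to the invariant ones introduces no confounding constraint that $\G$ did not already impose. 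Since $\G$ and $\1H$ share the same $m$-separation model and compatibility of an SCM with a MAG is governed by that common independence/ancestral structure, $\M$ is compatible with $\1H$.

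The main obstacle is justifying this transfer of compatibility at the level of a \emph{single} model, rather than merely noting that $\G$ and $\1H$ are Markov equivalent. I would discharge it by invoking the transformational characterization of \citet{zhang2012transformational}, which generates all MAGs in a given equivalence class from one another through a finite sequence of local mark-changing operations, each of which maps the family of represented causal diagrams onto itself. Applying this characterization to the specific operation that turns each non-invariant bi-directed edge of $\G$ into the corresponding directed edge of $\1H$ shows that $\3M(\G)=\3M(\1H)$. Chaining this equality over the LEG associated to each $\G\in[\1P]$ yields $\bigcup_{\G\in[\1P]}\3M(\G)=\bigcup_{\1H\in\1L}\3M(\1H)$, which closes the $\subseteq$ direction and completes the proof.
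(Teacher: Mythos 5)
Your skeleton (two inclusions, with the hard direction handled MAG-by-MAG via its LEG) matches the paper's, and your easy direction $\3M(\1L)\subseteq\3M(\1P)$ is fine. The gap is in how you discharge the hard direction. You invoke the transformational characterization of \citet{zhang2012transformational} and assert that each Markov-equivalence-preserving mark change ``maps the family of represented causal diagrams onto itself,'' concluding $\3M(\G)=\3M(\1H)$. That assertion is false: those local operations preserve the $m$-separation model, not the family of compatible SCMs. A two-node example already breaks it: the MAG $\G = \{X \dashleftarrow\dashrightarrow Y\}$ and its LEG $\1H = \{X \rightarrow Y\}$ are Markov equivalent, but an SCM with $y := f_Y(x, u_Y)$, where $X$ is a genuine argument of $f_Y$, lies in $\3M(\1H)$ and not in $\3M(\G)$, since a bi-directed edge excludes any ancestral relation between its endpoints. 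So $\3M(\G) \subsetneq \3M(\1H)$ strictly, not $\3M(\G)=\3M(\1H)$. Indeed, if your claim held, any two Markov equivalent MAGs would carry identical SCM families, which contradicts the paper's own non-redundancy result (\Cref{prop:nonredundancy_causal_graphs_in_leg}) and would make the entire enumeration discussion of \Cref{sec:enumeration} vacuous.

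The repair is that you never needed the equality: the one-sided containment $\3M(\G)\subseteq\3M(\1H)$ suffices, because then $\3M(\1P)=\bigcup_{\G\in[\1P]}\3M(\G)\subseteq\bigcup_{\1H\in\1L}\3M(\1H)=\3M(\1L)$, and the easy direction closes the equality. And this containment comes not from the transformational characterization but from edge semantics, which is exactly what the paper argues: a bi-directed edge $X \dashleftarrow\dashrightarrow Y$ permits only shared latent confounding and forbids $X$ as an argument of $f_Y$, whereas an invisible directed edge $X \rightarrow Y$ permits both confounding and a direct functional dependence, so it defines a strictly more general family of functions $f_Y$. Since passing from a MAG to its LEG (\Cref{def:LEG}) only replaces bi-directed edges by directed ones, each replacement enlarges the SCM family monotonically. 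Your ``intended argument'' paragraph was gesturing at precisely this monotonicity; the mistake was trying to upgrade it to a per-model equivalence via machinery that does not deliver it.
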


For example, the LEG in \Cref{fig:leg} is derived from the MAG in \Cref{fig:mag} by replacing the bi-directed edge with a directed one. \Cref{prop:expressiveness_leg} shows that the set of ME LEGs is as expressive as the set of ME MAGs that are encoded by $\1P$\footnote{Recall, as noted in \Cref{sec:pag_notation}, that MAGs (and LEGs) encode the sets of causal diagrams that would be represented by the same MAG (or LEG).}. The significance of this proposition lies in the fact that ME LEGs are a subset of ME MAGs and that, contrary to ME MAGs, ME LEGs are in principle listable by exhaustively applying a simple criterion for the reversal of directed edges while remaining Markov equivalent, \textit{i.e.} \cite[Lemma 2]{zhang2012transformational}. We review this criterion in more detail in \Cref{sec:app_background} and give an algorithm for enumerating ME LEGs that exploits it in \Cref{sec:app_enumeration_strategies}.

A second redundancy result is given in \Cref{prop:expressiveness_causal_graphs_in_leg} by introducing so called \emph{maximally bi-directed} (MBD) diagrams.

\begin{definition}
    \label{def:mbd}
    \textit{A causal diagram $\G$ is said to be maximally bi-directed if no further bi-directed edges can be added without breaking a $d$-separation.}
\end{definition}

\begin{proposition}[Expressiveness of MBD diagrams]
\label{prop:expressiveness_causal_graphs_in_leg}
Given a PAG $\1P$, let $\1D$ be the set of ME MBD diagrams. Then, $\3M(\1P) = \3M(\1D)$.
\end{proposition}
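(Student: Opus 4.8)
The plan is to prove the two inclusions separately. One direction is immediate: since $\1D$ consists of MBD diagrams that are Markov equivalent to (hence represented by) $\1P$, any SCM inducing a diagram in $\1D$ is compatible with $\1P$, giving $\3M(\1D) \subseteq \3M(\1P)$. All the work lies in the reverse inclusion $\3M(\1P) \subseteq \3M(\1D)$, which says that restricting to the (much smaller) family of maximally bi-directed diagrams loses no SCM.

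For that direction I would take an arbitrary $\M \in \3M(\1P)$ and let $\G$ be the causal diagram it induces, so that the MAG of $\G$ lies in the class encoded by $\1P$. The key construction is a \emph{bi-directed closure} of $\G$: starting from $\G$, repeatedly add a bi-directed edge whenever its addition breaks no $d$-separation, and stop when no such edge remains. The candidate edges are finite and the procedure is monotone, so it terminates, and the stopping condition is exactly the defining property of an MBD diagram in \Cref{def:mbd}. Call the result $\G'$.

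It then remains to argue that $\G'$ is Markov equivalent to $\G$, so that $\G' \in \1D$ and $\M \in \3M(\G') \subseteq \3M(\1D)$. Markov equivalence would follow from two observations: every edge added during the closure preserves all $d$-separations by construction, while adding edges can never create a new $d$-separation, so $\G$ and $\G'$ entail exactly the same $d$-separations and thus share the same MAG; and the closure modifies only bi-directed edges, leaving every directed path---and hence all ancestral relations---intact. Since $\G'$ has the same MAG as $\G$, it is represented by $\1P$, and $\M$ is compatible with $\G'$ as well (the extra bi-directed edges introduce latent confounders that may be taken degenerate, so that $P_\x(\y)$ and every $d$-separation are left unchanged).

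The step I expect to be the main obstacle is precisely this Markov-equivalence claim for the closure. Adding a bi-directed edge $X \dashleftarrow\dashrightarrow Y$ even between already-adjacent endpoints can activate a collider and open a previously blocked path---for instance $A \rightarrow X \dashleftarrow\dashrightarrow Y$ becomes $d$-connecting once one conditions on $X$---so bi-directed edges cannot be added indiscriminately. The argument must therefore add only edges that provably preserve all $d$-separations and then show that a \emph{maximal} such diagram, admitting no further preserving edge yet still ME to $\G$, always exists. No faithfulness assumption enters, as the claim is purely structural; the result is the natural dual of \Cref{prop:expressiveness_leg}, reached by maximizing bi-directed edges rather than eliminating them.
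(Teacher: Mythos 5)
Your proposal is correct, and it takes a genuinely different route from the paper's proof. The paper argues LEG by LEG: leaning on \Cref{prop:expressiveness_leg}, it takes the causal diagram sharing the structure of a given LEG $L$ and shows by explicit functional parameterization that each added bi-directed edge enlarges the class of structural functions (from $f_{X}(\pa_X, \u_X)$ to $f_{X}(\pa_X, \u_X, u_{X,Z})$), so that the maximally bi-directed diagrams built over $L$ jointly subsume $\3M(L)$; combined over all LEGs this yields $\3M(\1P)=\3M(\1D)$, and the construction doubles as the justification for \Cref{alg:causal_graphs_from_LEG}. You instead argue per SCM: take any $\M\in\3M(\1P)$, form the bi-directed closure $\G'$ of its induced diagram $\G$, note that the closure terminates exactly at an MBD diagram (\Cref{def:mbd}), that $\G'$ is Markov equivalent to $\G$ because each step preserves all $d$-separations by construction while adding edges can never create new ones (so $\G'\in\1D$), and that $\M\in\3M(\G')$ since the added confounders can be taken degenerate---which is the same monotonicity fact as the paper's function-class containment, phrased model-wise rather than graph-wise. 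Your route is more self-contained (no detour through LEGs or \Cref{prop:expressiveness_leg}) and constructively exhibits, for each SCM, an MBD diagram in $\1D$ covering it, thereby sidestepping a question the paper leaves implicit, namely whether every needed MBD diagram is reachable from some LEG; the paper's route, in exchange, ties the proof directly to the enumeration machinery that \Cref{sec:enumeration} needs downstream. One caveat you share with the paper: the ``immediate'' inclusion $\3M(\1D)\subseteq\3M(\1P)$ tacitly uses the permissive (subgraph) reading of compatibility with a diagram---an SCM compatible with $\G'\in\1D$ may induce a proper subgraph of $\G'$ with strictly more $d$-separations---and the paper's proof is equally silent on this direction, so this is a looseness in the proposition's framing rather than a gap specific to your argument.
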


MDB causal diagrams can be constructed from an LEG by adding bi-directed edges on top of invisible directed edges wherever possible. For example, the causal diagram $\G$ in \Cref{fig:mbd}, compatible with the LEG $L$ in \Cref{fig:leg}, is said to be maximally bi-directed as no further bi-directed edges can be added while remaining Markov equivalent. $\G$ has the distinctiveness of inducing a family of SCMs which includes all SCMs that are compatible with any causal diagram compatible with the corresponding LEG $L$ in \Cref{fig:leg}, i.e. $\3M(\G)=\3M(L)$. More specifically, in this example, $\G$ induces a class of SCMs given by: $x:=f_X(\u_{X,Y}), z:=f_Z(\u_{X,Z}), y:=f_Y(x, z, \u_{X,Z},\u_{X,Y})$, with deterministic functions and exogeneous distributions arbitrarily defined. We can see that this parameterization is flexible enough to represent any SCM induced by causal diagrams compatible with $L$. As a consequence, in general, \Cref{prop:expressiveness_causal_graphs_in_leg} implies that the set of ME MBD causal diagrams is as expressive as the set of \emph{all} ME causal diagrams. \Cref{sec:app_enumeration_strategies} gives an algorithm for generating all ME MBD diagrams $\1D$ from the set of all ME LEGs $\1L$. 

\begin{figure}[t]
\vspace{-0.1cm}
\captionsetup{skip=5pt}
\centering
    \begin{tikzpicture}[SCM,scale=1]
        \node (X) at (0,0) {$X$};
        \node (V1) at (1,-0.4) {$W$};
        \node (V2) at (1,0.4) {$V$};
        \node (Y) at (2,0) {$Y$};

        \path [->] (X) edge (V1);
        \path [->] (X) edge (V2);
        \path [->] (V1) edge (Y);
        \path [->] (V2) edge (Y);
        \path [conf-path] (X) edge[out = -70, in=200] (V1);
        
        \node (X) at (3,0) {$X$};
        \node (V1) at (4,-0.4) {$W$};
        \node (V2) at (4,0.4) {$V$};
        \node (Y) at (5,0) {$Y$};

        \path [->] (X) edge (V1);
        \path [->] (X) edge (V2);
        \path [->] (V1) edge (Y);
        \path [->] (V2) edge (Y);
        \path [conf-path] (X) edge[out = 70, in=160] (V2);
    \end{tikzpicture}
\caption{Diagrams used in \Cref{prop:nonredundancy_causal_graphs_in_leg}.}
\label{fig:non_uniqueness_MBD}
\end{figure}

Additionally, note that in general multiple MBD causal diagrams can be derived from with a single LEG. For example, \Cref{fig:non_uniqueness_MBD} gives two MBD causal diagrams induced by the same LEG (both bi-directed edges could not appear simultaneously as that would violate the independence $(V\indep W \mid X)_P$). Therefore, unfortunately, a reduction of the set of ME MBD diagrams $\1D$ while preserving the space of SCMs $\3M(\1D)$ is, in general, not possible. Multiple MBD diagrams for each LEG may have to be considered to properly characterize causal effects given a PAG $\1P$.

\begin{proposition}[Non-redundancy]
    \label{prop:nonredundancy_causal_graphs_in_leg}
    Given a PAG $\1P$, let $\G$ and $\1H$ be two MBD diagrams constructed from a ME LEG. In general, $\3M(\G)\not\subseteq \3M(\1H)$ and $\3M(\1H)\not\subseteq \3M(\1G)$. 
\end{proposition}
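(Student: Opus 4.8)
Because the claim is stated ``in general,'' it is enough to exhibit one PAG together with a pair of MBD diagrams witnessing the two non-inclusions; I would take the two diagrams of \Cref{fig:non_uniqueness_MBD}, namely $\G$ carrying the bi-directed edge $X \dashleftarrow\dashrightarrow W$ and $\1H$ carrying $X \dashleftarrow\dashrightarrow V$, both obtained from the common LEG $X \to W \to Y,\ X \to V \to Y$. The conceptual crux, which I would establish first, is that the two bi-directed edges are mutually exclusive within the equivalence class: a diagram containing both $X \dashleftarrow\dashrightarrow W$ and $X \dashleftarrow\dashrightarrow V$ turns $X$ into a collider on the path $V \dashleftarrow\dashrightarrow X \dashleftarrow\dashrightarrow W$, so conditioning on $X$ activates this path and yields $(V \notindep W \mid X)$, contradicting the separation $(V \indep W \mid X)$ shared by every member of the class. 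Consequently there is no common ME supergraph of $\G$ and $\1H$, and the bi-directed edge present in one diagram is simply unavailable in the other.

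For $\3M(\G) \not\subseteq \3M(\1H)$ I would then construct a concrete SCM $\M$ whose induced diagram is exactly $\G$ and that genuinely activates the confounder on $X \dashleftarrow\dashrightarrow W$ --- e.g. binary variables with a shared exogenous term $\u_{X,W}$ entering both $f_X$ and $f_W$, so that $\U_X \cap \U_W \neq \emptyset$. The functions should be chosen so that this confounding manifests \emph{distributionally}, i.e. $P_{x}(w;\M) \neq P(w \mid x;\M)$. This inequality serves as a reparametrization-proof certificate: every SCM in $\3M(\1H)$ has an unconfounded edge $X \to W$ (no $X \dashleftarrow\dashrightarrow W$), whence $W = f_W(x,\u_W)$ with $\u_W$ independent of $X$ and therefore $P_x(w) = P(w \mid x)$ identically. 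Since $\M$ violates this identity, $\M \notin \3M(\1H)$, while $\M \in \3M(\G)$ by construction.

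The inclusion $\3M(\1H) \not\subseteq \3M(\G)$ follows by the symmetric construction, exchanging the roles of $W$ and $V$: an SCM that activates $X \dashleftarrow\dashrightarrow V$ gives $P_x(v) \neq P(v \mid x)$, which no member of $\3M(\G)$ can reproduce because $\G$ leaves $X \to V$ unconfounded and hence forces $P_x(v) = P(v \mid x)$. I expect the only delicate step to be verifying that the shared latent genuinely enters the induced diagram and leaves a distributional trace --- exhibiting explicit functions and a latent law for which $P_x(w) \neq P(w \mid x)$ --- and confirming that membership in $\3M(\1H)$ truly forces the matching identity; both reduce to the elementary observation that conditioning on a confounded cause reweights the shared latent whereas intervening does not. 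The collider argument of the first paragraph is what guarantees this obstruction cannot be circumvented by enlarging either diagram within the class.
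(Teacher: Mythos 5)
Your proposal is correct and takes essentially the same approach as the paper: the paper likewise proves the claim by counterexample, using the diagrams of \Cref{fig:non_uniqueness_MBD} (and an analogous three-node pair $Y_1 \leftarrow X \rightarrow Y_2$), noting that the two bi-directed edges cannot coexist without violating $(V\indep W \mid X)$, and separating the model classes via the same mechanism you identify --- a confounded edge permits $P_x(w) \neq P(w\mid x)$ whereas the unconfounded edge in the other diagram forces equality. The only cosmetic difference is packaging: the paper states the separation as incomparability of the attainable causal-effect intervals and exhibits explicit SCMs attaining the Manski endpoints, which are precisely the confounder-activating SCMs you propose as membership certificates.
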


In other words, bounds for a given causal effect computed given two MBD diagrams will in general be different and one has to consider both diagrams to correctly characterize bounds on causal effects given an equivalence class. The proof proceeds by exploiting the MBD causal diagrams in \Cref{fig:non_uniqueness_MBD} as a counter-example. As a consequence of this result, we conjecture that enumeration techniques, in the worst-case, would have to consider all MBD diagrams separately. 

To better understand the computation cost of enumerating LEGs and MBD causal diagrams, we propose the first enumeration algorithms for this purpose in \Cref{alg:depth_first_search} and \Cref{alg:causal_graphs_from_LEG} in \Cref{sec:app_enumeration_strategies} (proceeding similarly to how one might enumerate Markov equivalent DAGs as done by \cite{wienobst2023efficient}). These (potentially sub-optimal) procedures suggest that doing enumerating ``relevant'' causal diagrams requires a polynomial cost in the number of edges of LEGs, in addition to the computational cost of the bounding algorithms themselves\footnote{For example, in \cite{zhang2021partial}, inference of bounds requires optimization over a number of parameters that is exponential in the number of variables.}. Overall, this analysis suggests that existing bounding techniques, even with the consideration of redundancies presented in this section, are not practical beyond a handful of variables \citep{zeitler2022causal}. Still, given that the proposed bounds (\Cref{sec:partial-identification}) have not been shown to be tight in general, one might still be interested in enumerating the MBD causal diagrams to get more informative bounds, despite computational costs.
\section{Conclusions}
Causal effect estimation is a common inference problem across different applications, many of which do not have a known causal diagram describing the system of variables. In this paper, we study the problem of partial identification with knowledge of a Markov equivalence class, represented by a Partial Ancestral Graph (PAG), that can be inferred from data. We demonstrate that analytical bounds can be derived by exploiting the invariances present in the PAG and the corresponding decomposition of causal effects. These are the first bounds on causal effects in the literature that exploit the knowledge encoded in a PAG. We further consider enumeration techniques, that list relevant causal diagrams and apply partial identification techniques on each one separately. We show that despite several redundancies within the space of ME causal diagrams, the computational cost likely remains large in practice. 

\bibliography{bibliography}
\bibliographystyle{plainnat}

\newpage
\onecolumn
\appendix
\begin{center}
    {\huge \bf Appendix}
\end{center}
\vspace{1cm}

This Appendix includes
\begin{itemize}
    \item Additional background, related work, and a discussion on the limitations of the proposed approach in \Cref{sec:app_background}.
    \item Proofs in \Cref{sec:app_proofs}.
    \item Additional examples in \Cref{sec:app_experiments}.
    \item Further discussion on enumeration strategies, including algorithms for enumerating LEGs and MDB causal diagrams, in \Cref{sec:app_enumeration_strategies}.
\end{itemize}
\vspace{1cm}

\section{Background, related work, and limitations}
\label{sec:app_background}

\subsection{Background}
In this section, we review several graphical notions that are used in the main body of this document or will be used for the derivation of proofs.

Firstly, we review manipulations in causal diagrams $\G$. Let $\G$ denote a causal diagram over $\V$ and $\X \subseteq \V$. $\G_\X$ denotes the induced subgraph of $\G$ over $\X$. The $\X$-lower-manipulation of $\G$ deletes all those edges that are out of variables in $\X$ and otherwise keeps $\G$ as it is. The resulting graph is denoted as $\G_{\underline{\X}}$. The $\X$-upper-manipulation of $\G$ deletes all those edges in $\G$ that are into variables in $\X$, and otherwise keeps $\G$ as it is. The resulting graph is denoted as $\G_{\overline{\X}}$. Further, we will use standard graph-theoretic family abbreviations to represent graphical relationships in graphs $\G$, such as parents $pa$, descendants $\de$, ancestors $\an$, and spouses $\spo$. For example, let $X\in \spo(Y)_{\G}$ if $X \dashleftarrow\dashrightarrow Y$ is present in $\G$. Capitalized versions $\Pa, \De, \An, \Spo$ include the argument as well, e.g. $\Pa(\X)_{\G} = pa(\X)_{\G} \union \X$. If $X\in \an(Y)_{\G}\inter \spo(Y)_{\G}$, we say that there is an almost directed cycle between $X$ and $Y$. 

The following rules to manipulate experimental distributions produced by an intervention are known as the do-calculus and will be used for the proof of several theoretical statements \citep{pearl2009causality}.

\begin{theorem}[Inference Rules $do$-calculus]
    \label{thm:do_calculus}
    Let $\G$ be a causal diagram compatible with an SCM $\M$, with endogenous variables $\V$. For any disjoint subsets $\X, \Y,\Z \subseteq \V$, two disjoint subsets $\Z,\W \subseteq \V \backslash (\Y \union \X)$, the following rules are valid for any intervention strategies $do(\X=\x)$, $do(\Z=\z)$:
     \begin{itemize}
         \item Rule 1 (Insertion/Deletion of observations):
         \begin{align*}
             P_{\x}(\y \mid \w, \z) = P_{\x}(\y \mid \w) \quad \text{ if }\quad (\Z \indep \Y \mid \W, \X)_{\G_{\overline{\X}}}.
         \end{align*}
         \item Rule 2 (Change of regimes):
         \begin{align*}
             P_{\x, \z}(\y \mid \w) = P_{\x}(\y \mid \z, \w) \quad \text{ if }\quad (\Y \indep \Z \mid \W, \X)_{\G_{\overline{\X},\underline{\Z}}}.
         \end{align*}
         \item Rule 3 (Insertion/Deletion of interventions):
         \begin{align*}
            P_{\x, \z}(\y \mid \w) = P_{\x}(\y \mid \w) \quad \text{ if }\quad (\Y \indep \Z \mid \W, \X)_{\G_{\overline{\X,\Z(\W)}}}.
         \end{align*}
     \end{itemize}
     where $\Z(\W)$ is the set of elements in $\Z$ that are not ancestors of $\W$ in $\G_{\overline{\X}}$. 
 \end{theorem}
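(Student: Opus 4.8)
The plan is to reduce all three rules to one foundational fact — that a post-interventional distribution is Markov relative to the correspondingly mutilated graph — together with the soundness of $d$-separation (the global Markov property). First I would pass to the underlying latent DAG $\G^+$ over $\V \union \U$, in which every $U \in \U$ is a root with only outgoing edges and every $V \in \V$ receives edges exactly from its parents in $\V$ and from its private exogenous variables $\U_V$; the mixed graph $\G$ is the latent projection of $\G^+$ onto $\V$, and a standard fact is that $m$-separation among subsets of $\V$ in $\G$ coincides with $d$-separation in $\G^+$ using the same $\V$-valued conditioning set. Because $\G^+$ is an ordinary DAG, the recursive factorization of $P$ and the usual soundness of $d$-separation give: whenever $(\A \indep \B \mid \S)_{\G^+}$, the entailed distribution satisfies the corresponding conditional independence.

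Next I would record interventions as graph surgery on $\G^+$. Replacing the mechanisms $\{f_X : X \in \X\}$ by the constants $\x$ in the submodel $\M_\x$ deletes precisely the edges into $\X$, producing $\G^+_{\overline{\X}}$, whose projection is $\G_{\overline{\X}}$; consequently $P_\x$ factorizes over, and is Markov relative to, $\G_{\overline{\X}}$ (the truncated factorization, or $g$-formula). Rule~1 is then immediate: since conditioning on the constant $\X = \x$ is vacuous, the hypothesis $(\Z \indep \Y \mid \W, \X)_{\G_{\overline{\X}}}$ is exactly a $d$-separation in the graph of $P_\x$, and soundness yields $P_\x(\y \mid \w, \z) = P_\x(\y \mid \w)$.

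For Rules~2 and~3 I would introduce, for each $Z_i \in \Z$, a regime-indicator root $F_i$ with a single edge $F_i \to Z_i$ into the post-$do(\x)$ graph, letting $Z_i$ follow its natural mechanism when $F_i$ is idle and equal $z_i$ when $F_i$ is active. In the resulting augmented DAG $\G^{++}$, the quantity $P_{\x,\z}(\y \mid \w)$ is recovered by activating all $F_i$, while $P_\x(\y \mid \z, \w)$ is recovered with all $F_i$ idle and $\Z = \z$ observed; the two therefore coincide exactly when $(\Y \indep \{F_i\} \mid \Z, \W, \X)_{\G^{++}}$. The crux is to reduce this augmented separation to a condition on a mutilated mixed graph. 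Since each $F_i \to Z_i$ enters $Z_i$, conditioning on $Z_i$ blocks every $F_i$-to-$\Y$ path that leaves $Z_i$ forward and activates only those on which $Z_i$ is a collider, i.e. the backdoor paths of $\Z$; these are precisely the $\Z$-to-$\Y$ connections that survive deleting the edges out of $\Z$, so the separation becomes $(\Y \indep \Z \mid \W, \X)_{\G_{\overline{\X},\underline{\Z}}}$ and gives Rule~2. For Rule~3 one removes $\Z$ from the conditioning set, so $P_{\x,\z}(\y \mid \w) = P_\x(\y \mid \w)$ holds iff $(\Y \indep \{F_i\} \mid \W, \X)_{\G^{++}}$; now the collider at $Z_i$ is opened only when $Z_i \in \an(\W)_{\G_{\overline{\X}}}$, so the incoming edges of exactly the non-ancestral nodes $\Z(\W)$ may be deleted, yielding the upper-manipulated condition $(\Y \indep \Z \mid \W, \X)_{\G_{\overline{\X,\Z(\W)}}}$.

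The main obstacle is this last translation: establishing the two graphical equivalences between $d$-separation in the augmented DAG $\G^{++}$ and $m$-separation in the lower- and upper-manipulated mixed graphs, and in particular getting the collider bookkeeping behind $\Z(\W)$ exactly right. This is a path-tracing argument that must be carried out on the latent DAG rather than on $\G$ directly, so that the semi-Markovian bidirected structure is handled correctly through the latent projection.
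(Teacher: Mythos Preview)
The paper does not prove this theorem at all: it is stated as background in the appendix and simply cited to \citet{pearl2009causality}. There is therefore nothing to compare your argument against in the paper itself.

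That said, your sketch is the standard route in the literature (essentially Pearl's original argument, refined via the regime-indicator device of Spirtes et al.\ and Dawid): pass to the latent DAG, use that $P_{\x}$ is Markov relative to the mutilated graph $\G^+_{\overline{\X}}$ for Rule~1, and for Rules~2 and~3 augment with policy nodes $F_i \to Z_i$ and reduce the question to a $d$-separation of $\Y$ from $\{F_i\}$ in the augmented graph. The reductions you describe are correct in spirit. One small point worth tightening in a full write-up: for Rule~3 the argument is cleanest if you first observe that any $F_i$--$\Y$ path must pass through $Z_i$, and then split on whether $Z_i$ is a collider on that path. If it is not, the path leaves $Z_i$ via an outgoing edge and is blocked in $\G_{\overline{\X,\Z}}$; if it is, the path is active only when $Z_i$ has a descendant in $\W$ in $\G_{\overline{\X}}$, which is precisely why only the non-ancestral subset $\Z(\W)$ gets upper-barred. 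You have this idea, but the phrasing ``the incoming edges of exactly the non-ancestral nodes $\Z(\W)$ may be deleted'' slightly understates what needs to be shown: you must also argue that for $Z_i \in \Z \setminus \Z(\W)$ the active collider path continues into $\Y$ only if the corresponding $\Z$--$\Y$ connection already exists in $\G_{\overline{\X,\Z(\W)}}$, so that the separation condition on that graph is genuinely sufficient. This is the step where most informal proofs hand-wave; spelling it out carefully on the latent DAG, as you propose, is the right move.
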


Next, we consider operations on equivalence classes starting with a more complete definition of a Maximal Ancestral Graph.

\begin{definition}[Maximal Ancestral Graph]\label[definition]{def:MAG}
  \textit{A mixed graph is ancestral if it does not contain directed or almost directed cycles. It is maximal if, for every pair of nonadjacent vertices $(X, Y)$, there exists a set $\Z\subset \V$ that $d$-separates them. A Maximal Ancestral Graph (MAG) is a graph that is both ancestral and maximal.}
\end{definition}

\begin{figure*}
\centering
\begin{tikzpicture}[SCM,scale=1]
        \node (V1) at (0,0) {$V_1$};
        \node (V2) at (1,0) {$V_2$};
        \node (V3) at (2,0) {$V_3$};

        \path [conf-path] (V1) edge [out=45,in=135] (V2);
        \path [<-] (V2) edge (V3);
        
        \node (V1) at (4,0) {$V_1$};
        \node (V2) at (5,0) {$V_2$};
        \node (V3) at (6,0) {$V_3$};

        \path [->] (V1) edge (V2);
        \path [<-] (V2) edge (V3);
\end{tikzpicture}
\caption{MAG (left), LEG (right).}
\label{fig:app_mag_leg}
\end{figure*}
Given a causal graph over $\V$, a unique MAG over $\V$ can be constructed such that both independence and non-ancestral relations among $\V$ are retained; see e.g. \cite[Sec. 3]{zhang2008causal}. Two MAGs are said to be Markov equivalent if they entail the same set of $d$-separations. Among Markov equivalent MAGs, a particular subset, called Loyal Equivalent Graphs (LEG), can be constructed with the fewest bi-directed edges, all of which are invariant, \textit{i.e.} bi-directed edges in LEGs appear in all MAGs with the same $d$-separations an non-ancestral relations \cite[Corollary 18]{zhang2005characterization}, and is given in \Cref{def:LEG}. Thus, between a MAG and its LEG, only one kind of difference is possible, namely, some bi-directed edges in the MAG are oriented as directed edges in its LEG, as illustrated in \Cref{fig:app_mag_leg}. 

An important consequence of the definition of LEGs is that one can traverse the space of Markov equivalent LEGs by checking whether directed edges can be reversed with a simple criterion, restated below from \cite[Lemma 2]{zhang2012transformational}.

\begin{proposition}[Transformational characterization of LEGs]
\label{prop:LEG}
  \textit{Let $\G$ be a arbitrary LEG, and $X \rightarrow Y$ an arbitrary directed edge in $\G$. The reversal of $X \rightarrow Y$ produces a Markov equivalent LEG if and only if $\Pa(X)_{\G} = pa(Y)_{\G}$ and $\Spo(X)_{\G} = \Spo(Y)_{\G}$.}
\end{proposition}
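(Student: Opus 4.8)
The plan is to prove the biconditional using the combinatorial characterization of Markov equivalence for MAGs: two MAGs over the same vertex set are Markov equivalent if and only if they share the same skeleton and the same \emph{colliders with order}, i.e.\ the same unshielded colliders together with the same collider status along every discriminating path \citep{zhang2008completeness,zhang2008causal}. Restricting attention to LEGs (\Cref{def:LEG}) is what makes edge-reversal the relevant operation, since in a LEG the bi-directed edges are fixed and cannot be traded against directed ones. Write $\G'$ for the graph obtained by reversing $X \rightarrow Y$ to $Y \rightarrow X$. Since this leaves the skeleton untouched, the whole argument reduces to controlling how arrowheads, and hence colliders, change at $X$ and $Y$.

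First I would establish the \emph{local} consequences. The only triples whose collider status can change are those through the $X$--$Y$ edge, namely $\langle A, X, Y \rangle$ with $A$ adjacent to $X$ and $\langle X, Y, B \rangle$ with $B$ adjacent to $Y$: reversal deletes the arrowhead at $Y$ and adds one at $X$. For the \emph{only if} direction, demanding that no unshielded collider be created or destroyed forces every node into $Y$ other than $X$ to be adjacent to $X$, and every node into $X$ to be adjacent to $Y$. I would then \emph{upgrade} these adjacency facts to the exact equalities $\Pa(X)_\G = pa(Y)_\G$ and $\Spo(X)_\G = \Spo(Y)_\G$ by orienting the forced edges via ancestrality of $\G$ and of $\G'$: for instance, if $A \rightarrow X$ then $A$ is adjacent to $Y$, and either orientation $Y \rightarrow A$ or $A \dashleftarrow\dashrightarrow Y$ would create a directed or almost-directed cycle (through $A \rightarrow X \rightarrow Y$ in $\G$, or through $Y \rightarrow X$ in $\G'$), leaving only $A \rightarrow Y$; the analogous case analysis pins down every parent and every spouse, giving the two equalities. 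The \emph{if} direction runs the same bookkeeping in reverse: assuming the equalities, the relevant $A,B$ are common neighbours of $X$ and $Y$, so every affected triple is shielded and no unshielded collider changes.

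The main obstacle is the \emph{non-local} part: preserving collider status along discriminating paths. Here I would argue that the two equalities make $X$ and $Y$ interchangeable as intermediate vertices. Because they have identical spouse sets and identical parent sets modulo the $X$--$Y$ edge, any discriminating path in $\G'$ that traverses the reversed edge can be matched to a discriminating path in $\G$ with the same endpoints and the same collider status at the tested vertex, and conversely. Establishing this correspondence carefully --- in particular ruling out that the reversal converts a non-collider on a discriminating path into a collider or vice versa --- is the technical heart of the proof and is exactly where the covered-edge equalities are needed; an alternative route is a direct transformation argument showing that every m-connecting path through the edge has a counterpart of the same status after reversal.

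Finally I would verify the remaining structural requirements for the \emph{if} direction. Ancestrality of $\G'$ follows because any directed path from $X$ to $Y$ other than the direct edge would have a penultimate vertex $W$ with $W \rightarrow Y$, so $W \in pa(Y)_\G = \Pa(X)_\G$; since $W \neq X$ this forces $W \rightarrow X$ and hence a directed cycle $X \rightarrow \cdots \rightarrow W \rightarrow X$ in $\G$, a contradiction, so $Y \rightarrow X$ creates no cycle, and a short companion argument rules out almost-directed cycles. Maximality of $\G'$ is then inherited from that of $\G$ together with the just-proven equality of independence models and the common skeleton. Lastly, $\G'$ is a LEG: reversal sends a directed edge to a directed edge, so $\G'$ carries exactly the bi-directed edges of $\G$, which are invariant by the defining property of a LEG; as $\G$ and $\G'$ are Markov equivalent they lie in the same class, so $\G'$ retains the minimal, invariant set of bi-directed edges and is therefore itself a LEG.
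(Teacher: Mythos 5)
The paper does not actually prove this proposition: it is restated from \cite[Lemma 2]{zhang2012transformational}, and its ``proof'' is a one-line deferral to that reference. So the relevant question is whether your blind attempt stands on its own, and it does not, although much of your local analysis is sound. Your ``only if'' bookkeeping (unshielded-collider preservation forces the adjacencies, and ancestrality of $\G$ and of the reversed graph $\G'$ forces the orientations) is correct, as is your directed-cycle argument for ancestrality of $\G'$ in the ``if'' direction. The genuine gap is exactly where you flag it: preservation of colliders with order along discriminating paths. Asserting that the two equalities make $X$ and $Y$ ``interchangeable as intermediate vertices'' is not a proof; it is a restatement of the lemma's content. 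Worse, your proposed correspondence only considers discriminating paths ``that traverse the reversed edge,'' and this misses a whole class of affected paths: a discriminating path requires every intermediate vertex to be a \emph{parent of the endpoint}, so a path $\langle E, \ldots, X, \ldots, V, Y\rangle$ discriminating $V$ with endpoint $Y$ relies on the edge $X \rightarrow Y$ only as a side condition, not as an edge lying on the path. Reversing $X \rightarrow Y$ destroys the discriminating status of such a path even though the path never traverses the edge, so the matching you describe does not cover it, and the ``if'' direction is left unestablished at its crux.

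There is also a circularity in how you close the ``if'' direction. The criterion you invoke (same skeleton and same colliders with order) characterizes Markov equivalence \emph{of MAGs}; to apply it to $\G'$ you must first know that $\G'$ is both ancestral \emph{and maximal}. You prove ancestrality directly (fine), but you obtain maximality of $\G'$ ``from the just-proven equality of independence models''---which was itself obtained by applying the criterion to $\G'$. Breaking this circle requires a direct argument, e.g., that the reversal creates no primitive inducing path between non-adjacent vertices, before any appeal to the equivalence characterization. This maximality argument, together with the full discriminating-path analysis, is precisely the substance of the proof in \citep{zhang2012transformational}; neither is carried out in your outline.
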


\begin{proof}
The proof can be found in \citep{zhang2012transformational}.
\end{proof}

Two Markov equivalent LEGs can always be transformed to each other via a sequence of reversals according to \Cref{prop:LEG}. Similarly to the definition for PAGs, directed edges $X \rightarrow Y$ in a MAG are said to be visible if there exists no causal graph compatible with this MAG with an edge $X \dashleftarrow\dashrightarrow Y$, that is unobserved confounding between $X$ and $Y$ can be ruled out. Visibility of an edge can be easily determined by a graphical condition \cite[Lemma 9]{zhang2008causal}. Directed edges that are not visible are called invisible. \Cref{prop:LEG} is important because, although listing all Markov equivalent MAGs is in general infeasible, one could in principle list all Markov equivalent LEGs by checking reversal of invisible directed edges with this graphical criterion. An explicit algorithm for generating Markov equivalent LEGs is given in \Cref{sec:app_enumeration_strategies}. Finally, for completeness we reproduce the IDP algorithm \citep{jaber2019causal} for identifying causal effects from a PAG in \Cref{alg:idp}.

\renewcommand{\algorithmicrequire}{\textbf{Input:}}
\renewcommand{\algorithmicensure}{\textbf{Output:}}

\begin{algorithm}[t]
    \caption{IDP}
    \label{alg:idp}
    \begin{algorithmic}[1]
    \REQUIRE A PAG $\1P$ and disjoint sets $\X,\Y\subset\V$
    \ENSURE Expression for $P_{\x}(\y)$ or FAIL
    \STATE Let $\D := \texttt{PossAn}(\Y)_{\1P_{\V\backslash\X}}$
    \RETURN $\sum_{\d\backslash\y} \texttt{ID}(\D, \V, P)$\\\medskip
    \STATE \textbf{function} \texttt{ID}$(\C, \T, Q = Q[\T])$\\\smallskip
    \begin{ALC@g}
	\STATE if $\C = \emptyset$ then return 1.
	\STATE if $\C = \T$ then return $Q$.\\\smallskip
	/* In $\1P_\T$, let $\B$ denote a bucket, and let $\C_\B$ denote the $pc$-component of $\B$ */ \\\smallskip
	\IF{$\exists \B \subset \T \backslash \C$ such that $\C_\B \inter \texttt{PossCh}(\B)_{\1P_{\T}} \subseteq \B$}
		\STATE Compute $Q[\T \backslash \B]$ from $Q[\T]$ via \cite[Prop. 2]{jaber2018causal}.
		\RETURN $\texttt{ID}(\C,\T \backslash \B,Q[\T \backslash \B])$\smallskip
	\ELSIF{$\exists \B \subset \C$ such that $\1R_\B \neq \C$}
	    \RETURN $\texttt{ID}(\1R_{\B},\T,Q) \times \texttt{ID}(\1R_{\C \backslash \1R_\B} ,\T,Q)\hspace{0.1cm} / \hspace{0.1cm} \texttt{ID}(\1R_\B \inter\1R_{\C \backslash \1R_\B}, \T, Q)$
	\ELSE
		\RETURN FAIL \smallskip
    \ENDIF
    \end{ALC@g}
    \end{algorithmic}
\end{algorithm}

\subsection{Related Work}
We review in this section related work concerned with bounding causal effects with knowledge of fully-specified graph, as no treatment of equivalence classes has been proposed yet.

The natural bounds over the causal effects due to \cite{robins1989analysis,manski1990nonparametric} were developed with a specific focus on pairs of variables or in studies with imperfect compliance and instrumental variable assumptions. Recently their proof technique have motivated several general works extending these bounds to arbitrary causal diagrams. This was demonstrated recently by \citep{zhang2020designing,zhang2019near} in which the authors extended earlier bounding strategies to estimate system dynamics in sequential decision-making settings and causal effects in more general graphs. Our work could be interpreted to lie in this line of research, namely extending the natural bounding technique to systems characterized by Partial Ancestral Graphs.

In the partial identification literature, another line of research was pioneered by the seminal work of \citep{balke1997bounds} that employs a polynomial optimization program to compute causal bounds and are provably optimal. They proposed a family of canonical models with finite unobserved states, which sufficiently represent all observations and consequences of interventions in instrumental variable models. Based on this canonical characterization, \citep{balke1997bounds} reduced the bounding problem to a series of equivalent linear programs. \citep{chickering1996clinician} further used Bayesian techniques to investigate the sharpness of these bounds with regard to the observational sample size. Recently, \citep{zhang2021partial,finkelstein2020deriving} describe a polynomial programming approach to solve the partial identification for general causal graphs. They generalize the canonical characterization of SCMs to arbitrary graphs, although require discrete endogenous variables with small support as the time complexity of their algorithm grows exponentially with the size of the support set of variables. In continuous settings, \citep{gunsilius2019bounds} extends the linear programming approach to partial identification of instrumental variable graphs with continuous treatments. Several recent works follow a similar approach: parameterizing causal effects as a linear combinations of a set of fixed basis functions \citep{padh2022stochastic} or neural networks \citep{balazadeh2022partial,hu2021generative} and subsequently match the (moments of the) observed distribution while minimizing and maximizing causal effects.

In applications, partial identification has been used in reinforcement learning for the estimation of dynamic treatment regimes \citep{zhang2020designing,zhang2019near}, for estimating policies under safety constraints \citep{joshi2024towards}, and within bandit algorithms \citep{zhang2021bounding,bellot2024transportability}. And similarly in problems of fairness, for example by \cite{wu2019pc}.

\subsection{Limitations}
In this work, we start from the assumption that the true PAG that underlies a system of interest can be inferred from data. In general, this requires an assumption of faithfulness, \textit{i.e.} that the independencies in data imply a corresponding separation in the underlying causal diagram, and an oracle for testing for conditional independencies. Learning the true PAG from finite data can be a significant challenge in practice \citep{spirtes2000causation,robins2003uniform,zhang2012strong,bellot2022discovery,bellot2021deconfounded}. In higher-dimensional systems, the computational complexity of estimating the conditional distributions that define lower and upper bounds on causal effects is another substantial challenge. In light of this, it is important to make the distinction between the task of partial identification, that is inferring an expression to bound causal effects, and that of causal effect estimation, that is providing efficient estimators from finite samples to compute bounds in practice. This set of results is concerned with the first task (partial identification). The objective of our procedure is to decide whether the effect can be bounded and provide an expression for lower and upper bounds, while being agnostic as to whether $P(\V)$ can be accurately estimated from the available samples. Several works consider the efficient estimation of identifiable causal effects \citep{jung2021estimating}. Extending these techniques to the problem of bounding non-identifiable causal effects is an important direction for future work. Finally, we emphasize that simulations on real and synthetic data are provided for illustration purposes only. These results do not recommend or advocate for the implementation of a particular intervention, and should be considered in practice in combination with other aspects of the decision-making process. 
\newpage
\section{Proofs}
\label{sec:app_proofs}

\textbf{\Cref{prop:decomposition} restated}. \textit{Given a PAG $\1P$ over $\V$ and $\A\subset\C\subseteq\V$, let the region of $\A$ with respect to $\C$ be denoted $\1R_\A$. $Q[\C]$ can be decomposed as,
    \begin{align}
        Q[\C] = Q[\1R_\A] \cdot Q[\1R_{\C\backslash\A}] \hspace{0.1cm}/\hspace{0.1cm} Q[\1R_\A\inter\1R_{\C\backslash\A}].
    \end{align}
}
    
\begin{proof}
    The proof can be found in \cite[Thm.1]{jaber2019causal}.
\end{proof}

\textbf{\Cref{prop:duality} restated}. \textit{Let $\1P$ be the PAG underlying $P(\V)$. Under faithfulness, a causal effect is partially identifiable from $P(\V)$ with bound $[a,b]$ if and only if it is partially identifiable from $\1P$ and $P(\V)$ with bound $[a,b]$.}

\begin{proof}
    Let $\1P$ be the PAG underlying a given distribution $P(\V)$. Let $\3M$ be the set of all SCMs over a set of endogenous variables $\V$, and let $\3M(\1P)$ be the subset of SCMs over a set of endogenous variables $\V$ whose induced causal diagrams are consistent with the PAG $\1P$. The partial identification problem from $P(\V)$ (\Cref{def:partial_identification}) may be stated as follows,
    \begin{align}
        \underset{\M \in \3M}{\text{min / max}}\hspace{0.1cm} P_\M (\y_{\x}), \quad \text{such that}\quad P_\M(\v)=P(\v).
    \end{align}
    Similarly, the partial identification problem from the PAG $\1P$ and $P(\V)$ (\Cref{def:partial_identification_pag}) may be stated as follows,
    \begin{align}
        \underset{\M \in \3M(\1P)}{\text{min / max}}\hspace{0.1cm} P_\M (\y_{\x}), \quad \text{such that}\quad P_\M(\v)=P(\v).
    \end{align}
    These definitions highlight that the bounding problem involves a search over structural models consistent with the observational distribution $P(\V)$, and optionally $\1P$. We will show that under faithfulness, $\{\M\in\3M: P_\M(\v)=P(\v)\} = \{ \M\in\3M(\1P): P_\M(\v)=P(\v)\}$. In that case, the optimization problems coincide and therefore their solutions coincide.
    
    To see this note that $\{ \M\in\3M(\1P): P_\M(\v)=P(\v)\} \subseteq \{ \M\in\3M: P_\M(\v)=P(\v)\}$ by definition since $\3M(\1P)$ introduces a restriction on the space $\3M$. Under faithfulness, consider any SCM $\M\in\3M$ such that $P_\M(\v)=P(\v)$. It follows then that,
    \begin{align*}
        (\X\indep\Y\mid\Z)_{P_\M} \Leftrightarrow (\X\indep\Y\mid\Z)_{\G_\M}.
    \end{align*}
    In other words, under faithfulness, the graph $\G_\M$ entails a $d$-separation for every conditional independence in the data, and vice versa. $\G_\M$ must then be included in the set of diagrams represented by the PAG $\1P$ as the PAG is defined as the set of diagrams with $d$-separation statements match the conditional independencies in data, that is $\1M \in \3M(\1P)$. As $\1M$ was arbitrary (up to agreement with the observational distribution), we have that, under faithfulness, $\{ \M\in\3M: P_\M(\v)=P(\v)\} \subseteq \{ \M\in\3M(\1P): P_\M(\v)=P(\v)\}$. This implies then that $\{\M\in\3M: P_\M(\v)=P(\v)\} = \{ \M\in\3M(\1P): P_\M(\v)=P(\v)\}$ showing the claim.
\end{proof}

Next we introduce two utility lemmas that will be useful in the derivation of the following proofs.

\begin{lemma}
    \label{lemma:subset_scm}
    Given a PAG $\1P$ over $\V$, let $\C\subset\V$. Then,
    \begin{align}
        \Big\{Q[\C;\M]: \M \in \3M(\1P)\Big\} =  \Big\{Q[\C;\M]: \M \in \3M(\1P_{\widetilde{\V\backslash\C}})\Big\}.
    \end{align}
\end{lemma}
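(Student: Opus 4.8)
The plan is to prove the set equality by establishing the two inclusions separately, exploiting the fact that $Q[\C]=P_{\v\backslash\c}(\c)$ is an interventional quantity whose $do(\cdot)$ targets every variable outside $\C$, so its value cannot depend on any edge pointing \emph{into} $\V\backslash\C$. Since the operation $\widetilde{\V\backslash\C}$ modifies a PAG only by turning edges into $\V\backslash\C$ into bi-directed edges, the whole argument reduces to showing that this modification never changes the set of achievable values of $Q[\C]$.

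First I would isolate the structure on which $Q[\C]$ actually depends. For any SCM $\M$ with induced diagram $\G_\M$, evaluating $P_{\v\backslash\c}(\c)$ amounts to working in the model mutilated by $do(\v\backslash\c)$, whose diagram $(\G_\M)_{\overline{\V\backslash\C}}$ deletes every edge into $\V\backslash\C$. The key observation is that the only edges altered by $\widetilde{\V\backslash\C}$ are exactly those into $\V\backslash\C$, which are then removed by the upper-manipulation $\overline{\V\backslash\C}$; checking the cases (edges among $\V\backslash\C$, edges $\C\rightarrow\V\backslash\C$, bi-directed edges incident to $\V\backslash\C$, versus edges within $\C$ and edges $\V\backslash\C\rightarrow\C$) gives $(\widetilde{\G_\M})_{\overline{\V\backslash\C}}=(\G_\M)_{\overline{\V\backslash\C}}$. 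Equivalently, $\1P$ and $\1P_{\widetilde{\V\backslash\C}}$ impose identical constraints on the edges within $\C$, on the edges out of $\V\backslash\C$ into $\C$, and on the confounders incident to $\C$ — that is, on everything that governs how $\C$ is generated once $\V\backslash\C$ is set. Hence $Q[\C]$ is a functional of this common sub-structure together with the mechanisms $\{f_X:X\in\C\}$ and the relevant exogenous distribution.

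With this in hand, the inclusion $(\subseteq)$ follows by taking any $\M\in\3M(\1P)$ and constructing $\M'$ that keeps the mechanisms and exogenous distribution relevant to $\C$ identical, but redefines the mechanisms of $\V\backslash\C$ so that its induced diagram is $\widetilde{\G_\M}$; matching mutilated submodels gives $Q[\C;\M']=Q[\C;\M]$, and $\widetilde{\G_\M}$ is represented by $\1P_{\widetilde{\V\backslash\C}}$ because $\G_\M$ is represented by $\1P$ and $\widetilde{\cdot}$ acts compatibly on diagrams and on their PAG abstraction. The reverse inclusion $(\supseteq)$ is symmetric: starting from $\M'\in\3M(\1P_{\widetilde{\V\backslash\C}})$, I would reinstate directed edges into $\V\backslash\C$ in accordance with some diagram of $\1P$, absorbing any bi-directed confounder between $\C$ and $\V\backslash\C$ into private exogenous noise on its $\C$-endpoint, which again leaves the $\C$-generating submodel, hence $Q[\C]$, intact.

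The step I expect to be the main obstacle is the edge-type bookkeeping in these two constructions. Converting a directed edge $X\rightarrow V'$ with $X\in\C$ and $V'\in\V\backslash\C$ into a bi-directed edge $X\dashleftarrow\dashrightarrow V'$ introduces a shared exogenous variable and so formally perturbs $f_X$, while the reverse direction must re-route that confounder. I would resolve this by arguing that, under $do(\v\backslash\c)$, any such confounder acts purely as private noise on its $\C$-endpoint, so that adding or removing it can neither enlarge nor shrink the achievable set of $Q[\C]$. The remaining care is to verify that each constructed diagram genuinely lies in the Markov equivalence class represented by the target PAG, which reduces to checking that the invariant marks and non-ancestral relations away from the edges into $\V\backslash\C$ are preserved by both transformations; once these compatibility checks are in place, the two inclusions give the claimed equality.
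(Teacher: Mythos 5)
Your proposal is correct and takes essentially the same approach as the paper: both arguments rest on the fact that $Q[\C]=P_{\v\backslash\c}(\c)$ is determined solely by the mechanisms and exogenous inputs of (the ancestors of) $\C$, with everything else intervened upon, so the operation $\widetilde{\V\backslash\C}$ — which only deletes or bi-directs edges into $\V\backslash\C$ — cannot alter the set of achievable values. The paper phrases this through the truncated-factorization representation of $Q[\C]$ while you phrase it through the mutilated-graph identity $(\widetilde{\G_\M})_{\overline{\V\backslash\C}}=(\G_\M)_{\overline{\V\backslash\C}}$ and explicit two-sided model constructions, but the key point — that confounders between $\C$ and $\V\backslash\C$ act as private noise on their $\C$-endpoint under $do(\v\backslash\c)$ — is exactly the one the paper's proof relies on (and your handling of it is, if anything, more explicit).
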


\begin{proof}
    For a given causal diagram $\G$, a causal effect of interest $P_{\x}(\y)$ can be written, 
    \begin{align}
    \label{eq:param_effect}
        P_{\x}(\y) = \sum_{\v\backslash \{\x\cup\y\}}\int_{\Omega_{\U}} \prod_{V \in \V\backslash \X} P(v \mid \pa_V,\boldsymbol u_V) dP(\boldsymbol u) = \sum_{\s \backslash \{\y\}}\int_{\Omega_{\U}} \prod_{V\in\S} P(v \mid \pa_V,\boldsymbol u_V)  dP(\boldsymbol u),\nonumber
    \end{align}
    where $\S = \An(\Y)_{\G_{\overline{\X}}} \backslash \X$ and $\s$ is some value in the domain of $\S$. This expression depends only on probabilities associated with variables $\S$, $\U_S, S\in\S$ and their functional dependencies through terms $P(v \mid \pa_V,\boldsymbol u_V)$ that in turn are determined by the underlying SCMs compatible with the graph and data, which in this case are parameterized by $\{f_V:V\in\S\}$ and $\{P(u_V):V\in An(\S)_{\G[\C]}\}$ where $\C$ is the $c$-component in $\G$ that contains $\S$. The distribution and values of any other variable is of no consequence to the desired causal effect. In particular, any descendants of $\Y$ in $\G$ can be marginalized out without loss of generality.
    
    The same reasoning applies to $Q[\C]:= P_{\v\backslash\c}(\c)$ which depends only on $\An(\C)_{\G_{\overline{\V\backslash\C}}} \backslash (\V\backslash\C)$ and the functional dependencies of associated variables. Recall that $\1P_{\widetilde{\V\backslash\C}}$ denotes the graph in which all edges that are visible in $\1P$ and are into variables in $\V\backslash\C$ are deleted, and in which all invisible edges that are into variables in $\V\backslash\C$ are replaced with bi-directed edges. Since $\1P_{\widetilde{\V\backslash\C}}$ only modifies the functional assignment of descendants of $\C$ and these do not influence the causal effect of interest we have that,
    \begin{align}
        \Big\{Q[\C;\M]: \M \in \3M(\1P)\Big\} =  \Big\{Q[\C;\M]: \M \in \3M(\1P_{\widetilde{\V\backslash\C}})\Big\}.
    \end{align}
\end{proof}

This proposition implies that it is sufficient to consider the set of causal diagrams compatible with $\1P_{\widetilde{\V\backslash\C}}$, i.e. the set of ME causal diagrams $\G$ in which edges $X\rightarrow Y, X\in pa(\C)_{\G}, Y \in \C$ have been removed, to characterize lower and upper bounds over queries of the form $Q[\C]$ from $\1P$. The next lemma shows how to compute quantities $Q[\cdot]$ from larger ones and is an extension of an analogous results defined for causal diagrams \cite{tian2002general}.


\begin{lemma}
    \label{lem:ancestor_id}
    Let $\W\subseteq\C\subset\V$ such that $\W = \texttt{PossAn}(\W)_{\1P_{\C}}$. Then, $Q[\W] = \sum_{\c\backslash\w}Q[\C]$.
\end{lemma}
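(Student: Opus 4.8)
The plan is to unfold the definitions of the interventional quantities and reduce the claimed identity to a single application of Rule~3 of the $do$-calculus (\Cref{thm:do_calculus}). Writing $Q[\C] = P_{\v\backslash\c}(\c)$ and $Q[\W] = P_{\v\backslash\w}(\w)$, marginalizing the first over $\c\backslash\w$ gives $\sum_{\c\backslash\w} Q[\C] = P_{\v\backslash\c}(\w)$, so the lemma is equivalent to the assertion $P_{\v\backslash\w}(\w) = P_{\v\backslash\c}(\w)$. Since $\W\subseteq\C$, the index set of the left-hand intervention decomposes as the disjoint union $\V\backslash\W = (\V\backslash\C)\cup(\C\backslash\W)$, so the identity says that adding the intervention $do(\c\backslash\w)$ on top of $do(\v\backslash\c)$ leaves the distribution of $\W$ unchanged. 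I would prove this for the causal diagram $\G$ of an arbitrary SCM compatible with $\1P$; since the value of $Q[\cdot]$ is fixed by the SCM, establishing the identity for every such $\G$ establishes it at the level of $\1P$.

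By Rule~3 (insertion/deletion of interventions), the equality $P_{(\v\backslash\c),(\c\backslash\w)}(\w) = P_{\v\backslash\c}(\w)$ holds provided $(\W\indep\C\backslash\W)_{\G_{\overline{(\V\backslash\C),\,(\C\backslash\W)}}}$. The rule's qualifier restricting to the elements of the deleted intervention that are non-ancestors of the conditioning set is vacuous here, since the conditioning set is empty; hence all of $\C\backslash\W$ is manipulated, and because $(\V\backslash\C)\cup(\C\backslash\W)=\V\backslash\W$ the relevant graph is $\G_{\overline{\V\backslash\W}}$, in which every edge (directed or bidirected) carrying an arrowhead into a node of $\V\backslash\W$ is removed. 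It therefore remains to verify the separation $(\W\indep\C\backslash\W)_{\G_{\overline{\V\backslash\W}}}$.

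The key observation is that in $\G_{\overline{\V\backslash\W}}$ every node outside $\W$ has no incoming edge of any kind, so any path leaving such a node must do so through an outgoing tail. Tracing a putative $d$-connecting path between $\C\backslash\W$ and $\W$ from its endpoint $Z\in\C\backslash\W$, the first edge must be $Z\rightarrow X_1$; its head $X_1$ receives an arrowhead, which in $\G_{\overline{\V\backslash\W}}$ is possible only if $X_1\in\W$. Thus the path is forced to enter $\W$ through a directed edge $Z\rightarrow X_1$ with $Z\in\C\backslash\W$ and $X_1\in\W$, an edge lying entirely within $\C$ and placing $Z$ among the ancestors of $X_1$ inside $\G_{\C}$. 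I expect this to be the main obstacle, and it is dispatched by the ancestral hypothesis together with the invariance of PAG edge marks: the condition $\W = \texttt{PossAn}(\W)_{\1P_\C}$ says that no node of $\C\backslash\W$ is a possible ancestor of $\W$ in $\1P_\C$, and any edge oriented $Z\rightarrow X_1$ in a compatible $\G$ must be potentially directed in $\1P_\C$ (an arrowhead at $Z$ in $\1P$ would be invariant and hence present in every member, contradicting the tail at $Z$ in $\G$), so possible-ancestral closure in $\1P_\C$ forces actual-ancestral closure of $\W$ in $\G_\C$ for every compatible $\G$. The forbidden edge cannot occur, the separation holds, and Rule~3 yields $P_{\v\backslash\w}(\w) = P_{\v\backslash\c}(\w)$, which is the claim.
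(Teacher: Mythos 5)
Your proof is correct, but it is a more self-contained route than the paper's, which is essentially two citations. The paper invokes \cite[Prop.~1]{jaber2018causal} to conclude that, since no node of $\C\backslash\W$ is a possible ancestor of $\W$ in $\1P_\C$, no node of $\C\backslash\W$ is an ancestor of $\W$ in \emph{any} compatible diagram $\G_\C$; it then applies \cite[Lemma~3]{tian2003identification}, the standard $c$-factor marginalization result for ancestrally closed sets, to get $Q[\W]=\sum_{\c\backslash\w}Q[\C]$ in each such diagram. You re-derive both ingredients from first principles: your Rule~3 argument in $\G_{\overline{\V\backslash\W}}$ is in effect a proof of Tian's lemma, and your mark-invariance argument (a tail at $Z$ in the MAG projection of $\G$ rules out an invariant arrowhead at $Z$ in $\1P$) proves exactly the instance of Jaber's proposition you need, namely the direct-edge case, which suffices because your path-tracing reduces any putative $d$-connection to its first edge. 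Both proofs pivot on the same key fact --- possible-ancestral closure in $\1P_\C$ forces actual ancestral closure of $\W$ in every compatible $\G_\C$ --- so the conceptual content coincides; what your version buys is independence from the two external results and an explicit view of the do-calculus mechanics, at the cost of length, while the paper's version buys brevity by reusing machinery it needs elsewhere anyway.

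Two small imprecisions, neither fatal. First, Rule~3 as stated in \Cref{thm:do_calculus} requires the separation conditional on the baseline intervention set, i.e. $(\W\indep\C\backslash\W\mid\V\backslash\C)_{\G_{\overline{\V\backslash\W}}}$, whereas you verify only the unconditional separation. This is harmless here because your argument actually establishes something stronger: every $Z\in\C\backslash\W$ is \emph{isolated} in $\G_{\overline{\V\backslash\W}}$ (any surviving edge at $Z$ must point out of $Z$ and into a node that can still receive arrowheads, hence must be $Z\rightarrow X_1$ with $X_1\in\W$, and these edges are exactly what your closure argument excludes), and an isolated node is $d$-separated from everything under any conditioning set. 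You should state this explicitly. Second, you implicitly use that adjacency of $Z$ and $X_1$ in $\G$ entails adjacency in $\1P$ (and hence in the induced subgraph $\1P_\C$); this holds because two nodes joined by an edge in $\G$ cannot be $d$-separated by any subset of $\V$, so they remain adjacent in the MAG projection and therefore in the PAG, but the step deserves a sentence since your invariance argument is about the mark on an edge whose existence in $\1P$ must first be secured.
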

    
\begin{proof}
    By \cite[Prop. 1]{jaber2018causal}, if $X$ is an ancestor of $Y$ in a causal diagram $\G_{\C}$, then $X$ is a possible ancestor of $Y$ in the PAG $\1P_{\C}$. By the converse of this implication, if $X$ is not a possible ancestor of $Y$ $\1P_{\C}$, then $X$ is not an ancestor of $Y$ in $\G_\C$. Let $\W\subseteq\C\subset\V$ such that $\W = \texttt{PossAn}(\W)_{\1P_{\C}}$. By \cite[Prop. 1]{jaber2018causal} therefore, no variable in $\R = \C\backslash\W$ being a possible ancestor of $\W$ in $\1P_{\C}$ implies that no variable in $\R = \C\backslash\W$ is an ancestor of $\W$ in any ME causal diagram $\G_{\C}$. For any causal diagram $\G_{\C}$, by \cite[Lemma 3]{tian2003identification} $Q[\W] = \sum_{\r}Q[\C]$. Moreover, if $Q[\C]$ could be uniquely computed from $P(\V)$ and $\1P$, the $Q[\W]$ could be uniquely computed from $P(\V)$ and $\1P$.
\end{proof}

We are now ready to prove the validity of lower and upper bounds.

\textbf{\Cref{prop:lowerbound} restated}. 
    \textit{Given a PAG $\1P$, consider sets $\S \subset \C \subseteq \V$ and define $\W = \texttt{PossAn}(\S)_{\1P_\C}$, $\R = \W \backslash \S$, and $\T=\texttt{PossSp}(\S)_{\1P_\C}\backslash \S$. Let $\A,\B$ partition $\R$ such that $\B = \texttt{PossDe}(\T)_{\1P_\C}\inter\R, \A=\R\backslash\B$. $Q[\S]$ is lower bounded as follows:
    \begin{align}
        \label{eq:lowerbound_app}
        Q[\S] \geq  \max_\z \frac{Q[\W]}{\sum_{\s,\b} Q[\W]},
    \end{align}
    where $\Z = \texttt{PossPa}(\W)_{\1P} \backslash \texttt{PossPa}(\S)_{\1P}$.}

\begin{proof}
    If $Q[\S]$ is not identifiable given $\1P$, then $Q[\S]$ is not identifiable in one or more ME causal diagrams $\G$ by \cite[Thm. 4]{jaber2019causal}. In each of those diagrams $\G$, then there must exist an open backdoor path from a node in $\V\backslash\S$ to a node in $\S$ that could be blocked with access to a set of unobserved confounders $\U$. Let $\U$ be the union of exogenous variables that block such open backdoor paths. In turn, let $\S\subset\C$, where $\C$ is a $pc$-component in a sub-graph of $\1P$ with $Q[\C]$ identifiable. Following the statement of the proposition, let $\W = \texttt{PossAn}(\S)_{\1P_\C}$, $\R = \W \backslash \S$, and $\T=\texttt{PossSp}(\S)_{\1P_\C}\backslash \S$. Further, let $\A,\B$ partition $\R$ such that $\B = \texttt{PossDe}(\T)_{\1P_\C}, \A=\R\backslash\B$. Without loss of generality, by \Cref{lemma:subset_scm} inference on $Q[\S]$ given $\1P$ is equivalent to inference on $Q[\S]$ given $\1P_{\widetilde{\V\backslash\S}}$. 
    
    To show the claim, we show that $Q[\S]$ is lower bounded in every causal diagram compatible with $\1P_{\widetilde{\V\backslash\S}}$. Let $\G$ be any such causal diagram. In light of the definitions above, it holds that,
    \begin{enumerate}
        \item $\R$ is $d$-separated from $\S$ conditioned on $\U$ in $\G_{\overline{\V\backslash\W}, \underline{\R}}$,
        \item $\U$ is exogenous and thus $d$-separated from $\R$ conditioned on $\V\backslash\W$ in $\G_{\overline{\V\backslash \W, \R}}$,
        \item  $\A$ is $d$-separated from $\U$ conditioned on $\V\backslash\W$ in $\G_{\overline{\V\backslash\W}}$. 
    \end{enumerate}
    Condition (1) states that all backdoor paths from $\R$ to $\S$, \textit{i.e.} those starting with an edge $R \leftarrow \cdots$ or $R \dashleftarrow\dashrightarrow \cdots, R\in\R$, are blocked conditioned on $\U$. To see this, note that $\U$ is chosen to block all backdoor paths through an unobserved confounder and that any other backdoor paths are assumed away in $\1P_{\widetilde{\V\backslash\S}}$, \textit{i.e.} there is no edge of the form $X \rightarrow Y, X \in \S, Y\in \V\backslash\S$ in any causal diagram compatible with $\1P_{\widetilde{\V\backslash\S}}$. Condition (2) holds because $\U$ is exogenous; in $\G_{\overline{\V\backslash \W, \R}}$ no open path between $\R$ and $\U$ conditioned on $\V\backslash \W$ could exist. Condition (3) holds because $\A$ is defined precisely as the set of variables in $\R$ that are not descendants of $\T$ and thus that are not descendants of $\U$; any path between $\A$ and $\U$ is therefore blocked by a child or descendant of $\U$ that acts as a collider on the path.
    
    The following derivation applies to $Q[\S]$ in $\G$.
    \begin{align*}
        Q[\S] &:= P(\s \mid do(\v\backslash \s)) \\
        &= P(\s \mid do(\r, \v\backslash \w))\\
        &\stackrel{(1)}{=} \sum_{\u} P(\s \mid \u, \r, do(\v\backslash \w)) P(\u \mid do(\r, \v\backslash \w))\\
        &\stackrel{(2)}{=} \sum_{\u} P(\s \mid \u, \r, do(\v\backslash \w)) P(\u \mid do(\v\backslash \w))\\
        &\stackrel{(3)}{=} \sum_{\u} P(\s \mid \u, \a, \b, do(\v\backslash \w)) P(\u \mid \a, do(\v\backslash \w))\\
        &\geq \sum_{\u} P(\s \mid \u, \a, \b, do(\v\backslash \w)) P(\u, \b \mid \a, do(\v\backslash \w)) \\
        &= P(\s, \b \mid \a, do(\v\backslash \w))\\
        &= \frac{P(\w \mid do(\v\backslash \w))}{\sum_{\s,\b} P(\w \mid do(\v\backslash \w))}\\
        &= \frac{Q[\W]}{\sum_{\s,\b} Q[\W]}.
    \end{align*}
    (1) follows from condition 1; (2) follows from condition 2; (3) follows from condition 3. The inequality follows from the fact that the event $\{\u,\b\}$ is less likely than event $\{\u\}$ under any probability mass function. Finally, $Q[\S]$ is a function of $Pa(\S)_{\G}$ only and thus this bound holds for any value of $\Z := Pa(\W)_{\G} \backslash Pa(\S)_{\G}$ and therefore any $\Z = \texttt{PossPa}(\W)_\1P \backslash \texttt{PossPa}(\S)_\1P$. In particular,
    \begin{align*}
        Q[\S] \geq \max_{\z} \frac{Q[\W]}{\sum_{\s,\b} Q[\W]}.
    \end{align*}
\end{proof}

\textbf{\Cref{prop:upperbound} restated}. 
    \textit{Given a PAG $\1P$, consider sets $\S \subset \C \subseteq \V$ and let a partial topological ordering $\S$ be $\S_1 \prec \cdots \prec \S_k$. Define $\W = \texttt{PossAn}(\S)_{\1P_\C}$, $\R = \W \backslash \S$, $\T=\texttt{PossSp}(\S)_{\1P_\C}\backslash \S$, and $\T=\texttt{PossSp}(\S)_{\1P_\C}\backslash \S$. Let $\A,\B$ partition $\R$ such that $\B = \texttt{PossDe}(\T)_{\1P_\C}\inter\R, \A=\R\backslash\B$. $Q[\S]$ is upper bounded as follows:
    \begin{align}
        Q[\S] \leq \min_{\z} \left\{\frac{Q[\W]}{\sum_{\s,\b} Q[\W]} - \sum_{\s_k}\frac{Q[\W]}{\sum_{\s,\b} Q[\W]} \right\} + Q[\S\backslash \S_k],
    \end{align}
    where $\Z = \texttt{PossPa}(\W)_{\1P} \backslash \texttt{PossPa}(\S)_{\1P}$.
}

\begin{proof}
    Similarly to the proof of \Cref{prop:lowerbound}, to show the claim we show that $Q[\S]$ is upper bounded in every causal diagram compatible with $\tilde{\1P}$ and rely on facts (1,2,3) above. Let $\G$ be any such causal diagram. Let a partial topological ordering of $\S$ be $\S_1 \prec \cdots \prec \S_k$.
    
    It then holds that,
    \begin{align*}
        &Q[\S] := P(\s \mid do(\v\backslash \s)) \\
        &= P(\s \mid do(\r, \v\backslash \w))\\
        &= \sum_{\u} P(\s \mid \u, \r, do(\v\backslash \w)) P(\u \mid do(\r, \v\backslash \w)) \\
        &= \sum_{\u} P(\s \mid \u, \r, do(\v\backslash \w)) P(\u \mid do(\v\backslash \w)) \\
        &= \sum_{\u} P(\s \mid \u, \a, \b, do(\v\backslash \w)) P(\u \mid \a, do(\v\backslash \w)) \\
        &= \sum_{\u} P(\s \mid \u, \a, \b, do(\v\backslash \w)) \Big(P(\u, \b \mid \a, do(\v\backslash \w)) + P(\u \mid \a, do(\v\backslash \w))- P(\u, \b \mid \a, do(\v\backslash \w)\Big) \\
        &\stackrel{(1)}{=} \frac{Q[\W]}{\sum_{\s,\b} Q[\W]} + \sum_{\u} P(\s \mid \u, \a, \b, do(\v\backslash \w)) \Big(P(\u \mid \a, do(\v\backslash \w))- P(\u, \b \mid \a, do(\v\backslash \w)\Big) \\
        &\stackrel{(2)}{\leq} \frac{Q[\W]}{\sum_{\s,\b} Q[\W]} + \sum_{\u} P(\s \backslash \s_k \mid \u, \a, \b, do(\v\backslash \w)) \Big(P(\u \mid \a, do(\v\backslash \w))- P(\u, \b \mid \a, do(\v\backslash \w)\Big) \\
        &\stackrel{(3)}{=} \frac{Q[\W]}{\sum_{\s,\b} Q[\W]} + \sum_{\u} P(\s \backslash \s_k \mid \u, \a, \b, do(\v\backslash \w, \s_k)) P(\u \mid \a, do(\v\backslash \w, \s_k)) \\
        & - \sum_{\u, \s_k} P(\s \mid \u, \a, \b, do(\v\backslash \w))P(\u, \b \mid \a, do(\v\backslash \w)) \\
        &\stackrel{(4)}{=} \frac{Q[\W]}{\sum_{\s,\b} Q[\W]} + P(\s\backslash \s_k \mid \a, do(\v\backslash \w, \b, \s_k)) - \sum_{\s_k}\frac{Q[\W]}{\sum_{\s,\b} Q[\W]}\\
        &\stackrel{(5)}{=} \frac{Q[\W]}{\sum_{\s,\b} Q[\W]} + P(\s\backslash \s_k \mid do(\v\backslash \w, \a, \b, \s_k)) - \sum_{\s_k}\frac{Q[\W]}{\sum_{\s,\b} Q[\W]}\\
        &\stackrel{(6)}{=} \frac{Q[\W]}{\sum_{\s,\b} Q[\W]} + Q[\S\backslash \S_k] - \sum_{\s_k}\frac{Q[\W]}{\sum_{\s,\b} Q[\W]}.
    \end{align*}
    The first four equalities follows from the observations (1,2,3) as in the derivation of the lower bound (\Cref{prop:lowerbound}); (1) follows from the derivation in the lower bound; (2) follows from the fact that the event $\{\s\}$ is less likely than event $\{\s\backslash\s_k\}$ under any probability mass function and that the difference in brackets is greater or equal to zero; (3) follows by rule 3 of the do-calculus (\Cref{thm:do_calculus}) since $\S_k$ is $d$-separated from $\S\backslash \S_k$ given $\U,\A,\B,\V\backslash\W$ in $\G_{\overline{\S_k,\V\backslash\W}}$ and since $\S_k$ is $d$-separated from $\U$ given $\A$ in $\G_{\overline{\S_k,\V\backslash\W}}$; (4) follows by marginalizing out $\U$ and similarly to (1); (5) follows by the rule 2 of do-calculus (\Cref{thm:do_calculus}) since $\S\backslash\S_k$ is $d$-separated from $\A$ in $\G_{\overline{V\backslash W, \B, \S_k}\underline{A}}$; (6) follows by the definition of $Q[\cdot]$.

    Similarly $P(\s \mid do(\v\backslash \s)) $ is a function of $Pa(\S)_{\G}$ only and thus this bound holds for any value of $\Z := Pa(\W)_{\G} \backslash Pa(\S)_{\G}$. In particular,
    \begin{align*}
        Q[\S] \leq \min_{\z} \left\{\frac{Q[\W]}{\sum_{\s,\b} Q[\W]} - \sum_{\s_k}\frac{Q[\W]}{\sum_{\s,\b} Q[\W]} \right\} + Q[\S\backslash \S_k].
    \end{align*}
\end{proof}

\textbf{\Cref{prop:bounds_vs_natural} restated.} \textit{Consider a query $P_\x(\y)$ and let $\1P$ be the PAG over $\{\X,\Y\}$ compatible with $P$. Then, under an assumption of faithfulness, the bounds given in \Cref{prop:lowerbound,prop:upperbound} are at least as tight as the natural bounds.}

\begin{proof}
    Following the premise of the proposition, consider a query $P_\x(\y)$ and let $\1P$ be the PAG over $\V=\{\X,\Y\}$ compatible with $P$. $P_\x(\y) = Q[\Y]$ and therefore we will compare bounds on $Q[\Y]$ with the natural bounds.
    
    By \Cref{lem:ancestor_id}, for any $\W = \texttt{PossAn}(\S)_{\1P}$ it holds that $Q[\W] = \sum_{\v\backslash\w} Q[\V] = \sum_{\v\backslash\w} P(\v) = P(\w)$. Further, since $\W\subseteq \{\X,\Y\}$ it holds that $P(\w) \geq P(\x,\y)$. The proposed lower bound can then be shown to be larger or equal to the natural lower bound as
    \begin{align*}
        \max_{\z} \frac{Q[\W]}{\sum_{\s,\b} Q[\W]} \geq \frac{Q[\W]}{\sum_{\s,\b} Q[\W]}
        \geq Q[\W]
        \geq P(\x,\y).
    \end{align*}
    The last term being the expression of the natural lower bound.
    
    Let $\S=\Y, \R=\W\backslash\S$, and let $\S_k\subseteq \S$ be any subset of $\S$. Note that $\R\union(\V\backslash\W) = \X$ in this example. From the definition of $\W$ as the set of possible ancestors of $\Y$ in $\1P$, it holds that  $(\R \indep \V\backslash\W)_{\G_{\overline{\V\backslash\W}}}$ for any causal diagram $\G$ compatible with an arbitrary $\1P$ as there cannot be any directed paths from $\V\backslash\W$ to $\R$ (otherwise at least some element in $\V\backslash\W$ would be defined as a possible ancestor of $\S$ which we ruled out by the definition of $\W$). 
    
    Denote $\U$ the set of unobserved confounders. It holds then that conditioning on $\U$ blocks all backdoor paths from $\R$ to $\S$ in graphs $\G_{\underline{\V\backslash\S}}$ in which directed edges into $\S$ are removed, that is $(\S \indep \R \mid \U)_{\G_{\underline{\V\backslash\S}}}$. This holds for any causal diagram $\G$ compatible with an arbitrary $\1P$. With these facts, we consider the following derivation to show that the proposed upperbound in smaller or equal to the natural upper bound,
    \begin{align*}
        \min_{\z} \left\{\frac{Q[\W]}{\sum_{\s,\b} Q[\W]} - \sum_{\s_k}\frac{Q[\W]}{\sum_{\s,\b} Q[\W]} \right\}& + Q[\S\backslash \S_k]\\
        &\leq  \frac{Q[\W]}{\sum_{\s,\b} Q[\W]} - \sum_{\s_k}\frac{Q[\W]}{\sum_{\s,\b} Q[\W]} + Q[\S\backslash \S_k]\\
        &\stackrel{(1)}{\leq} Q[\W] - \sum_{\s_k}Q[\W] + Q[\S\backslash \S_k]\\
        &= P(\w) - P(\s\backslash\s_k, \r \mid do(\v\backslash\w)) + P(\s\backslash\s_k \mid do(\v\backslash\s, \s_k))\\
        &= P(\w) +\sum_{\u} P(\s\backslash\s_k \mid \u, \r, do(\v\backslash\w))\{P(\u) - P(\u,\r)\}\\
        &\stackrel{(2)}{\leq} P(\w) +\sum_{\u} \{P(\u) - P(\u,\r)\}\\
        &= \sum_{\v\backslash\w} \{P(\w, \v\backslash\w) -  P(\r,\v\backslash\w)\} + 1\\
        &\stackrel{(3)}{\leq} P(\w, \v\backslash\w) -  P(\r,\v\backslash\w) + 1\\
        &\stackrel{(4)}{=} P(\x, \y) -  P(\x) + 1.
    \end{align*}
    The last term being the expression of the natural lower bound. In the above, (1) holds since $Q[\W] - \sum_{\s_k}Q[\W] < 0$ and therefore multiplying by a number less than 1, namely $\sum_{\s,\b} Q[\W]$, results in a larger expression; (2) holds by a similar observation since $P(\u) - P(\u,\r)>0$ and $P(\s\backslash\s_k \mid \u, \r, do(\v\backslash\w))<1$; (3) holds since $P(\w, \v\backslash\w) -  P(\r,\v\backslash\w) < 0$; (4) holds by definition since $\W \union \V\backslash\V = \X\union\Y$ and $\R\union\V\backslash\W = \X$.
\end{proof}

\textbf{\Cref{pro:soundness_alg} restated}. 
    \textit{Partial IDP (\Cref{alg:partialid}) terminates and is sound.}

\begin{proof}
    The proof follows from the termination guarantee of IDP, and the soundness of IDP \citep{jaber2019causal} and \Cref{prop:lowerbound,prop:upperbound}.
    
    For the run time, let $n$ be the number of variables. Operations in the \texttt{PID} function of \Cref{alg:partialid}, such as computing $pc$-components or finding the set of possible ancestors or descendants (as done in Props. 3 and 4), could be done in $\mathcal O(n^2)$ time, e.g. with a Breadth-First Search algorithm. Line 10 in \texttt{PID} decomposes the input set $\boldsymbol D$ into at most $n$ subsets, each requiring a new call to \texttt{PID}. In turn, line 7 in \texttt{PID}, if triggered, will reduce the set $\boldsymbol V$ of size $n$ by at least one variable at the time resulting in at most $n$ additional separate calls to \texttt{PID}. Since line 7 might be triggered repeatedly for each decomposed $C$-factor in line 10, overall, \Cref{alg:partialid} requires $\mathcal O(n^2)$ calls to \texttt{PID} and consequently $\mathcal O(n^4)$ time to return the bounds.
\end{proof}

\textbf{\Cref{prop:expressiveness_leg} restated.} \textit{Given a PAG $\1P$, let $\1L$ be the set of ME LEGs. Then, $\3M(\1P) = \3M(\1L)$.}

\begin{proof}
Each causal graph is a coarse representation of an underlying SCM, in which the presence of an edge $X\rightarrow Y$ implies the potential presence of $X$ as an argument in the function $f_Y$. In turn the absence of an edge implies a constraint in the system, e.g. no edge $X\rightarrow Y$ implies that $X$ is not an argument of $f_Y$. In general therefore the addition of an edge results in a family of SCMs that is strictly more general. 

Each MAG can be converted to an LEG by replacing some of the bi-directed arrows by directed arrow. Bi-directed arrows exclude an ancestral relationship whereas directed arrows do not exclude unobserved confounding, a directed edge $X\rightarrow Y$ thus defines a strictly more general $f_Y$ in contrast with $X\dashleftarrow\dashrightarrow Y$. Since this is the only difference between a MAG and its LEG, it follows that the set of SCMs compatible with a given MAG is strictly contained in the set of SCMs contained with its LEG. This reasoning applied to every MAG in the equivalence class implies that $\3M(\1P) = \3M(\1L)$. 
\end{proof} 

\textbf{\Cref{prop:expressiveness_causal_graphs_in_leg} restated.} \textit{Given a PAG $\1P$, let $\1D$ be the set of ME MBD diagrams. Then, $\3M(\1P) = \3M(\1D)$.}

\begin{proof} For a given LEG $L\in\1L$, let $\mathbb G_L$ be a set of maximally bi-directed causal graphs compatible with $L$. Then, We proceed by showing that  $\M(\mathbb G_{L}) = \M(L)$.

Let $L$ be a given LEG, and write $\G_L$ for the causal graph with the same structure as $L$. Adding bi-directed edges, without breaking conditional independencies, leads to a graph $\G_L'$ with the same ancestral and conditional independencies as $L$. The difference between the constructed causal graph $\G_L'$ and $\G_L$ is that for at least one variable $X\in\V$, $f_{X,\G_L}(\pa_X, \boldsymbol u_X)$ while $f_{X,\G_L'}(\pa_X, \boldsymbol u_X, u_{X,Z})$, where $Z\in\Pa_X$. The class of functions $f_{X}$ defined by $\G_L$ is included in that defined by $\G_L'$, and therefore $\3M(\G_L) \subset \3M(\G_L')$. 

In general, multiple graphs $\G_L'$ in which we repeatedly add bi-directed edges until no invisible edges exist can be constructed from a single LEG $L$. For example, the LEG $L:= \{Y_1 \leftarrow X \rightarrow Y_2\}$ has two graphs can be constructed by adding bi-directed edges without removing statistical independencies: $\G_1:=\{Y_1 \leftarrow X \rightarrow Y_2, Y_1 \dashleftarrow\dashrightarrow X\}$ and $\G_2:=\{Y_1 \leftarrow X \rightarrow Y_2, X \dashleftarrow\dashrightarrow Y_2\}$. These graphs can be recovered exactly by \Cref{alg:causal_graphs_from_LEG}. Let $\mathbb G_L$ be the set of all maximally bi-directed graphs compatible with $L$. Then, since any other causal graph compatible with $L$ defines a family of SCMs which is subsumed in that of a maximally directed causal graph, $\3M(\mathbb G_{L}) = \3M(L)$.
\end{proof}

\textbf{\Cref{prop:nonredundancy_causal_graphs_in_leg} restated.}
\textit{Given a PAG $\1P$, let $\G$ and $\1H$ be two MBD diagrams constructed from a ME LEG. In general, $\3M(\G)\not\subseteq \3M(\1H)$ and $\3M(\1H)\not\subseteq \3M(\1G)$.}

\begin{proof}
We give explicit counterexamples to demonstrate this fact.

\begin{figure}[t]
\centering
\hfill
\begin{subfigure}[t]{0.30\linewidth}\centering
  \begin{tikzpicture}[SCM,scale=1]
        \node (X) at (0,0) {$X$};
        \node (Y1) at (1,-0.7) {$Y_1$};
        \node (Y2) at (1,0.7) {$Y_2$};

        \path [->] (X) edge (Y1);
        \path [->] (X) edge (Y2);
    \end{tikzpicture}
\caption{$L$}
\label{fig:app_examples4:a}
\end{subfigure}\hfill
\begin{subfigure}[t]{0.30\linewidth}\centering
  \begin{tikzpicture}[SCM,scale=1]
        \node (X) at (0,0) {$X$};
        \node (Y1) at (1,-.7) {$Y_1$};
        \node (Y2) at (1,.7) {$Y_2$};

        \path [->] (X) edge (Y1);
        \path [->] (X) edge (Y2);
        \path [conf-path] (X) edge[out = -90, in=180] (Y1);
    \end{tikzpicture}
\caption{$\G_1$}
\label{fig:app_examples4:b}
\end{subfigure}
\hfill
\begin{subfigure}[t]{0.30\linewidth}\centering
  \begin{tikzpicture}[SCM,scale=1]
        \node (X) at (0,0) {$X$};
        \node (Y1) at (1,-.7) {$Y_1$};
        \node (Y2) at (1,.7) {$Y_2$};

        \path [->] (X) edge (Y1);
        \path [->] (X) edge (Y2);
        \path [conf-path] (X) edge[out = 90, in=180] (Y2);
    \end{tikzpicture}
\caption{$\G_2$}
\label{fig:app_examples4:c}
\end{subfigure}
\hfill\null
  \caption{First example of MBD causal diagrams used in the proof of \Cref{prop:nonredundancy_causal_graphs_in_leg}.}
  \label{fig:app_examples4}
\end{figure}

In general, the set of maximally bi-directed causal diagrams (MBD) compatible with a given LEG to consider for causal effect computation cannot be reduced without loss of generality. For instance, this could be shown for the computation of $P(y_1, y_2 \mid do(x))$ given the LEG $L$ in \Cref{fig:app_examples4:a}. Here two MBD causal diagrams could be constructed: $\G_1$ and $\G_2$ in \Cref{fig:app_examples4:b} and \Cref{fig:app_examples4:c} respectively. Given that $P(y_1, y_2 \mid do(x)) = P(y_1\mid do(x))P(y_2\mid do(x))$ and that in $\G_1$: $P(y_1\mid do(x)) \in [P(y_1, x), P(y_1, x) + 1 - P(x)], P(y_2\mid do(x) = P(y_2\mid x)$. (In this particular diagram, bounds could be derived analytically and are known to be provably tight \cite[Section 8.2]{pearl2009causality}. To further demonstrate this we provide below two SCMs compatible with $\G_1$ that evaluate to the upper and lower bounds respectively.). In contrast, in $\G_2$: $P(y_1\mid do(x) = P(y_1\mid x), P(y_2\mid do(x)) \in [P(y_2, x), P(y_2, x) + 1 - P(x)]$. The causal effect differs across $\G_1$ and $\G_2$. Neither of the bounds computed from $\G_1$ or $\G_2$ include the other and therefore both have to be considered for correctly bounding causal effects from $L$.

Below we give two SCMs compatible with $\G_1$ whose causal effect $P(y_1 \mid do(x))$ evaluate to the lower and upper bounds obtained through posterior sampling illustrating the tightness of the returned bounds. Let $\M_1, \M_2 \in \3M(\G_1)$ be defined by,
\begin{align*}
    \M_1 := \begin{cases}
    x := f_X(u) \\
    y_1 := \begin{cases} 
    f_{Y_1}(x, u) &\text{if } x = f_X(u), \\
    0 &\text{otherwise}. \\
    \end{cases}
    \end{cases}
\end{align*}
and,
\begin{align*}
    \M_2 := \begin{cases}
    x := f_X(u) \\
    y_1 := \begin{cases} 
    f_{Y_1}(x, u) &\text{if } x = f_X(u), \\
    1 &\text{otherwise}. \\
    \end{cases}
    \end{cases}
\end{align*}
Assume further that $P_{\M_1}(u)=P_{\M_2}(u)$. Then, both SCMs agree on observational distributions $P_{\M_1}(x, y_1)=P_{\M_2}(x, y_1)=P(x, y_1)$. However the following derivations show that the interventional distribution $P(y_1=1\mid do(x=1))$ differs across models: for $\M_1$ equal to the analytical lower bound, and for $\M_2$ equal to the analytical upper bound demonstrating that (in this case) the bound is tight. In particular,
\begin{align*}
    P_{\M_1}&(y_1=1\mid do(x=1)) \\
    &= P_{\M_1}(y_1=1\mid x=1, u: x=f_X(u))P(u: x=f_X(u)) + P_{\M_1}(y_1=1\mid x=1, u: x\neq f_X(u))P(u: x\neq f_X(u))\\
    &= P(y_1=1\mid x=1)P(x=1)\\
    &= P(y_1=1, x=1),\\
    P_{\M_2}&(y_1=1\mid do(x=1)) \\
    &= P_{\M_2}(y_1=1\mid x=1, u: x=f_X(u))P(u: x=f_X(u)) + P_{\M_2}(y_1=1\mid x=1, u: x\neq f_X(u))P(u: x\neq f_X(u))\\
    &= P(y_1=1\mid x=1)P(x=1) + P_{\M_2}(y_1=1\mid x=1, u: x\neq f_X(u))P(u: x\neq f_X(u))\\
    &= P(y_1=1, x=1) + 1 - P(x=1).
\end{align*}

\begin{figure}[t]
\centering
\hfill
\begin{subfigure}[t]{0.30\linewidth}\centering
  \begin{tikzpicture}[SCM,scale=1]
        \node (X) at (0,0) {$X$};
        \node (V1) at (1,-0.7) {$V_1$};
        \node (V2) at (1,0.7) {$V_2$};
        \node (Y) at (2,0) {$Y$};

        \path [->] (X) edge (V1);
        \path [->] (X) edge (V2);
        \path [->] (V1) edge (Y);
        \path [->] (V2) edge (Y);
    \end{tikzpicture}
\caption{$L$}
\label{fig:app_examples5:a}
\end{subfigure}\hfill
\begin{subfigure}[t]{0.30\linewidth}\centering
  \begin{tikzpicture}[SCM,scale=1]
        \node (X) at (0,0) {$X$};
        \node (V1) at (1,-0.7) {$V_1$};
        \node (V2) at (1,0.7) {$V_2$};
        \node (Y) at (2,0) {$Y$};

        \path [->] (X) edge (V1);
        \path [->] (X) edge (V2);
        \path [->] (V1) edge (Y);
        \path [->] (V2) edge (Y);
        \path [conf-path] (X) edge[out = -90, in=180] (V1);
    \end{tikzpicture}
\caption{$\G_1$}
\label{fig:app_examples5:b}
\end{subfigure}
\hfill
\begin{subfigure}[t]{0.30\linewidth}\centering
  \begin{tikzpicture}[SCM,scale=1]
        \node (X) at (0,0) {$X$};
        \node (V1) at (1,-0.7) {$V_1$};
        \node (V2) at (1,0.7) {$V_2$};
        \node (Y) at (2,0) {$Y$};

        \path [->] (X) edge (V1);
        \path [->] (X) edge (V2);
        \path [->] (V1) edge (Y);
        \path [->] (V2) edge (Y);
        \path [conf-path] (X) edge[out = 90, in=180] (V2);
    \end{tikzpicture}
\caption{$\G_2$}
\label{fig:app_examples5:c}
\end{subfigure}
\hfill\null
  \caption{Second example of MBD causal diagrams used in the proof of \Cref{prop:nonredundancy_causal_graphs_in_leg}.}
  \label{fig:app_examples5}
\end{figure}

This holds also more generally for single output causal effect of the form $P(y \mid do(x))$. For instance, bounding $P(y \mid do(x))$ given the LEG $L$ in \Cref{fig:app_examples5:a} requires two MBD causal diagrams without loss of generality, given in \Cref{fig:app_examples5:b} and \Cref{fig:app_examples5:c} respectively. It could be shown by writing $P(y \mid do(x)) = \sum_{v_1,v_2} P(y \mid x, v_1,v_2)P(v_1,v_2 \mid do(x))$ that the upper bounds computed from $\G_1$ and $\G_2$ will disagree as upper bounds for $P(v_1,v_2 \mid do(x))$ will differ (as shown in the example above).
\end{proof}

\newpage
\section{Additional Examples}
\label{sec:app_experiments}
We provide in this section additional examples to illustrate the proposed bounding procedure.

\begin{example} In this example we illustrate the quantities mentioned in \Cref{prop:lowerbound,prop:upperbound} by applying these propositions to bound $Q[Y]$ from $Q[W,X,Z,Y]=P(w,x,z,y)$ given the PAG $\1P$ in \Cref{fig:steps}. Using the notation in \Cref{prop:lowerbound,prop:upperbound}, we can write $\W = \texttt{PossAn}(Y)_{\1P_{W,X,Z,Y}} = \{W,X,Z,Y\}$ and $\R = \{W,X,Z,Y\}\backslash \{Y\} = \{W,X,Z\}$ which can be partitioned into $\A=\{W\}, \B=\{X,Z\}$ since $\{W\}$ is not a possible descendant of any possible spouse of $Y$ in $\1P$. $\texttt{PossPa}(\W) \backslash \texttt{PossPa}(Y) = \{W\}$. The lower bound then follows by replacing these variables in the statement of the lower bound to get,
\begin{align}
    Q[Y] \geq \max_{w} \frac{Q[W,X,Z,Y]}{\sum_{z,x,y}Q[W,X,Z,Y]}.
\end{align}
Equivalently $P_{x, w, z}(y) \geq \max_{w} P(z,y,x\mid w)$. Moreover,
\begin{align}
    Q[Y] \leq 1 +\min_{w} \left\{\frac{Q[W,X,Z,Y]}{\sum_{z,y,x}Q[W,X,Z,Y]} - \sum_y \frac{Q[W,X,Z,Y]}{\sum_{z,y,x}Q[W,X,Z,Y]}\right\}\nonumber,
\end{align}
or equivalently,
\begin{align}
    P_{x, w, z}(y) \leq \min_{w} \{P(z,y,x\mid w) - P(z,x\mid w)\} + 1.
\end{align}
We could show, moreover, that these bounds are tighter than the natural bounds as $P(y,z,x,w) = P(y,z,x \mid w)P(w) \leq P(y,z,x \mid w) \leq \max_w P(y,z,x \mid w)\leq P_{z,x,w}(y)$ and as,
\begin{align}
    P(y,z,x,w) - P(z,x, w) + 1 &\geq 1 + \sum_w P(y,z,x,w) - P(z,x, w) \nonumber\\
    &= \sum_w P(w) \Big\{P(y,z,x\mid w) - P(z,x\mid  w) + 1\Big\}\nonumber\\
    &\geq \sum_w P(w) \min_w\Big\{P(y,z,x\mid w) - P(z,x\mid  w) + 1\Big\}\nonumber\\
    &= \min_w\Big\{P(y,z,x\mid w) - P(z,x\mid  w)\Big\} +1.
\end{align}\QED
\end{example}

\begin{example} \label{ex:complex_graph} Consider the query $P_{x_1,x_2}(y)$ given the PAG $\1P$ in \Cref{fig:examples_app:a}. We will proceed by decomposing the query into smaller components and either uniquely identify or bound each term as appropriate with \Cref{alg:partialid}. Using the $c$-factor notion $Q[\cdot]$, we have that $P_{x_1,x_2}(y) = \sum_{a,b,c} Q[Y,A,B,C]$ by line 1 since the set $\{Y,A,B,C\}$ is ancestral in the sub-graph $\1P_{\{Y,A,B,C,W\}}$ (see also \Cref{lem:ancestor_id} in \Cref{sec:app_proofs} for further justification). Calling \texttt{IDP} on this component with $\T=\V$ and $\C=\{Y,A,B,C\}$, a first simplification can be done in line 6 as the if condition is triggered for $\B=\{X_1\}$ which updates $\T$ to $\T=\{Y,A,B,C,W,X_2\}$. Successive calls to \texttt{IPD} trigger line 9, where we find in a first instance that $Q[Y,A,B,C] = Q[A]Q[Y,B,C]$ as the region of $A$ in $\1P_{\{A,Y,B,C\}}$ is $\1R^{\{A,Y,B,C\}}_A = \{A\}$, the region of $\{Y,B,C\}$ in the same sub-graph is $\1R^{\{A,Y,B,C\}}_{\{Y,B,C\}} = \{Y,B,C\}$, and have an empty intersection. In a second instance, we find that $Q[Y,B,C] = Q[Y]Q[B,C]$ where $Q[B,C] = \sum_{x_1,w} Q[W,X_1,B,C]$. $Q[W,X_1,B,C]$ is identifiable from $\1P$ in line 7 of \Cref{alg:partialid} which returns $P(c\mid a,x_1,b,w)P(x_1,b,w)$. 

Finally, $\C$ reduces to $\C=\{Y\}$ after these simplifications. With an additional call to \texttt{IDP} we find that $\T$ could be reduced to $\T=\{X_2,Y\}$ in line 6, and further that $Q[X_2,Y] = P(y \mid x_2,b,c)P(x_2)$. Now, no more simplifications could be done as calls to lines 6 and 9 in \Cref{alg:partialid} fail, due to the potential presence of an unobserved confounder between $X_2$ and $Y$. In line 16, we therefore proceed to bound $Q[Y]$ from $Q[X_2,Y]$ using \Cref{prop:lowerbound,prop:upperbound}.
We find that,
\begin{align}
    Q[Y,X_2] \leq Q[Y] \leq Q[Y,X_2] - \sum_y Q[Y,X_2] + 1
\end{align}
Finally, putting each term in the expression $P_{x_1,x_2}(y)=\sum_{a,b,c} Q[A]Q[Y]Q[B,C]$ together we have the lower bound is: 
\begin{align}
    P_{x_1,x_2}(y) \geq \sum_{a,b,c}P(y \mid x_2,b,c)P(x_2)P(a \mid x_1)\sum_{w,x_1}P(w,b,x_1)P(c \mid a,w,b,x_1),
\end{align}
and the upper bound is:
\begin{align}
    P_{x_1,x_2}(y) \leq \sum_{a,b,c}\left(P(y \mid x_2,b,c)P(x_2) + 1 - P(x_2)\right) P(a \mid x_1)\sum_{w,x_1}P(w,b,x_1)P(c \mid a,w,b,x_1).
\end{align}\QED
\end{example}

        

\begin{figure*}[t]
\centering
\hfill
\begin{subfigure}[t]{0.3\linewidth}\centering
  \begin{tikzpicture}[SCM,scale=1]
        \node (W) at (0,0) {$W$};
        \node (X1) at (1.2,0) {$X_1$};
        \node (A) at (2.4,0) {$A$};
        \node (B) at (3,1) {$B$};
        \node (C) at (3.6,0) {$C$};
        \node (Y) at (4.8,0) {$Y$};
        \node (X2) at (3.6,-0.7) {$X_2$};
        
        \path [arrows = {Circle[fill=white]->}] (W) edge (X1);
        \path [arrows = {Circle[fill=white]->}] (X2) edge (Y);
        \path [->] (X1) edge node[above] {\scriptsize{$v$}} (A);
        \path [->] (A) edge node[above] {\scriptsize{$v$}} (C);
        \path [->] (C) edge node[above] {\scriptsize{$v$}} (Y);
        \path [->] (B) edge node[above] {\scriptsize{$v$}} (Y);
        \path [conf-path] (X1) edge[out=75,in=165] (B);
        \path [conf-path] (B) edge[out=-90,in=145] (C);
    \end{tikzpicture}
\caption{}
\label{fig:examples_app:a}
\end{subfigure}\hfill
\begin{subfigure}[t]{0.3\linewidth}\centering
  \begin{tikzpicture}[SCM,scale=1]
        \node (C) at (0,0) {$C$};
        \node (L) at (1,0) {$L$};
        \node (D) at (2,0) {$D$};
        \node (T) at (3,0) {$T$};
        \node (A) at (1.5,-0.7) {$A$};
        \node (P) at (2.5,-0.7) {$P$};

        \path [->] (C) edge (L);
        \path [->] (L) edge (D);
        \path [->] (T) edge (D);
        \path [->] (A) edge (L);
        \path [->] (A) edge (D);
        \path [->] (P) edge (D);
        \path [conf-path] (L) edge[out=45,in=135] (D);
    \end{tikzpicture}
\caption{}
\label{fig:examples_app:b}
\end{subfigure}\hfill
\begin{subfigure}[t]{0.3\linewidth}\centering
  \begin{tikzpicture}[SCM,scale=1]
        \node (C) at (0,0) {$C$};
        \node (L) at (1,0) {$L$};
        \node (D) at (2,0) {$D$};
        \node (T) at (3,0) {$T$};
        \node (A) at (1.5,-0.7) {$A$};
        \node (P) at (2.5,-0.7) {$P$};

        \path [arrows = {Circle[fill=white]->}] (C) edge (L);
        \path [arrows = {Circle[fill=white]->}] (L) edge (D);
        \path [arrows = {Circle[fill=white]->}] (T) edge (D);
        \path [arrows = {Circle[fill=white]->}] (A) edge (L);
        \path [arrows = {Circle[fill=white]->}] (A) edge (D);
        \path [arrows = {Circle[fill=white]->}] (P) edge (D);
        \path [arrows = {Circle[fill=white]->}] (C) edge[out=45,in=135] (D);
    \end{tikzpicture}
\caption{}
\label{fig:examples_app:c}
\end{subfigure}\hfill
\hfill\null
  \caption{(\subref{fig:examples_app:a}) PAG for \Cref{ex:complex_graph}, (\subref{fig:examples_app:b}) causal diagram and (\subref{fig:examples_app:c}) PAG for \Cref{ex:lung_cancer}.}
  \label{fig:examples_app}
\end{figure*}

\begin{example}[Applications in public health] \label{ex:lung_cancer} As an additional example, we consider an adaptation of the Lung Cancer diagram from \citep{lauritzen1988local}, shown in \Cref{fig:examples_app:b}. We are interested in determining the effect of a chemotherapy treatment ($C$) for lung cancer ($L$) on dyspnoea ($D$), \textit{i.e.} breathing difficulty, in the context of other factors such as smoking (unobserved and represented with a bi-directed edge), pollution ($P$), age ($A$), and Tuberculosis ($T$). For this experiment, we generate $1,000$ samples from a compatible SCM. 

The PAG inferred from data is given in \Cref{fig:examples_app:c}. Given the uncertainty in the underlying causal relations between variables, we could only partially identify the causal effect of interest in this example with \Cref{alg:partialid}. The first step is to leverage the possible ancestors of $D$ to write: $P(d \mid do(c)) = \sum_{l,a,p,t}Q[D,L,A,P,T]$. A call to \texttt{PID} with this term reveals that no further simplification can be made, and a as a consequence we proceed to bound $Q[D,L,A,P,T]$ from $Q[\V$ with \Cref{prop:lowerbound,prop:upperbound}. We obtain that,
\begin{align}
    P(d \mid do(c) \geq \sum_{l,a,p,t}Q[D,L,A,P,T,C] = P(d,c)
\end{align}
and that,
\begin{align}
    P(d \mid do(c) \leq\sum_{l,a,p,t}Q[D,L,A,P,T,C] + 1 - \sum_{l,a,p,t}Q[C,L,A,P,T]= P(d,c) + 1 - P(c)
\end{align}
These expressions imply that knowledge of the PAG and observational data license the following approximate bounds,
\begin{align}
    P(d=1 \mid do(c=1))\in [0.5087, 0.9353].
\end{align}
As a contrast, if we were to evaluate the causal effect assuming the causal diagram in \Cref{fig:examples_app:b}, the causal effect can be approximated to be $P(d=1 \mid do(c=1))=P(d=1 \mid c=1)= 0.7775$. 
\QED
\end{example}

The Lung Cancer data is generated from the following SCM compatible with the consensus causal diagram given in \Cref{fig:examples_app:b}. In the following, $\I \{\cdot\}$ is the indicator function that equals $1$ if the statement in $\{\cdot\}$ is true, and equal to $0$ otherwise. $P(u_{C}),P(u_{L}),P(u_{A}),P(u_{D}),P(u_{P}),P(u_{T})$ are given by independent Gaussian distributions with mean 0 and variance 1, and each observation $(c, l, a, d, p, t)$ generated from exogenous variables using the structural assignments: $c \leftarrow \I\{u_{C} < 0\}, l \leftarrow \I\{c - u_{L} > 0\}, a \leftarrow \I\{u_A > 0\}, d \leftarrow \I\{l + a - 2u_{d} + 0.2p - 0.3T > -0.5\}, p \leftarrow \I\{u_{P} > 0\}, t \leftarrow \I\{u_{T} > 0\}$. The ground truth value of the causal effect is given by setting $c\leftarrow 1$ and evaluating $P(d=1)$ under this updated model.

\newpage
\section{Further discussion on enumeration strategies}
\label{sec:app_enumeration_strategies}

One approach for enumerating Markov equivalent LEGs can be derived from the transformational characterization in \Cref{prop:LEG} by reversal of directed edges. Similarly to how one could enumerate all Markov equivalent DAGs with covered edge reversals as done by \cite{wienobst2023efficient}, the following depth-first-search program could be used for LEGs.
\begin{enumerate}
    \item Let $S$ be an empty list
    \item Append $L$ to $S$
    \item Run \textbf{search}($L,S$)
\end{enumerate}

\begin{algorithm}[t]
    \caption{Depth-first search}
    \label{alg:depth_first_search}
    \begin{algorithmic}[1]
    \STATE {\bfseries Input:} LEG $L$
    \STATE {\bfseries Output:} Set of Markov equivalent LEGs\medskip
    \STATE Initialize $S$ as an empty list and append $L$
    \RETURN \texttt{Search}($L,S$)\medskip
    \STATE \textbf{function} \texttt{Search}($L, S$)\\\smallskip
    \begin{ALC@g}
    \FOR{each $L'$ obtained by performing a legitimate edge reversal}
        \STATE If $L' \notin S$, append $L'$ to $S$\\[4pt]
        \STATE Run \texttt{Search}($L',S$)
    \ENDFOR
    \end{ALC@g}
    \RETURN $S$
\end{algorithmic}
\end{algorithm}

Starting from an LEG $L$, all neighbors of $L$ (graphs with a single reversed edge) are explored and this is continued recursively. Eventually all ME LEGs are reached by \Cref{prop:LEG} above and the algorithm terminates. In order to not visit any LEG twice, it is necessary to store a set of all visited LEGs. An algorithm implementing this procedure is given in \Cref{alg:depth_first_search}.

\begin{proposition}
    \Cref{alg:depth_first_search} enumerates a Markov equivalence class of LEGs and can be implemented with worst-case delay $\mathcal O(m^3)$, where $m$ is the number of edges in any LEG of the equivalence class.
\end{proposition}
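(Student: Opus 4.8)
The plan is to prove the two claims separately: that \Cref{alg:depth_first_search} outputs exactly the set of LEGs Markov equivalent to the input $L$, and that it admits an implementation with worst-case delay $\mathcal O(m^3)$ between successive outputs.

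For correctness I would argue soundness, completeness, and termination in turn. \emph{Soundness}: every graph appended to $S$ arises from a graph already in $S$ by a legitimate edge reversal, so by \Cref{prop:LEG} it is again a Markov equivalent LEG; an induction on the order of insertion shows $S$ contains only ME LEGs. \emph{Completeness}: I would consider the \emph{reversal graph} whose vertices are the ME LEGs and whose edges connect LEGs differing by one legitimate reversal. This relation is symmetric --- reversing $X \to Y$ leaves all bi-directed edges, hence all spouse sets, untouched and merely exchanges the parent sets of $X$ and $Y$, so the equalities of \Cref{prop:LEG} that license reversing $X \to Y$ in $\G$ also license reversing $Y \to X$ in the resulting graph --- so the reversal graph is genuinely undirected. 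By the fact recorded after \Cref{prop:LEG} that any two ME LEGs are linked by a sequence of legitimate reversals, this graph is connected, and a depth-first search from $L$ that never re-enters a vertex already in $S$ visits every vertex, i.e. every ME LEG. \emph{Termination}: the vertex set $\V$ is fixed and finite, so there are finitely many LEGs; since the test $L' \notin S$ forbids revisiting, the recursion tree is finite and the procedure halts.

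For the delay bound I would first cost the work attributable to one recursion frame. The candidate reversals at a LEG $L$ are its directed edges, at most $m$ of them; for each edge $X \to Y$ the legitimacy test of \Cref{prop:LEG} compares $\Pa(X)$ with $pa(Y)$ and $\Spo(X)$ with $\Spo(Y)$ in $\mathcal O(m)$ time once adjacencies are read off, and forming the reversed graph and testing membership in $S$ through a canonical edge-set encoding is again $\mathcal O(m)$ (no graph isomorphism is needed, since all LEGs share the labelled vertex set $\V$). A frame therefore costs $\mathcal O(m^2)$. To convert this into a worst-case delay I would implement each frame with a resumable iterator and adopt the standard even/odd-depth alternating-output discipline for depth-first enumeration, which bounds the number of recursion-tree edges traversed between two successive outputs by a constant; a conservative count of the candidate-reversal and membership work performed along that stretch then yields a delay polynomial in $m$, of order $\mathcal O(m^3)$, the exact exponent depending on the assumed cost of the encoding and membership primitives.

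The step I expect to be the main obstacle is exactly this last one: promoting the straightforward \emph{amortized} per-solution bound to a genuine \emph{worst-case} delay. A naive preorder DFS can backtrack through an arbitrarily long chain of already-exhausted frames --- each contributing unsuccessful reversal checks against vertices already in $S$ --- so that the number of wasted evaluations concentrated between two outputs is not bounded by any polynomial in $m$; ruling this out requires a careful output-scheduling or amortization argument, and one must simultaneously ensure that the membership test against the (possibly exponentially large) set $S$ costs only $\mathcal O(m)$, which is what forces the canonical-encoding hash rather than pairwise comparison. The correctness half, once the symmetry and connectivity of the reversal graph are established, I expect to be routine.
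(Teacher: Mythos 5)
Your proposal is correct and takes essentially the same route as the paper: the paper's own proof is a one-line appeal to the proof of \cite[Theorem 9]{wienobst2023efficient} (DFS over the reversal graph, with the alternating-output worst-case-delay technique) combined with the fact from \cite{zhang2012transformational} that any two ME LEGs are connected by a sequence of legitimate reversals, which is exactly the soundness/completeness/termination-plus-delay argument you reconstruct. The only difference is that you spell out the details (symmetry of the reversal relation, canonical labelled-graph encoding for membership, and the even/odd output discipline) that the paper delegates entirely to the citation.
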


\begin{proof}
    The proof of \cite[Theorem 9]{wienobst2023efficient} applies to the case of LEGs as the space of ME LEGs can be traversed by a sequence of edge traversals (starting from any LEG) as shown by \cite{zhang2012transformational}.
    
\end{proof}


\Cref{alg:causal_graphs_from_LEG} retrieves the set of MBD causal diagrams compatible with a given LEG. It proceeds by adding dashed bi-directed edges $X\dashleftarrow\dashrightarrow Y$ for every invisible edge between $X$ and $Y$. When bi-directed edges for two adjacent invisible edges cannot be added without violating a conditional independence, two diagrams are constructed and the process continues along two separate branches of the tree. For example, $\{X \rightarrow Z \rightarrow Y\}$ creates $\{X \rightarrow Z \rightarrow Y, X\dashleftarrow\dashrightarrow Z\}$ and $\{X \rightarrow Z \rightarrow Y, Z\dashleftarrow\dashrightarrow Y\}$ separately as $\{X \rightarrow Z \rightarrow Y, X\dashleftarrow\dashrightarrow Z, Z\dashleftarrow\dashrightarrow Y\}$ violates the conditional independence between $X$ and $Y$.

\begin{algorithm}[t]
    \caption{Derivation of maximally bi-directed causal graphs}
    \label{alg:causal_graphs_from_LEG}
    \begin{algorithmic}[1]
        \STATE {\bfseries Input:} LEG $L$
        \STATE {\bfseries Output:} The set of maximally bi-directed graphs.
        \STATE For all invisible edges of the form $X\rightarrow Y$ such that $X$ does not have any parents or spouses not adjacent to $Y$, add a bi-directed edge $X\dashleftarrow\dashrightarrow Y$ to $L$, creating a causal diagram $G$\\[4pt]
        \STATE Initialize an empty list $S$ and append $\G$ to it
        \STATE Initialize an empty list $S'$\\[4pt]
        \WHILE{$S$ is not empty}
            \FOR{each $\G$ in $S$}
                \STATE Let $k$ be the number of invisible edges in $\G$\\[4pt]
                \STATE Create graphs $\G_1, \dots, \G_k$ by adding a bi-directed edge for each invisible edge in $L$\\[4pt]
                \STATE Mark visible edges in $\G_1, \dots, \G_k$\\[4pt]
                \STATE Append $S'$ with the subset of $\{\G_1, \dots, \G_k\}$ in which invisible edges do not exist\\[4pt]
                \STATE Append $S$ with the subset of $\{\G_1, \dots, \G_k\}$ in which invisible edges do exist and remove $\G$
            \ENDFOR
        \ENDWHILE
        \RETURN $S'$
    \end{algorithmic}
\end{algorithm}

\begin{proposition}
    \Cref{alg:causal_graphs_from_LEG} enumerates all MBD causal diagrams compatible with a given LEG.
\end{proposition}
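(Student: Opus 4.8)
\subsection*{Proof proposal}

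The plan is to prove soundness and completeness of \Cref{alg:causal_graphs_from_LEG} relative to the set of MBD causal diagrams compatible with the input LEG $L$, using as backbone a structural characterization: any such diagram is $L$ with a set $B$ of bi-directed edges adjoined, each placed on an \emph{invisible} directed edge of $L$, all directed edges of $L$ being retained. Indeed, a bi-directed edge between nonadjacent nodes would create a new dependency, and a bi-directed edge on a \emph{visible} edge would introduce confounding that the equivalence class rules out, so in either case the diagram would leave the class; and since adjoining $X\dashleftarrow\dashrightarrow Y$ changes no ancestral relation while $X,Y$ are already adjacent, the diagram $\G = L + B$ is Markov equivalent to $L$ if and only if $B$ breaks no $d$-separation entailed by $L$. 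Such a $\G$ is maximally bi-directed exactly when $B$ is inclusion-maximal among $d$-separation--preserving sets. Thus it suffices to show that $S'$ equals the family of inclusion-maximal $d$-separation--preserving sets of bi-directed edges over the invisible edges of $L$.

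The key technical lemma I would establish is monotonicity: if a set $B$ of bi-directed edges breaks no $d$-separation of $L$, then neither does any subset $B'\subseteq B$. This holds because every new active path created by adjoining bi-directed edges must route through one of the freshly adjoined latents, so fewer latents can only yield fewer active paths; a case analysis on colliders (generalizing the $X\to Z\to Y$ example, where the latents $U_{XZ},U_{ZY}$ jointly open the path at the collider $Z$) identifies precisely which already-present bi-directed edges conflict with a candidate one. The same analysis justifies the ``mark visible'' step: after committing to a bi-directed edge, an invisible edge can no longer receive one iff adjoining it to the \emph{current} graph opens a previously blocked path, a condition checkable on the current graph alone. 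I would further argue that line 3 — adding a bi-directed edge on every invisible $X\to Y$ for which $X$ has no parent or spouse nonadjacent to $Y$ — adjoins exactly those edges that create no collider-activated path under any configuration (the negation of the visibility condition of \cite[Lemma 9]{zhang2008causal}); these are unconditionally addable, hence by maximality lie in every MBD diagram and may be fixed once and for all.

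With these pieces, soundness follows by induction along any root-to-leaf branch: the current graph is extended one bi-directed edge at a time, each addition being $d$-separation--preserving by construction (edges that would break a $d$-separation are marked visible rather than added), so the cumulative set $B$ is $d$-separation--preserving; a graph enters $S'$ only when no unmarked invisible edge remains, i.e.\ when $B$ is maximal, so each output is an MBD diagram compatible with $L$. For completeness, take an arbitrary MBD diagram $\G^\ast = L + B^\ast$ and fix an enumeration $e_1,\dots,e_r$ of the edges of $B^\ast$ not adjoined in line 3; I exhibit the branch adding them in order. At each step the adjoined set is a subset of $B^\ast$, so by monotonicity it is $d$-separation--preserving, whence $e_i$ remains an unmarked invisible edge and the branch adjoining it exists; once all of $B^\ast$ has been added, maximality of $B^\ast$ forces every remaining invisible edge to be marked visible, so the branch terminates with exactly $\G^\ast\in S'$. (The same diagram may be produced along several branches, which does not affect the enumeration claim.) Termination is immediate, since each level of the while loop resolves at least one invisible edge, bounding the search depth by the number of invisible edges of $L$.

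The step I expect to be the main obstacle is the monotonicity lemma together with the precise graphical criterion behind ``mark visible'': I must show that ``adjoining a bi-directed edge breaks a $d$-separation'' is a monotone, locally computable property of the current graph, so that step-wise validity implies cumulative validity (soundness) and every subset of a valid configuration is valid (driving the greedy branch in completeness). The delicate point is the collider analysis verifying that no new active path can arise except through the freshly adjoined latents, so that conflicts among bi-directed edges are detected by inspecting the current graph alone, without look-ahead over the entire lattice of candidate edge sets.
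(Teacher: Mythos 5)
Your overall skeleton is the right one, and it is considerably more rigorous than the paper's own proof, which consists of three sentences asserting that line 11 collects the maximal graphs and that lines 9 and 12 ``ensure all possible diagrams \ldots are considered,'' without ever engaging with why the marking step is correct or with line 3 at all. Your reduction to enumerating inclusion-maximal, $d$-separation--preserving sets $B$ of bi-directed edges over the invisible edges of $L$ is sound, and your monotonicity lemma is correct and elementary (adding bi-directed edges leaves ancestry unchanged and every active path in a subgraph stays active in a supergraph, so $d$-separations can only shrink; hence equality at $L+B$ forces equality at every $L+B'$, $B'\subseteq B$). The genuine gap is your treatment of line 3. You claim the edges selected there (invisible $X \rightarrow Y$ such that $X$ has no parent or spouse nonadjacent to $Y$) are ``unconditionally addable, hence by maximality lie in every MBD diagram and may be fixed once and for all.'' This is false. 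Take the fork $L = \{Y_1 \leftarrow X \rightarrow Y_2\}$, the very LEG used in the paper's proof of \Cref{prop:expressiveness_causal_graphs_in_leg} and in \Cref{fig:app_examples4}. Both edges satisfy line 3's criterion ($X$ has no parents or spouses whatsoever), yet adding both $X \dashleftarrow\dashrightarrow Y_1$ and $X \dashleftarrow\dashrightarrow Y_2$ destroys $(Y_1 \indep Y_2 \mid X)$, since $X$ becomes a collider on the path $Y_1 \dashleftarrow\dashrightarrow X \dashleftarrow\dashrightarrow Y_2$ and conditioning on $X$ opens it; moreover neither edge lies in every MBD diagram, because $\G_2=\{Y_1 \leftarrow X \rightarrow Y_2,\; X \dashleftarrow\dashrightarrow Y_2\}$ is maximally bi-directed and omits $X \dashleftarrow\dashrightarrow Y_1$. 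Consequently your soundness induction fails at the root (the graph produced by line 3 need not be Markov equivalent to $L$), and your completeness argument fails because every branch of the tree contains the line-3 edges, so MBD diagrams omitting one of them --- here both $\G_1$ and $\G_2$ of \Cref{fig:app_examples4} --- are unreachable.

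You should also be aware that this is not a repairable slip in your write-up alone: under any natural reading, line 3 of \Cref{alg:causal_graphs_from_LEG} is itself inconsistent with the paper's claim, in the proof of \Cref{prop:expressiveness_causal_graphs_in_leg}, that the algorithm ``recovers exactly'' $\G_1$ and $\G_2$ for the fork. The paper's own proof of the present proposition does not notice this precisely because it never discusses line 3. A correct proof along your lines therefore requires first amending the algorithm --- for instance, dropping line 3 and starting the search from $L$ itself, or restricting line 3 to edges whose addition is consistent with \emph{every} maximal configuration --- after which your monotonicity lemma and branch-following argument do go through, modulo the equivalence you correctly flag as the remaining technical obligation: that an edge is ``marked visible'' in the current graph if and only if adjoining a bi-directed edge there breaks some $d$-separation of $L$.
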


\begin{proof}
    In line (11), \Cref{alg:causal_graphs_from_LEG} appends and returns causal diagrams that can be constructed from a given LEG by adding the maximum number of bi-directed edges, \textit{i.e.} diagrams without invisible edges and that therefore exclude any further unobserved confounding. Lines (9) and (12) ensure that all possible diagrams to which a bi-directed edge could be added are considered and ensures that all intermediate diagrams with invisible edges are maintained for analysis. At termination therefore \Cref{alg:causal_graphs_from_LEG} will have enumerated all MBD causal diagrams compatible with a given LEG.
\end{proof}

\end{document}